\documentclass{article}
\IfFileExists{arxiv.sty}{\usepackage{arxiv}}{}
\usepackage[utf8]{inputenc}
\usepackage[T1]{fontenc}
\usepackage{amsmath,amssymb,amsthm,mathtools}
\usepackage{booktabs}
\usepackage{enumitem}
\usepackage{microtype}
\usepackage{xcolor}
\usepackage[colorlinks=true,linkcolor=blue!50!black,citecolor=blue!50!black,urlcolor=blue!50!black]{hyperref}
\usepackage[nameinlink,noabbrev]{cleveref}
\providecommand{\keywords}[1]{\par\medskip\noindent\textbf{Keywords: }\begingroup\def\and{; }#1\endgroup\par\medskip}

\newtheorem{theorem}{Theorem}[section]
\newtheorem{proposition}{Proposition}[section]
\newtheorem{lemma}{Lemma}[section]
\newtheorem{corollary}{Corollary}[section]
\newtheorem{assumption}{Assumption}[section]
\theoremstyle{definition}
\newtheorem{definition}{Definition}[section]
\theoremstyle{remark}
\newtheorem{remark}{Remark}[section]

\crefname{assumption}{Assumption}{Assumptions}
\Crefname{assumption}{Assumption}{Assumptions}
\crefname{theorem}{Theorem}{Theorems}
\Crefname{theorem}{Theorem}{Theorems}
\crefname{proposition}{Proposition}{Propositions}
\Crefname{proposition}{Proposition}{Propositions}
\crefname{lemma}{Lemma}{Lemmas}
\Crefname{lemma}{Lemma}{Lemmas}
\crefname{corollary}{Corollary}{Corollaries}
\Crefname{corollary}{Corollary}{Corollaries}
\crefname{definition}{Definition}{Definitions}
\Crefname{definition}{Definition}{Definitions}
\crefname{remark}{Remark}{Remarks}
\Crefname{remark}{Remark}{Remarks}

\newcommand{\R}{\mathbb R}
\newcommand{\E}{\mathbb E}

\newcommand{\F}{\mathcal F}

\newcommand{\Xset}{\mathcal X}
\newcommand{\Yset}{\mathcal Y}
\newcommand{\Pset}{\mathcal P}
\newcommand{\calN}{\mathcal N}
\newcommand{\eps}{\varepsilon}
\newcommand{\norm}[1]{\left\lVert #1\right\rVert}
\newcommand{\ip}[2]{\left\langle #1,#2\right\rangle}
\newcommand{\dist}{\operatorname{dist}}
\newcommand{\Fix}{\operatorname{Fix}}
\newcommand{\diam}{\operatorname{diam}}
\newcommand{\defeq}{\mathrel{:=}}

\title{Non-Expansive Two-Time-Scale Stochastic Approximation: A Fixed-Schedule One-Quarter Barrier and Bias-Corrected Acceleration}
\author{Dhruv Sarkar$^{1,2}$ \quad Vaneet Aggarwal$^3$\\
$^1$Indian Institute of Technology Kharagpur\\
$^2$Mohamed bin Zayed University of Artificial Intelligence\\
$^3$Purdue University\\
\texttt{dhruv.sarkar223@gmail.com}\quad \texttt{vaneet@purdue.edu}}
\date{}

\begin{document}
\maketitle

\begin{abstract}
Non-expansive two-time-scale stochastic approximation is governed by a slow stochastic Krasnoselskii--Mann fixed-point iteration rather than by contraction to a unique equilibrium. We study this regime under a contractive fast map and a non-expansive reduced slow map. We first prove a finite-horizon lower bound showing that, for any prescribed slow stepsize schedule $(\beta_k)$, the classical KM residual scale $(\sum_{i<N}\beta_i(1-\beta_i))^{-1}$ is worst-case sharp for the corresponding unregularized KM update. Combined with the raw fast-tracking leakage scale, this explains the previously observed $k^{-1/4+o(1)}$ last-iterate mean-square residual exponent.

We then introduce a residual-preconditioned slow oracle that cancels the first-order dependence on the fast tracking error. In a nested Tikhonov-KM algorithm, the uncorrected oracle yields total-sample rate $T^{-1/4+o(1)}$, while the corrected oracle yields $T^{-1/3+o(1)}$. This improvement comes from changing the slow-oracle bias from first order to second order in the fast error after all inner-loop samples are counted.

Finally, we show that the repeated inner-loop cost of the nested method can be avoided in a smooth derivative-oracle model. A single-loop algorithm that tracks both the fast equilibrium and the leakage preconditioner online achieves $T^{-1/2+o(1)}$ with $O(1)$ primitive samples per iteration. 
\end{abstract}

\keywords{stochastic approximation \and two-time-scale stochastic approximation \and non-expansive mappings \and Krasnoselskii--Mann iteration \and Tikhonov regularization \and bias correction}

\section{Introduction}

Two-time-scale stochastic approximation (TTSA) studies coupled recursions of the form
\begin{equation}\label{eq:raw-ttsa-intro}
\begin{aligned}
        X_{k+1} &= X_k+\alpha_k\bigl(f(X_k,Y_k)-X_k+W^x_{k+1}\bigr),\\
        Y_{k+1} &= Y_k+\beta_k\bigl(g(X_k,Y_k)-Y_k+W^y_{k+1}\bigr),
\end{aligned}
\end{equation}
where $\beta_k/\alpha_k\to0$. The fast variable is meant to track the unique fixed point $x^*(Y_k)$ of $x\mapsto f(x,Y_k)$, while the slow variable follows the reduced map
\[
        h(y)\defeq g(x^*(y),y).
\]
When $h$ is contractive, the slow equilibrium is unique and the natural error metric is mean-square distance to that equilibrium. When $h$ is only non-expansive, the fixed-point set can be non-singleton and the natural finite-time metric becomes the fixed-point residual
\[
        p(y)\defeq h(y)-y.
\]
This regime appears in minimax optimization, variational inequalities, projected fixed-point schemes, and constrained stochastic approximation. It is also technically distinct from contractive TTSA: the slow recursion is a stochastic inexact Krasnoselskii--Mann (KM) iteration.

Prior finite-time work in this setting proves a last-iterate mean-square residual rate $k^{-1/4+o(1)}$ under martingale-difference observations, a contractive fast map, and a non-expansive reduced slow map~\cite{chandak_2026_nonexpansive_ttsa}. The purpose of this paper is to explain why that exponent appears and to identify what must be changed algorithmically in order to improve it.

The first point is that the KM part of the analysis is already sharp for a fixed slow stepsize schedule in mean square. If $Y_{k+1}=(1-\beta_k)Y_k+\beta_k h(Y_k)$ with $h$ non-expansive, then the exact deterministic mean-square residual scale is
\begin{equation}\label{eq:KM-scale-intro}
        \left(\sum_{i<N}\beta_i(1-\beta_i)\right)^{-1}.
\end{equation}
The matching upper estimates are classical in the Baillon--Bruck and Cominetti--Soto--Vaisman theory of KM asymptotic regularity~\cite{cominetti_soto_vaisman_2014}. We prove the corresponding finite-horizon lower bound by a planar rotation example. This result should be read as a fixed-schedule statement: for the same unregularized KM update and the same slow schedule, a uniform worst-case residual guarantee cannot improve on \eqref{eq:KM-scale-intro}. Beating the resulting exponent therefore requires changing the algorithmic regime, the allowed slow schedule, or the oracle structure; it cannot come from sharpening the worst-case KM dependence for the same schedule.

The second point is that the raw TTSA recursion prevents such an aggressive slow schedule. Under uniform Lipschitz continuity of the slow map in the fast coordinate, evaluating the slow map at $X_k$ rather than at $x^*(Y_k)$ creates the first-order perturbation
\[
        g(X_k,Y_k)-h(Y_k)
        =O\bigl(\norm{X_k-x^*(Y_k)}\bigr).
\]
The raw fast variable tracks a moving target $x^*(Y_k)$. The target moves by $O(\beta_k)$ per slow update, whereas the fast recursion contracts tracking error only at rate $O(\alpha_k)$ per step. Consequently the fast mean-square tracking scale contains the two terms
\[
        \alpha_k + \left(\frac{\beta_k}{\alpha_k}\right)^2,
\]
where $\alpha_k$ is the usual fast stochastic approximation variance floor and $(\beta_k/\alpha_k)^2$ is the deterministic lag behind the moving target. The standard raw-TTSA separation condition
\[
        \left(\frac{\beta_k}{\alpha_k}\right)^2\lesssim \alpha_k,
        \qquad\text{equivalently}\qquad
        \frac{\beta_k^2}{\alpha_k^3}\lesssim1,
\]
is precisely the condition that keeps the moving-target lag no larger than the intrinsic fast statistical accuracy. This condition should not be read as a universal information-theoretic restriction on every possible algorithm. Rather, it is the balance that preserves the canonical raw fast-tracking scale.

Allowing the moving-target lag to dominate does not, by itself, solve the raw problem. If $\alpha_k\asymp (k+1)^{-a}$ and $\beta_k\asymp (k+1)^{-b}$, the lag-dominated fast root-mean-square error has order $\beta_k/\alpha_k\asymp k^{-(b-a)}$. Because the raw slow oracle is only Lipschitz in the fast coordinate, this appears as a first-order inexact-KM perturbation. The weighted perturbation scale is then
\[
        \frac{\sum_{k<N}\beta_k(\beta_k/\alpha_k)}{\sum_{k<N}\beta_k(1-\beta_k)}
        \asymp N^{-\min\{1-b,\,b-a\}},
\]
up to logarithmic factors in the boundary case. Optimizing $\min\{1-b,b-a\}$ over $a>1/2$ and $a<b<1$ gives the supremum $1/4$, approached as $a\downarrow1/2$ and $b=(1+a)/2$. Thus the true raw obstruction is first-order fast-manifold leakage. The inequality $\beta_k^2/\alpha_k^3\lesssim1$ is one common way this obstruction appears, but violating it merely makes the moving-target lag itself the bottleneck. \Cref{subsec:raw-separation-origin} gives the complete derivation and sharpness examples.

We first analyze this correction in a nested Tikhonov-KM algorithm. At outer iteration $m$, the method computes an approximate fast equilibrium for the current slow point $Y_m$ and then performs one Tikhonov-regularized slow step. This model separates two effects that are coupled in the raw recursion: the outer non-expansive KM dynamics and the bias caused by imperfect fast equilibration. It also allows all inner-loop work to be counted explicitly in the total primitive-oracle complexity.

The key correction is local but applies directly in the nonlinear coordinates. Let
\[
        A(y)\defeq I-\nabla_x f(x^*(y),y),
        \qquad
        C(y)\defeq \nabla_x g(x^*(y),y),
        \qquad
        P_*(y)\defeq C(y)A(y)^{-1}.
\]
If $x=x^*(y)+e$, Taylor expansion gives
\[
        f(x,y)-x=-A(y)e+O(\norm e^2),
        \qquad
        g(x,y)=h(y)+C(y)e+O(\norm e^2).
\]
Therefore
\begin{equation}\label{eq:cancellation-intro}
        g(x,y)+P_*(y)\bigl(f(x,y)-x\bigr)-h(y)=O(\norm e^2).
\end{equation}
The preconditioned residual cancels the first-order leakage. Without \eqref{eq:cancellation-intro}, an inner mean-square fast error $\delta$ creates a slow bias of order $\delta^{1/2}$. With \eqref{eq:cancellation-intro}, it creates a slow bias of order $\delta$.

The same cancellation can be viewed as a first-order implementation of an implicit Newton point. For fixed $y$, the fast equilibrium is the zero of $r(x,y)=f(x,y)-x$, and one Newton step from $x$ is
\[
        \calN(x,y)=x+\bigl(I-\nabla_x f(x,y)\bigr)^{-1}\bigl(f(x,y)-x\bigr).
\]
Under a Lipschitz-Jacobian condition, $\calN(x,y)$ is second-order close to $x^*(y)$. Thus querying $g$ at $\calN(x,y)$ would make the slow error quadratic in the fast tracking error. The analyzed nested correction uses the residual-preconditioned form \eqref{eq:cancellation-intro}, which produces the same first-order cancellation without requiring that the algorithm literally form $\calN(x,y)$.

The outer method we use for the main sample-complexity theorem is a one-sample Tikhonov-regularized KM recursion. This is an anchored stochastic approximation update,
\[
        Y_{m+1}=\Pi_{\Yset}\left[Y_m+\beta\bigl(\widehat H_m-Y_m+\lambda(u-Y_m)\bigr)\right],
\]
where $\widehat H_m$ estimates either the uncorrected or the corrected reduced map. The Tikhonov term makes the effective outer map contractive with gap $\Theta(\lambda)$, while the stochastic noise remains multiplied by the small stepsize $\beta$. Thus the method remains compatible with constant-variance one-sample stochastic oracles.

\paragraph{Contributions.}
Our main contributions and total-sample conclusions are the following.
\begin{enumerate}[leftmargin=*]
\item \textbf{A sharp fixed-schedule KM barrier and diagnosis of the one-quarter exponent.}
We prove a finite-horizon lower bound showing that the classical non-expansive residual scale cannot be improved for the same unregularized slow KM update and the same slow stepsize schedule. Together with the raw fast-tracking scale $\alpha_k+(\beta_k/\alpha_k)^2$ and the first-order leakage calculation in \Cref{subsec:raw-separation-origin}, this explains the one-quarter exponent. The usual separation condition $\beta_k^2/\alpha_k^3\lesssim1$ is the subcase that keeps the moving-target lag at the fast statistical floor; if it is violated, the lag itself enters as a first-order inexact-KM perturbation and remains rate-limiting. This part is developed in \Cref{sec:km-lower}.
\item \textbf{A nested Tikhonov-KM bias correction.}
We introduce a residual-preconditioned slow oracle that cancels first-order fast-manifold leakage. In the nested total-sample model, the uncorrected Tikhonov-KM oracle yields
\[
        \E\norm{h(Y)-Y}^2\le T^{-1/4+o(1)},
\]
whereas the corrected oracle yields
\[
        \E\norm{h(Y)-Y}^2\le T^{-1/3+o(1)}.
\]
The improvement is a total-oracle-complexity certificate for the correction mechanism: the preconditioned residual changes the fast-manifold error seen by the slow recursion from first order to second order. This is a structured-oracle result: we assume differentiability in the fast variable, compact stabilization, moment bounds, and access to $P_*(y)$ or to an estimator satisfying the product-accuracy condition in \Cref{ass:precond}. It is not a claim that the raw recursion \eqref{eq:raw-ttsa-intro} is globally suboptimal under the original black-box assumptions. The correction and nested rates are proved in \Cref{sec:correction,sec:nested,sec:main-rates}.
\item \textbf{A single-loop learned-preconditioner route to the one-half exponent.}
The nested $T^{-1/3+o(1)}$ result leaves one algorithmic cost unresolved: a fresh inner fast solve is performed at each outer point. In a richer smooth derivative-oracle model, we avoid this repeated inner-loop cost by maintaining one running fast iterate $X_k$, one slow iterate $Y_k$, and a projected matrix iterate $P_k$ updated from stochastic samples of $I-\nabla_x f$ and $\nabla_x g$. Reusing these online trackers yields
\[
        \E\norm{h(Y)-Y}^2\le T^{-1/2+o(1)}
\]
with $O(1)$ primitive samples per iteration. This result is given in \Cref{sec:single-loop-half}.
\end{enumerate}

Throughout the paper, a statement of the form $T^{-c+o(1)}$ means that for every fixed $\eta>0$ the bound $C_\eta T^{-c+\eta}$ holds for all sufficiently large horizons, with constants depending on the fixed problem parameters and on $\eta$ but not on the horizon. We use the same convention for $N^{-c+o(1)}$.

\section{Related work and positioning}
\label{sec:related-work}

\paragraph{Stochastic approximation and two time scales.}
Stochastic approximation originates with the Robbins--Monro recursion
\cite{robbins_monro_1951} and has developed into a general framework for
noisy fixed-point, root-finding, control, and learning problems; see, for
example, the monographs \cite{benveniste_metivier_priouret_1990,
kushner_yin_2003,borkar_2008}.  Two-time-scale stochastic approximation
(TTSA) was introduced to analyze coupled recursions in which one component
tracks a fast equilibrium while the other evolves on a slower scale
\cite{borkar_1997}.  This mechanism underlies actor--critic reinforcement
learning \cite{konda_tsitsiklis_2004,ganesh2025sharper,ganesh2025order}, stochastic bilevel optimization
\cite{hong_wai_wang_yang_2023,kwon2023fully,gaur2026sample,wu2026convergence}, and a variety of stochastic control and
optimization algorithms.  Finite-time analyses of TTSA have largely focused
on settings in which the limiting fast and slow dynamics are contractive,
strongly monotone, or otherwise stable around a unique equilibrium; examples
include finite-sample bounds for linear TTSA \cite{dalal_2018,kaledin_2020}
and nonlinear TTSA \cite{doan_2021_nonlinear_ttsa}.  Recent asymptotic work
also studies more general two-time-scale behavior when one or both iterates
need not converge to a single point \cite{borkar_2024_general}.  The present
paper differs from the contractive finite-time literature by considering a
contractive fast map but only a non-expansive reduced slow map.  In this
regime the slow fixed-point set may be non-singleton, and the natural
finite-time quantity is the fixed-point residual rather than distance to a
distinguished equilibrium.

\paragraph{Non-expansive TTSA and the one-quarter residual rate.}
The closest predecessor is the finite-time analysis of non-expansive TTSA by
Chandak \cite{chandak_2026_nonexpansive_ttsa}.  That work observes that,
after eliminating the fast variable, the slow recursion can be viewed as a
stochastic inexact Krasnoselskii--Mann iteration, and proves a last-iterate
mean-square residual rate \(O(k^{-1/4+\epsilon})\) for arbitrarily small
\(\epsilon>0\), together with almost-sure convergence to the fixed-point set.
Our work takes this rate as a starting point rather than merely reproducing
it.  We decompose the \(1/4\) exponent into two separate effects: the sharp
fixed-schedule residual scale of the underlying non-expansive
Krasnoselskii--Mann dynamics, and the first-order leakage of fast tracking
error into the raw slow oracle.  This separation is important because it
shows that the bottleneck is not only the usual two-time-scale separation
condition; in lag-dominated schedules, the moving-target error itself enters
the inexact KM recursion at first order.  The proposed residual
preconditioning changes this first-order slow-oracle bias into a second-order
bias, which is what allows the nested corrected method to improve the total
primitive-sample rate from \(T^{-1/4+o(1)}\) to \(T^{-1/3+o(1)}\).

\paragraph{Krasnoselskii--Mann iteration and asymptotic regularity.}
The deterministic slow dynamics in our setting are Krasnoselskii--Mann (KM)
iterations, introduced by Mann \cite{mann_1953} and Krasnoselskii
\cite{krasnoselskii_1955} for fixed points of non-expansive maps.  A central
quantitative question for KM iteration is asymptotic regularity, namely the
decay of the residual \(\|T x_k-x_k\|\).  Baillon and Bruck
\cite{baillon_bruck_1996} identified the classical \(O(1/\sqrt{k})\)-type
residual behavior, and Cominetti, Soto, and Vaisman
\cite{cominetti_soto_vaisman_2014} established sharp variable-stepsize
estimates in terms of the Bernoulli-sum quantity
\(\sum_{i<k}\beta_i(1-\beta_i)\).  Our finite-horizon rotation construction
is a complementary lower-bound statement: for any fixed slow schedule, the
mean-square residual scale
\[
    \left(\sum_{i<N}\beta_i(1-\beta_i)\right)^{-1}
\]
cannot be uniformly improved for the same unregularized KM update.  This
result should be interpreted as sharpness of the fixed-schedule KM scale, not
as an impossibility theorem for all anchored, regularized, variance-reduced,
or otherwise modified fixed-point schemes.

\paragraph{Stochastic non-expansive fixed-point methods.}
There is also a growing literature on stochastic fixed-point algorithms for
non-expansive operators.  Bravo and Cominetti
\cite{bravo_cominetti_2024} study stochastic KM iterations for
finite-dimensional non-expansive maps and derive almost-sure convergence and
nonasymptotic residual bounds.  Bravo and Contreras
\cite{bravo_contreras_2026} analyze stochastic Halpern iteration with
minibatching and variance reduction, obtaining improved oracle-complexity
bounds for stochastic non-expansive fixed-point problems.  Cortild and Cartis
\cite{cortild_cartis_2026} recently weakened the variance assumptions for
stochastic KM iteration in Hilbert spaces.  These works are single-level
stochastic fixed-point methods: the noisy oracle is a direct stochastic
approximation of the non-expansive map.  In contrast, our slow oracle is
endogenously biased by a coupled fast recursion.  The main obstruction is
therefore not just martingale noise in the non-expansive map, but the
systematic displacement \(X_k-x^*(Y_k)\) of the fast iterate from the moving
fast equilibrium.  The residual-preconditioned oracle introduced here is
designed specifically to cancel this fast-manifold leakage.

\paragraph{Anchoring, Halpern iteration, and Tikhonov regularization.}
Anchoring and Tikhonov regularization are classical tools for recovering
strong convergence or better stability in non-expansive fixed-point problems.
Browder's approximants \cite{browder_1967}, Halpern iteration
\cite{halpern_1967}, and Wittmann's convergence theorem
\cite{wittmann_1992} are foundational examples.  Our outer recursion uses a
Tikhonov term in a related but finite-time stochastic role: the regularized
map has an effective contraction gap of order \(\lambda\), while its fixed
point remains an \(O(\lambda)\)-residual point for the original
non-expansive map.  This produces the finite-time decomposition
\[
    \lambda^2
    + \exp(-cN\beta\lambda)
    + \frac{\beta}{\lambda}
    + \frac{\epsilon_H^2}{\lambda^2},
\]
separating Tikhonov residual, finite-time contraction, stochastic variance,
and slow-oracle bias.  The improvement in the nested corrected algorithm is
therefore not due to anchoring alone: the uncorrected nested Tikhonov method
still gives the one-quarter total-sample exponent.  The improvement comes
from reducing the bias term \(\epsilon_H^2\) by canceling the first-order
fast error.

\paragraph{Connections to monotone operators, variational inequalities, and
minimax problems.}
Non-expansive fixed-point formulations are ubiquitous in monotone operator
theory, splitting methods, variational inequalities, and saddle-point
optimization; see \cite{bauschke_combettes_2017,facchinei_pang_2003} for
background.  Classical extragradient and mirror-prox methods for monotone
variational inequalities \cite{korpelevich_1976,nemirovski_2004} can also be
viewed through fixed-point or residual mappings.  These connections motivate
the use of a fixed-point residual as the performance criterion when the
solution set is non-singleton.  The present work is not an extragradient or
VI gap analysis.  Instead, it studies a two-time-scale stochastic
approximation whose reduced slow dynamics are abstractly non-expansive.  The
results therefore apply at the level of the reduced fixed-point map and
isolate a phenomenon---fast-manifold leakage into a non-expansive KM
recursion---that is hidden in single-level operator-splitting analyses.

\paragraph{Implicit sensitivity, bilevel optimization, and learned
preconditioners.}
The correction
\[
    P^*(y)
    =
    \nabla_x g(x^*(y),y)
    \bigl(I-\nabla_x f(x^*(y),y)\bigr)^{-1}
\]
is closely related in spirit to implicit sensitivity calculations in bilevel optimization, but its role is different. Classical implicit-differentiation and hypergradient methods estimate or approximate inverse-Jacobian and inverse-Hessian effects in order to account for the dependence of an inner solution on an outer variable
\cite{ji_yang_liang_2021,arbel_mairal_2022,hong_wai_wang_yang_2023}. A complementary first-order line avoids explicit hypergradient or second-order computations; examples include the fully first-order stochastic bilevel method of Kwon et al.~\cite{kwon2023fully} and the bilevel reinforcement-learning sample-complexity setting of Gaur et al.~\cite{gaur2026sample}. The goal here is different from both lines: we do not compute a gradient of an outer objective. Instead, the inverse fast residual Jacobian is used to cancel the leading dependence of the slow fixed-point oracle on the fast tracking error. The single-loop learned-preconditioner algorithm is also analogous to amortized implicit-sensitivity methods in that it maintains an online estimate of a sensitivity object rather than restarting an inner solve at every outer point. In our setting, this online tracking removes the repeated inner-loop cost of the nested corrected scheme and yields the structured-oracle \(T^{-1/2+o(1)}\) residual rate.

\paragraph{Summary of positioning.}
In short, the paper sits at the intersection of finite-time TTSA,
non-expansive KM fixed-point theory, and implicit-sensitivity correction.
Relative to non-expansive TTSA \cite{chandak_2026_nonexpansive_ttsa}, it
explains why the raw one-quarter exponent appears and identifies the
first-order fast-manifold leakage as the algorithmic obstruction.  Relative
to deterministic and stochastic KM theory, it gives a fixed-schedule
finite-horizon sharpness construction and then changes the oracle structure
rather than trying to sharpen the same KM estimate.  Relative to bilevel and
implicit-differentiation methods, it uses an inverse fast-residual Jacobian
not to form a hypergradient, but to construct a slow fixed-point oracle whose
bias is quadratic in the fast tracking error.  This is the mechanism behind
the nested \(T^{-1/3+o(1)}\) and single-loop \(T^{-1/2+o(1)}\) improvements.

\section{Setting and assumptions}\label{sec:setup}

We work in Euclidean spaces. The fast variable lies in $\R^{d_x}$ and the slow variable lies in $\R^{d_y}$. Throughout, $\norm{\cdot}$ denotes the Euclidean norm.

\subsection{Mean fields and reduced map}

\begin{assumption}[Contractive fast map]\label{ass:fast-contract}
For every $y$, the map $x\mapsto f(x,y)$ is a contraction with constant $\mu\in[0,1)$:
\[
        \norm{f(x_1,y)-f(x_2,y)}\le \mu\norm{x_1-x_2}.
\]
Consequently, for every $y$ there is a unique fixed point $x^*(y)$ satisfying $f(x^*(y),y)=x^*(y)$.
\end{assumption}

\begin{assumption}[Non-expansive reduced slow map]\label{ass:slow-nonexp}
The reduced map
\[
        h(y)=g(x^*(y),y)
\]
is non-expansive on a closed convex set $\Yset\subset\R^{d_y}$:
\[
        \norm{h(y_1)-h(y_2)}\le \norm{y_1-y_2},
        \qquad y_1,y_2\in\Yset.
\]
The fixed-point set $Y^*\defeq\Fix(h)=\{y\in\Yset:h(y)=y\}$ is nonempty.
\end{assumption}

\begin{assumption}[Compact stabilization]\label{ass:compact}
There are compact convex sets $\Xset\subset\R^{d_x}$ and $\Yset\subset\R^{d_y}$ such that $x^*(y)\in\Xset$ for all $y\in\Yset$, $h(\Yset)\subseteq\Yset$, and all algorithmic queries are projected onto $\Xset$ and $\Yset$ when needed. Let
\[
        D_y\defeq \diam(\Yset),\qquad D_x\defeq \diam(\Xset).
\]
The anchor $u$ used in the Tikhonov schemes belongs to $\Yset$.
\end{assumption}

The assumptions in this subsection are standard structural assumptions for separating fast stability from slow fixed-point geometry in finite-time TTSA and stochastic KM analyses. Contractive fast mean fields are the usual stability condition in finite-time TTSA, while non-expansive reduced slow maps are the standard setting for KM residual analysis~\cite{borkar_1997,dalal_2018,doan_2021_nonlinear_ttsa,chandak_2026_nonexpansive_ttsa,bravo_cominetti_2024}. Compact stabilization is also standard when one wants uniform constants; it can be implemented by explicit projection or by a separate stability/Lyapunov argument. In this paper we use projection in order to keep the finite-time analysis self-contained.

\subsection{Stochastic oracles}

The primitive fast and slow observations are denoted
\[
        \mathsf F(x,y;\xi),\qquad \mathsf G(x,y;\zeta).
\]
They satisfy
\[
        \E[\mathsf F(x,y;\xi)]=f(x,y),
        \qquad
        \E[\mathsf G(x,y;\zeta)]=g(x,y).
\]
All random variables are adapted to the relevant algorithmic filtration, and independent fresh samples are used when conditional unbiasedness is invoked.

\begin{assumption}[Conditional bounded second and fourth moments]\label{ass:moments}
There is a constant $\sigma<\infty$ such that the primitive oracles are conditionally unbiased and have conditional moments up to order four uniformly over adapted queries. More precisely, for every sub-$\sigma$-field $\mathcal H$ representing the algorithmic past and every $\mathcal H$-measurable query $(X,Y)\in\Xset\times\Yset$, fresh samples satisfy almost surely
\[
        \E[\mathsf F(X,Y;\xi)\mid\mathcal H]=f(X,Y),
        \qquad
        \E[\mathsf G(X,Y;\zeta)\mid\mathcal H]=g(X,Y),
\]
and, for $q=2,4$,
\[
\begin{aligned}
        \E\!
        \left[
        \norm{\mathsf F(X,Y;\xi)-f(X,Y)}^q\mid\mathcal H
        \right]
        &\le \sigma^q,\\
        \E\!
        \left[
        \norm{\mathsf G(X,Y;\zeta)-g(X,Y)}^q\mid\mathcal H
        \right]
        &\le \sigma^q.
\end{aligned}
\]
All conditional expectations in the paper are with respect to fresh samples independent of the past, conditional on the current adapted query.
\end{assumption}

\begin{assumption}[Fast-variable Lipschitz continuity of the slow mean]\label{ass:g-lip}
There is a constant $L_g<\infty$ such that
\[
        \norm{g(x,y)-g(x',y)}\le L_g\norm{x-x'}
        \qquad
        \text{for all }x,x'\in\Xset,\ y\in\Yset .
\]
This assumption is needed only for guarantees involving the uncorrected nested oracle. Without it, closeness of the inner fast iterate to $x^*(y)$ need not imply closeness of $g(x,y)$ to $h(y)$.
\end{assumption}

The conditional unbiasedness, bounded-moment, and fast-coordinate Lipschitz conditions in \Cref{ass:moments,ass:g-lip} are standard finite-time stochastic-approximation oracle hypotheses. The moment assumptions give uniform martingale-noise control for the finite-horizon recursions, and the Lipschitz condition in \Cref{ass:g-lip} is the minimal regularity needed to convert inner fast error into slow-oracle bias for the uncorrected nested method. Similar unbiasedness, moment, projection, and Lipschitz hypotheses appear throughout finite-time TTSA and stochastic non-expansive fixed-point analyses~\cite{dalal_2018,kaledin_2020,doan_2021_nonlinear_ttsa,chandak_2026_nonexpansive_ttsa,bravo_cominetti_2024,bravo_contreras_2026}.

\subsection{Smoothness for bias correction}

The bias-corrected algorithm requires more structure than the raw TTSA recursion.

\begin{assumption}[Fast-variable differentiability]\label{ass:smooth}
There is an open convex neighborhood $\mathcal U$ of $\Xset\times\Yset$ such that $f$ and $g$ are twice continuously differentiable in the fast variable $x$ on $\mathcal U$. The derivatives $\nabla_x f$ and $\nabla_x g$ are jointly continuous in $(x,y)$ on $\mathcal U$, and there are constants $L_1,L_2,L_f,L_g^{\nabla}<\infty$ such that, for all relevant $(x,y)$,
\[
        \norm{\nabla_x g(x,y)}\le L_1,
        \qquad
        \norm{\nabla_x f(x,y)}\le \mu,
\]
and, uniformly for all $x,x'$ with $(x,y),(x',y)\in\mathcal U$,
\[
        \norm{\nabla_x f(x,y)-\nabla_x f(x',y)}
        \le L_f\norm{x-x'},
        \qquad
        \norm{\nabla_x g(x,y)-\nabla_x g(x',y)}
        \le L_g^{\nabla}\norm{x-x'}.
\]
In particular, after increasing $L_2$ if necessary,
\[
        \norm{\nabla_x f(x,y)-\nabla_x f(x^*(y),y)}
        +
        \norm{\nabla_x g(x,y)-\nabla_x g(x^*(y),y)}
        \le L_2\norm{x-x^*(y)}.
\]
The bound $\norm{\nabla_x f}\le\mu<1$ is consistent with the contraction assumption and ensures that $I-\nabla_x f$ is uniformly invertible on the stabilized query region. The Lipschitz bound on $\nabla_x f$ is used in \Cref{lem:newton-point}, where the Taylor expansion is centered at the current query point rather than at $x^*(y)$. For the Newton-point consequence involving $g(\calN(x,y),y)$, we also assume that $\calN(x,y)$ belongs to $\mathcal U$ for every relevant stabilized query $(x,y)$ and that $g$ is Lipschitz in its fast coordinate on the set of such Newton images. This is a local-query assumption; alternatively one may project $\calN(x,y)$ onto a compact neighborhood on which the same Lipschitz bound holds.
\end{assumption}

Define
\begin{equation}\label{eq:A-C-P}
        A(y)\defeq I-\nabla_x f(x^*(y),y),
        \qquad
        C(y)\defeq \nabla_x g(x^*(y),y),
        \qquad
        P_*(y)\defeq C(y)A(y)^{-1}.
\end{equation}
Since $\norm{\nabla_x f}\le\mu<1$, $A(y)$ is invertible and $\norm{A(y)^{-1}}\le(1-\mu)^{-1}$.

\begin{assumption}[Preconditioner access]\label{ass:precond}
The bias-corrected nested algorithm has access to $P_*(y)$ for every queried $y\in\Yset$. The main nested theorem treats this exact preconditioner call as a separate structured oracle whose cost is not included in the primitive $\mathsf F,\mathsf G$ sample count; if desired, one may add one unit of structured-oracle cost per outer step without changing the exponent. The main theorem is stated for this exact preconditioner. If $P_*(Y_m)$ is replaced by an adapted estimator $\widehat P_m$, the same proof remains valid only under direct product and moment conditions. Namely, for the inner fast mean-square error scale $\delta_m$, one needs
\[
        \E\!
        \left[
        \norm{\bigl(\widehat P_m-P_*(Y_m)\bigr)\bigl(f(\bar X_m,Y_m)-\bar X_m\bigr)}^2
        \right]
        \le C_P\delta_m^2,
\]
for the mean residual product, and also
\[
        \E\!
        \left[
        \norm{\bigl(\widehat P_m-P_*(Y_m)\bigr)\bigl(\mathsf F(\bar X_m,Y_m;\zeta)-f(\bar X_m,Y_m)\bigr)}^2
        \right]
        \le C_P
\]
so that the stochastic part of the corrected oracle keeps a uniformly bounded conditional second moment. A bound only on $\E\norm{\widehat P_m-P_*(Y_m)}^2$ is not sufficient in general, because the products with the fast residual and the fast observation noise can be correlated with the preconditioner error.
\end{assumption}

\begin{assumption}[Online derivative and preconditioner oracle]
\label{ass:online-precond}
For the online one-half theorem, define the fast-coordinate derivative fields
\[
        A_x(x,y)\defeq I-\nabla_x f(x,y),
        \qquad
        C_x(x,y)\defeq \nabla_x g(x,y).
\]
The algorithm has access to stochastic derivative observations
\[
        \mathsf A(x,y;\omega^A),\qquad \mathsf C(x,y;\omega^C)
\]
which are conditionally unbiased and have conditional moments up to order four uniformly over adapted queries. There is a constant $\sigma_A<\infty$ such that the following bounds hold. That is, for every sub-$\sigma$-field $\mathcal H$ and every $\mathcal H$-measurable query $(X,Y)\in\Xset\times\Yset$, fresh derivative samples satisfy almost surely
\[
        \E[\mathsf A(X,Y;\omega^A)\mid\mathcal H]=A_x(X,Y),
        \qquad
        \E[\mathsf C(X,Y;\omega^C)\mid\mathcal H]=C_x(X,Y),
\]
and, for $q=2,4$,
\[
        \E\!\left[\norm{\mathsf A(X,Y;\omega^A)-A_x(X,Y)}_F^q\mid\mathcal H\right]
        +
        \E\!\left[\norm{\mathsf C(X,Y;\omega^C)-C_x(X,Y)}_F^q\mid\mathcal H\right]
        \le \sigma_A^q.
\]
There is a compact convex matrix set $\Pset$ such that $P_*(y)\in\Pset$ for all $y\in\Yset$ and
\[
        \sup_{P\in\Pset}\norm P \le P_{\max}<\infty.
\]
The map $y\mapsto P_*(y)$ is Lipschitz on $\Yset$:
\[
        \norm{P_*(y)-P_*(y')}\le L_P\norm{y-y'}.
\]
The projected preconditioner recursion always projects onto $\Pset$.
\end{assumption}

\begin{assumption}[Moving-target regularity for the online Tikhonov theorem]\label{ass:moving-target}
The fast equilibrium map is Lipschitz on the stabilized slow set:
\[
        \norm{x^*(y)-x^*(y')}\le L_*\norm{y-y'}
        \qquad y,y'\in\Yset .
\]
For the horizon-tuned online theorem in \Cref{sec:single-loop-half}, the primitive fast update and corrected slow update are bounded uniformly over the projected preconditioner set: there is $B<\infty$ such that, almost surely for all queried points,
\[
        \norm{\mathsf F(x,y;\xi)-x}\le B,
\]
and, for every $P\in\Pset$ and $0<\lambda\le1$,
\[
        \norm{\mathsf G(x,y;\zeta^g)+P(\mathsf F(x,y;\zeta^f)-x)-y+\lambda(u-y)}\le B.
\]
The same theorem remains valid if these almost-sure bounds are replaced by conditional fourth-moment bounds strong enough to imply the fourth-moment recursions in \Cref{lem:single-fast,lem:precond-track}.
\end{assumption}

\begin{remark}[Strength and scope of the oracle model]
\Cref{ass:smooth,ass:precond,ass:online-precond,ass:moving-target} are structured-oracle assumptions beyond the raw non-expansive TTSA recursion. \Cref{ass:smooth} assumes fast-coordinate Jacobians and local Lipschitz regularity so that the fast residual Jacobian \(A(y)=I-\nabla_x f(x^*(y),y)\) is uniformly invertible and the cancellation in \Cref{prop:bias-order} is meaningful. Assumptions of this type are standard in implicit-differentiation and stochastic bilevel analyses, where inverse-Jacobian or inverse-Hessian effects are computed, approximated, or amortized~\cite{ji_yang_liang_2021,hong_wai_wang_yang_2023,arbel_mairal_2022}. \Cref{ass:precond} is stronger still: it assumes direct access to \(P_*(y)=C(y)A(y)^{-1}\), or to an estimator satisfying the product condition stated there. This preconditioner should therefore be regarded as a separate sensitivity oracle, not as an ordinary primitive \(\mathsf F,\mathsf G\) sample.

The online theorem replaces exact access to \(P_*(y)\) by stochastic observations of \(A_x(x,y)=I-\nabla_x f(x,y)\) and \(C_x(x,y)=\nabla_x g(x,y)\), together with a projected stochastic-approximation recursion for \(P_k\). Such derivative observations are natural when the primitive maps are generated by differentiable computational graphs, smooth simulators, or model-based stochastic systems. In those settings, Jacobian or Jacobian-vector information may be obtained through automatic differentiation~\cite{baydin_pearlmutter_radul_siskind_2018}, pathwise or reparameterized stochastic derivatives~\cite{kingma_welling_2014}, or simulation-sensitivity estimators such as infinitesimal-perturbation, likelihood-ratio, and score-function estimators~\cite{lecuyer_1990_ipa_sf_lr}. In bilevel optimization, the same modeling distinction appears between methods that use explicit implicit-gradient information~\cite{ji_yang_liang_2021,hong_wai_wang_yang_2023}, methods that amortize sensitivity objects~\cite{arbel_mairal_2022}, and fully first-order or surrogate approaches that avoid explicit hypergradient computations~\cite{kwon2023fully,gaur2026sample}.

The moving-target regularity in \Cref{ass:moving-target} follows from standard parametric contraction arguments. For example, if \(f\) is Lipschitz in \(y\) with constant \(L_y\) and \(x\mapsto f(x,y)\) is uniformly \(\mu\)-contractive, then \(\norm{x^*(y)-x^*(y')}\le L_y(1-\mu)^{-1}\norm{y-y'}\). Similarly, \(P_*(y)=C(y)A(y)^{-1}\) is Lipschitz whenever \(A(y)\) and \(C(y)\) are Lipschitz and \(A(y)^{-1}\) is uniformly bounded. The compact projection hypotheses are the finite-time stabilization device also used in TTSA and stochastic non-expansive fixed-point analyses~\cite{dalal_2018,kaledin_2020,doan_2021_nonlinear_ttsa,chandak_2026_nonexpansive_ttsa,bravo_cominetti_2024,bravo_contreras_2026}.

Thus the \(T^{-1/3+o(1)}\) and \(T^{-1/2+o(1)}\) results should be interpreted as structured-oracle results. They show what becomes possible when fast-manifold sensitivity information is available exactly or can be learned online; they are not claims about the original black-box TTSA model.
\end{remark}

\section{A sharp fixed-schedule KM barrier}\label{sec:km-lower}

This section proves that the KM mean-square residual scale is sharp for a fixed slow stepsize schedule and then computes how the one-quarter exponent follows from the standard two-time-scale separation.

\subsection{Finite-horizon lower bound}

The next theorem isolates the KM obstruction independently of fast tracking. Once a slow stepsize schedule $(\beta_k)$ has been fixed, it asks whether the worst-case non-expansive residual can decay faster than the classical KM scale. The following finite-horizon construction shows that a deterministic planar rotation already attains the inverse-$B_N$ scale.

\begin{theorem}[Finite-horizon lower bound for exact KM]\label{thm:KM-lower}
Let $0<\beta_k<1$ and define
\[
        B_N\defeq\sum_{k=0}^{N-1}\beta_k(1-\beta_k).
\]
For every $N$ such that $B_N\ge1/8$, there exists a deterministic, noiseless TTSA instance satisfying \Cref{ass:fast-contract,ass:slow-nonexp,ass:compact} and an initial point with $\dist(Y_0,Y^*)=1$ such that the exact slow recursion
\[
        Y_{k+1}=(1-\beta_k)Y_k+\beta_k h(Y_k)
\]
satisfies
\[
        \norm{h(Y_N)-Y_N}^2\ge \frac{1}{4B_N}.
\]
\end{theorem}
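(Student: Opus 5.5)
The plan is to realize the slow map as a planar rotation. A rotation is an isometry with a unique fixed point, and along its KM iterates the residual decays only through the factors $1-\beta_k(1-\beta_k)s$, where $s$ is a tunable angle parameter. Choosing $s\asymp 1/B_N$ then balances the initial residual size against the accumulated decay.

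\textbf{Construction.} Identify $\R^{2}$ with $\mathbb C$ and take $d_x=1$.
\begin{itemize}[leftmargin=*]
\item Fast part: $f(x,y)=0$, which is a contraction with $\mu=0$, so $x^*(y)=0$. Take $\Xset=\{0\}$ and $D_x=0$.
\item Slow part: $g(x,y)=e^{i\theta}y$, so that $h(y)=e^{i\theta}y$.
\item Constraint set: $\Yset$ is the closed unit disk and the anchor is $u=0$.
\end{itemize}
The map $h$ is an isometry, hence non-expansive. It maps $\Yset$ onto itself, and for $\theta\notin 2\pi\mathbb Z$ its fixed-point set is $Y^*=\{0\}$. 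Taking $Y_0=1$ gives $\dist(Y_0,Y^*)=1$. This verifies \Cref{ass:fast-contract,ass:slow-nonexp,ass:compact}. No projection is ever active, because the iterates stay in the disk; this follows from the modulus computation below.

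\textbf{Explicit iteration.} Set $s\defeq\abs{e^{i\theta}-1}^2=2(1-\cos\theta)\in[0,4]$. The recursion is $Y_{k+1}=z_kY_k$ with $z_k=1-\beta_k+\beta_k e^{i\theta}$. A direct expansion gives
\[
\abs{z_k}^2=(1-\beta_k)^2+\beta_k^2+2\beta_k(1-\beta_k)\cos\theta=1-\beta_k(1-\beta_k)s .
\]
In particular $\abs{z_k}\le1$, so the iterates remain in $\Yset$. Since $\abs{Y_0}=1$,
\[
\norm{h(Y_N)-Y_N}^2=s\,\abs{Y_N}^2=s\prod_{k<N}\bigl(1-\beta_k(1-\beta_k)s\bigr).
\]

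\textbf{Lower bound.} Write $a_k=\beta_k(1-\beta_k)s$. Because $\beta_k(1-\beta_k)\le 1/4$ and $s\le4$, each $a_k\in[0,1]$. The elementary Weierstrass product inequality $\prod(1-a_k)\ge 1-\sum a_k$, proved by induction, then gives
\[
\norm{h(Y_N)-Y_N}^2\ge s\,(1-sB_N).
\]
Now choose $s=1/(2B_N)$, which yields exactly $\tfrac{1}{2B_N}\cdot\tfrac12=\tfrac{1}{4B_N}$.

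\textbf{Admissibility of $s$.} The only constraint is that $s=2(1-\cos\theta)$ must be attainable, i.e.\ $s\le4$. This is equivalent to $B_N\ge1/8$, which is precisely the hypothesis of \Cref{thm:KM-lower}. We then pick $\theta\in(0,\pi]$ with $2(1-\cos\theta)=1/(2B_N)$.

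The main obstacle is this choice of the rotation angle. The angle must be horizon-dependent, balancing a large initial residual (large $s$) against fast contraction of the modulus (small $s$). The remaining steps are the modulus identity and the product inequality, both of which are routine.
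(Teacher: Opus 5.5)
Your proposal is correct and is essentially the paper's proof: the same decoupled instance ($f\equiv0$, $\Xset=\{0\}$, $h(y)=e^{i\theta}y$ on the closed unit disk), the same modulus identity $|1-\beta_k+\beta_k e^{i\theta}|^2=1-s\,\beta_k(1-\beta_k)$, the same horizon-dependent choice $s=1/(2B_N)$ (attainable precisely when $B_N\ge1/8$), and the same Weierstrass product inequality. The only cosmetic difference is how $a_k\in[0,1]$ is checked: you do it termwise via $\beta_k(1-\beta_k)\le1/4$ and $s\le4$, whereas the paper deduces it from nonnegativity and $\sum_k a_k=1/2$; both are valid.
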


The proof is given in Appendix~\ref{app:km-proof}.

\begin{corollary}[Polynomial schedules]\label{cor:KM-poly}
If $\beta_k\asymp (k+1)^{-b}$ with $b\in(0,1)$, then for all sufficiently large $N$ there is a deterministic noiseless instance satisfying the assumptions of \Cref{thm:KM-lower} for which
\[
        \norm{h(Y_N)-Y_N}^2\ge cN^{-(1-b)}.
\]
\end{corollary}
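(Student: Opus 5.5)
The plan is to derive the corollary directly from \Cref{thm:KM-lower}. All we need is a two-sided polynomial estimate for $B_N$ under the schedule $\beta_k\asymp(k+1)^{-b}$. The interpretation of $\asymp$ matters here: we read it as $c_1(k+1)^{-b}\le\beta_k\le c_2(k+1)^{-b}$ with $0<c_1\le c_2$ and $\beta_k<1$, which is already needed for the theorem to make sense. We will need an upper bound on $B_N$ (this gives the lower bound on the residual) and a lower bound on $B_N$ (this makes the hypothesis $B_N\ge1/8$ of \Cref{thm:KM-lower} hold).

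\textbf{Upper bound on $B_N$.} Since $1-\beta_k\le1$,
\[
        B_N\le\sum_{k=0}^{N-1}\beta_k\le c_2\sum_{k=1}^{N}k^{-b}\le c_2\Bigl(1+\int_1^N t^{-b}\,dt\Bigr)\le \frac{c_2}{1-b}N^{1-b}.
\]
The last step uses $b\in(0,1)$, which gives $1+\frac{N^{1-b}-1}{1-b}\le\frac{N^{1-b}}{1-b}$ for $N\ge1$.

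\textbf{Lower bound on $B_N$.} Here we must keep $1-\beta_k$ away from zero. Because $\beta_k\le c_2(k+1)^{-b}\to0$, there is a $k_0$ such that $\beta_k\le1/2$ for all $k\ge k_0$. Then
\[
        B_N\ge\tfrac12\sum_{k=k_0}^{N-1}\beta_k\ge \tfrac{c_1}{2}\sum_{k=k_0+1}^{N}k^{-b}\ge \tfrac{c_1}{2}\int_{k_0+1}^{N+1}t^{-b}\,dt .
\]
The right-hand side tends to infinity like $N^{1-b}$. Hence $B_N\ge1/8$ for all $N\ge N_0$, where $N_0$ depends only on $c_1,c_2,b$.

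\textbf{Conclusion.} For every $N\ge N_0$, \Cref{thm:KM-lower} supplies a deterministic noiseless instance satisfying \Cref{ass:fast-contract,ass:slow-nonexp,ass:compact}, with $\dist(Y_0,Y^*)=1$, such that
\[
        \norm{h(Y_N)-Y_N}^2\ge\frac{1}{4B_N}\ge\frac{1-b}{4c_2}N^{-(1-b)}.
\]
This is the claim with $c=(1-b)/(4c_2)$. The instance may depend on $N$, which matches the finite-horizon phrasing of the corollary.

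The only delicate step is checking the hypothesis $B_N\ge1/8$. This needs the lower bound on $B_N$, and that in turn needs $\beta_k$ to be bounded away from $1$ eventually. Both follow from $b>0$ and the two-sided comparison. The rest is an elementary integral comparison.
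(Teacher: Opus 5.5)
Your proposal is correct and follows essentially the same route as the paper. Both arguments use $\beta_k\to0$ to get $\beta_k\le 1/2$ eventually, sandwich $B_N$ between constant multiples of $\sum_{k\le N}k^{-b}\asymp N^{1-b}$ by integral comparison, deduce $B_N\ge 1/8$ for large $N$, and then invoke \Cref{thm:KM-lower}. The only difference is cosmetic: the paper assumes the two-sided comparison only for $k\ge k_0$, and then the finitely many early terms add a constant of at most $k_0$ to your upper bound on $B_N$, which changes your explicit constant $c=(1-b)/(4c_2)$ but not the conclusion.
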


The proof is included in Appendix~\ref{app:km-proof}.

The lower bound does not itself force $b>3/4$. It says that, for a chosen slow schedule $\beta_k\asymp(k+1)^{-b}$, the exact KM residual cannot be uniformly better than $N^{-(1-b)}$. The one-quarter exponent appears only after combining this sharp KM scale with the raw fast-tracking and first-order leakage scales derived next. The usual separation condition is one way these scales restrict the uncorrected recursion; the lag-dominated regime leads to the same exponent by a different bottleneck.

\subsection{Raw tracking, separation, and the lag-dominated regime}\label{subsec:raw-separation-origin}

Suppose the raw single-loop recursion uses polynomial stepsizes
\[
        \alpha_k\asymp (k+1)^{-a},\qquad \beta_k\asymp (k+1)^{-b}.
\]
The condition
\begin{equation}\label{eq:separation}
        \frac{\beta_k^2}{\alpha_k^3}\lesssim 1
\end{equation}
is often stated as a technical two-time-scale assumption. In the raw recursion it has a simple source: it is exactly the condition that the deterministic lag of the fast iterate behind the moving equilibrium $x^*(Y_k)$ is no larger than the usual fast stochastic approximation variance floor. This subsection also records what happens if one does not impose \eqref{eq:separation}: the lag is then larger than the fast statistical floor, but it enters the uncorrected slow KM recursion as a first-order perturbation and still does not permit a better than one-quarter raw exponent.

We spell this out at the level of the standard tracking recursion. Write
\[
        e_k\defeq X_k-x^*(Y_k),\qquad x^*_k\defeq x^*(Y_k).
\]
Ignore projection for notation; projection only decreases the distance to the current target because $x^*(Y_k)\in\Xset$. If the slow point were frozen at $Y_k$, contractivity of $x\mapsto f(x,Y_k)$ and conditional unbiasedness of the fast oracle would give, for sufficiently small $\alpha_k$,
\begin{equation}\label{eq:frozen-fast-recursion-main}
        \E\bigl[\norm{X_{k+1}-x^*_k}^2\mid\F_k\bigr]
        \le (1-c\alpha_k)\norm{e_k}^2+C\alpha_k^2 .
\end{equation}
Here the additive term $C\alpha_k^2$ is the one-step contribution of the fast martingale noise. In the actual TTSA recursion the target changes from $x^*(Y_k)$ to $x^*(Y_{k+1})$. Under the usual moving-target regularity used in raw TTSA finite-time analyses, $x^*$ is Lipschitz and the slow increment has conditional mean-square size $O(\beta_k^2)$. Therefore
\[
        \norm{x^*(Y_{k+1})-x^*(Y_k)}=O(\norm{Y_{k+1}-Y_k})=O(\beta_k)
\]
in mean square. Combining this drift with \eqref{eq:frozen-fast-recursion-main} and using Young's inequality with parameter proportional to $\alpha_k$ yields
\begin{equation}\label{eq:raw-fast-tracking-scale-recursion}
        u_{k+1}\le (1-c\alpha_k)u_k+C\alpha_k^2+C\frac{\beta_k^2}{\alpha_k},
        \qquad
        u_k\defeq \E\norm{X_k-x^*(Y_k)}^2 .
\end{equation}
The last term is the price of absorbing the cross term between the contracted fast error and the moving-target displacement. Dividing the additive terms in \eqref{eq:raw-fast-tracking-scale-recursion} by the contraction gap $\alpha_k$ gives the local tracking scale
\begin{equation}\label{eq:raw-fast-tracking-scale}
        u_k\lesssim \alpha_k+\left(\frac{\beta_k}{\alpha_k}\right)^2 .
\end{equation}
The first term is the intrinsic fast stochastic approximation accuracy. The second term is deterministic moving-target lag: even without fast noise, a filter that contracts at speed $\alpha_k$ cannot follow a target moving at speed $\beta_k$ more accurately than order $\beta_k/\alpha_k$ in norm.

The raw slow oracle sees the fast error to first order. Indeed, by fast-coordinate Lipschitz continuity,
\begin{equation}\label{eq:raw-leakage-main}
        \norm{g(X_k,Y_k)-h(Y_k)}^2
        \le L_g^2\norm{X_k-x^*(Y_k)}^2 .
\end{equation}
Thus the generic squared slow-oracle perturbation inherited by the raw recursion is of order
\[
        \alpha_k+\left(\frac{\beta_k}{\alpha_k}\right)^2 .
\]
If one wants the moving-target part not to dominate the canonical fast statistical scale $\alpha_k$, one must impose
\[
        \left(\frac{\beta_k}{\alpha_k}\right)^2\lesssim \alpha_k,
\]
which is exactly \eqref{eq:separation}. The next proposition records that neither the moving-target term in \eqref{eq:raw-fast-tracking-scale} nor the first-order transfer in \eqref{eq:raw-leakage-main} is a proof artifact.

\begin{proposition}[Origin and sharpness of the raw tracking and leakage scales]\label{prop:raw-separation-source}
In the raw, uncorrected TTSA model with only Lipschitz control of $g$ in the fast coordinate, the scale
\[
        \E\norm{X_k-x^*(Y_k)}^2
        \asymp
        \alpha_k+\left(\frac{\beta_k}{\alpha_k}\right)^2
\]
is worst-case sharp at the level of local tracking. The $\alpha_k$ term is attained by a scalar linear stochastic approximation with a fixed target and constant-variance noise. The $(\beta_k/\alpha_k)^2$ term is attained, even without noise, by a linear contractive fast map tracking the target generated by an exact non-expansive KM rotation. Moreover, under the Lipschitz-only slow-oracle assumption, the raw bias $g(X_k,Y_k)-h(Y_k)$ can be first order in $X_k-x^*(Y_k)$. Consequently, a worst-case raw TTSA analysis that insists that the moving-target lag remain at the intrinsic fast statistical scale must impose \eqref{eq:separation}.
\end{proposition}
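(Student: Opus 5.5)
The plan is to prove the proposition as three separate sharpness statements, each with an explicit example, plus a one-line upper-bound reference. The upper bound $\E\norm{X_k-x^*(Y_k)}^2\lesssim\alpha_k+(\beta_k/\alpha_k)^2$ is already contained in the recursion \eqref{eq:raw-fast-tracking-scale-recursion} and the local scale \eqref{eq:raw-fast-tracking-scale}. A standard comparison lemma for $u_{k+1}\le(1-c\alpha_k)u_k+s_k$ with slowly varying polynomial $s_k/\alpha_k$ and $a<1$ gives it. So the work lies in the matching lower bounds.

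\emph{Variance term.} Take $d_x=1$ and $f(x,y)=\mu x$, so that $x^*\equiv0$. Let $W^x_{k+1}$ be i.i.d.\ Rademacher times $\sigma$. Then
\[
u_{k+1}=(1-(1-\mu)\alpha_k)^2u_k+\alpha_k^2\sigma^2
\]
holds exactly, and $u_{k+1}\ge(1-2(1-\mu)\alpha_k)u_k+\sigma^2\alpha_k^2$. A reverse comparison argument then gives $u_k\gtrsim\alpha_k$ for large $k$. The argument is an induction showing $u_k\ge c\alpha_k$, using $\alpha_{k}-\alpha_{k+1}=O(\alpha_k/k)=o(\alpha_k^2)$ when $a<1$. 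The slow part can be anything satisfying the assumptions, for example $g\equiv y$.

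\emph{Lag term.} Take $\Yset$ a disc in $\R^2$ and $h$ the rotation by angle $\theta$ about $0$, with the KM trajectory started at radius $1$. This is the same object as in \Cref{thm:KM-lower}, and $g(x,y)=Ry$ is independent of $x$ for this part. Each KM step moves $Y_k$ by $\beta_k\norm{Ry-y}$, which is a constant times $\beta_k\norm{Y_k}$. The radius $\norm{Y_k}$ decreases only by a factor $\prod(1-c\beta_k^2)$. That product stays bounded below when $b>1/2$; otherwise I would restrict to a finite horizon or to $k$ in a window. Let $x^*(y)=y$ via $f(x,y)=\mu x+(1-\mu)y$, with no noise. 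Then
\[
e_{k+1}=(1-(1-\mu)\alpha_k)e_k-(Y_{k+1}-Y_k).
\]
The increments $Y_{k+1}-Y_k=-\beta_k(I-R)Y_k$ rotate slowly: successive increments differ in direction by $O(\beta_k)$, which is much smaller than $\alpha_k$. Hence $e_k$ approximately equals the geometric sum of recent increments, which are nearly aligned. Projecting onto the current increment direction shows $\norm{e_k}\gtrsim\beta_k/\alpha_k$. This gives $u_k\gtrsim(\beta_k/\alpha_k)^2$.

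\emph{Main obstacle.} I expect this aligned-accumulation estimate to be the hardest step. I would handle it by controlling $\ip{e_k}{d_k}$, where $d_k$ is the unit increment direction, and showing it satisfies a scalar recursion with contraction $\alpha_k$ and forcing $\asymp\beta_k$, up to an error of $O(\beta_k\norm{e_k})$. That error is absorbed because $\beta_k\ll\alpha_k$.

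\emph{First-order leakage and conclusion.} Choose $g(x,y)=h(y)+L_g\,v^\top(x-x^*(y))\,w$ with unit vectors $v,w$ aligned with the error direction (in the scalar example, simply $g=h(y)+L_g x$). This makes \eqref{eq:raw-leakage-main} an equality, so the bias is exactly first order. Note $h$ is unchanged, so \Cref{ass:slow-nonexp} still holds. Combining these examples, any worst-case bound valid across the class must carry both terms. Requiring the lag term to be $O(\alpha_k)$ then forces $(\beta_k/\alpha_k)^2\lesssim\alpha_k$, which is \eqref{eq:separation}.
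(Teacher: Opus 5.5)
Your overall architecture matches the paper's. You use a scalar linear stochastic approximation for the $\alpha_k$ floor, a contractive fast map with $x^*(y)=y$ tracking a KM rotation for the lag, and a linear leakage term $h(y)+L(x-y)$. The variance and leakage parts are fine.

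\textbf{The gap: the lag example.} Your claim that $\norm{Y_k}$ shrinks only by $\prod(1-c\beta_k^2)$ is false. The step $(R-I)Y_k$ has an inward radial component of size $(1-\cos\theta)\norm{Y_k}$. Exactly as computed in the proof of \Cref{thm:KM-lower}, each KM step multiplies $\norm{Y_k}^2$ by $1-\delta\beta_k(1-\beta_k)$ with $\delta=|1-e^{i\theta}|^2$. Hence $\norm{Y_k}^2\le\exp(-\delta B_k)$ with $B_k\asymp k^{1-b}\to\infty$, for every $b<1$, not only for $b\le 1/2$. The forcing in your scalar recursion for $\ip{e_k}{d_k}$ is $\beta_k|1-e^{i\theta}|\norm{Y_k}$, which is super-polynomially smaller than $\beta_k$. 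So $\norm{e_k}\gtrsim\beta_k/\alpha_k$ fails for large $k$.

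No other angle or map rescues an asymptotic version. By the Cominetti--Soto--Vaisman estimate, every exact KM trajectory of a non-expansive self-map of $\Yset$ satisfies $\norm{h(Y_k)-Y_k}\lesssim D_y B_k^{-1/2}$. The target therefore moves by $O(\beta_k B_k^{-1/2})$ per step. Along a fixed polynomial schedule started at time $0$, the lag is then $O\bigl(\beta_k/(\alpha_k\sqrt{B_k})\bigr)=o(\beta_k/\alpha_k)$ after transients. Shrinking $\theta$ so that $\delta B_k\lesssim 1$ keeps the radius near $1$, but it reduces the step to $\beta_k\sqrt{\delta}$ and loses the same factor.

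\textbf{The fix.} What you mention only as a fallback has to be the argument: the $(\beta/\alpha)^2$ term is attained only at the level of local tracking, which is exactly how the paper proves it. Take constant stepsizes $\alpha,\beta$ with $\beta\le c_\theta\alpha$; equivalently, freeze the schedule on a window that starts with $\norm{Y}=1$ and $e=0$. Evaluate at $k_\alpha=\lceil C_0/\alpha\rceil$. Then $\beta k_\alpha$ is small, so the radius stays near $1$ and the increment direction turns only by $O(\beta/\alpha)$ across the window. Your aligned-accumulation argument would go through there.

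The paper's route is shorter and needs no error absorption. In complex notation, $e_{k+1}=ae_k+(1-m)Y_k$ with $a=1-(1-\mu)\alpha$ and $m=1+\beta(e^{i\theta}-1)$, so $e_k=(1-m)(m^k-a^k)/(m-a)$ exactly. The bounds $a^{k_\alpha}\le 1/4$, $|m^{k_\alpha}-1|\le k_\alpha|m-1|\le 1/4$, $|1-m|=\beta|1-e^{i\theta}|$ and $|m-a|\le C\alpha$ then give $|e_{k_\alpha}|\ge c\beta/\alpha$.
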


The proof is given in Appendix~\ref{app:raw-separation-proof}. The proposition should be interpreted carefully. It does not say that every possible algorithm or every richer oracle model must satisfy \eqref{eq:separation}. It says that, for the raw recursion and under only a Lipschitz first-order slow oracle, the moving-target contribution and the first-order leakage are both genuinely present. Bias correction overcomes the resulting restriction precisely because it changes the slow oracle so that the first-order transfer in \eqref{eq:raw-leakage-main} is cancelled.

The remaining question is whether one can simply allow the moving-target term to dominate the fast statistical floor. The following scaling calculation shows that this does not improve the raw Lipschitz-only rate. When the lag dominates, the root-mean-square fast error is of order $\beta_k/\alpha_k$, and the raw slow oracle transfers this error to the KM recursion at first order. The following proposition gives the resulting exponent calculation.

\begin{proposition}[Lag-dominated raw schedules do not improve the one-quarter exponent]\label{prop:raw-lag-dominates-no-win}
Let
\[
        \alpha_k\asymp (k+1)^{-a},\qquad \beta_k\asymp (k+1)^{-b},
        \qquad a>\frac12,\qquad a<b<1.
\]
Suppose the raw slow recursion is affected by the first-order moving-target perturbation
\[
        \eta_k\asymp \frac{\beta_k}{\alpha_k}\asymp (k+1)^{-(b-a)} .
\]
In the standard inexact-KM residual algebra, such a perturbation contributes the weighted scale
\[
        E_N^{\rm lag}
        \defeq
        \frac{\sum_{k<N}\beta_k\eta_k}{\sum_{k<N}\beta_k(1-\beta_k)}
        \asymp
        N^{-\min\{1-b,\,b-a\}},
\]
up to a logarithmic factor when $2b-a=1$. Consequently, the best exponent allowed by the simultaneous exact-KM term $N^{-(1-b)}$ and the first-order lag perturbation is
\[
        \sup_{a>1/2}\sup_{a<b<1}\min\{1-b,\,b-a\}=\frac14,
\]
with the supremum approached as $a\downarrow1/2$ and $b=(1+a)/2$. Thus violating \eqref{eq:separation} does not improve the raw one-quarter exponent; it merely changes the active bottleneck from the fast statistical floor to the first-order moving-target lag.
\end{proposition}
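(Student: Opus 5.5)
The argument is a direct power-sum computation in three steps: estimate the numerator and denominator of $E_N^{\rm lag}$ separately, take their ratio, and then solve a two-variable max-min problem. The only tool needed is the elementary comparison of $\sum_{k<N}(k+1)^{-p}$ with $\int_1^{N}t^{-p}\,dt$. This sum is $\asymp N^{1-p}$ for $p<1$, $\asymp\log N$ for $p=1$, and $\asymp1$ for $p>1$, with constants depending only on $p$.

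\textbf{Step 1 (denominator).} Since $\beta_k\to0$, we have $1-\beta_k\ge1/2$ for all but finitely many $k$. The first finitely many terms only change constants, because each is positive: $0<\beta_k<1$ when $\beta_k\asymp(k+1)^{-b}$ is a valid schedule. Hence $\sum_{k<N}\beta_k(1-\beta_k)\asymp\sum_{k<N}(k+1)^{-b}\asymp N^{1-b}$, using $b<1$. This agrees with the denominator $B_N$ from \Cref{thm:KM-lower} and \Cref{cor:KM-poly}.

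\textbf{Step 2 (numerator and ratio).} The numerator is $\sum_{k<N}\beta_k\eta_k\asymp\sum_{k<N}(k+1)^{-(2b-a)}$, and I will split into three cases according to $p=2b-a$.
\begin{itemize}
\item If $2b-a<1$, the numerator is $\asymp N^{1-2b+a}$, so the ratio is $N^{-(b-a)}$. In this case $b-a<1-b$, so the exponent equals $\min\{1-b,b-a\}$.
\item If $2b-a>1$, the numerator is $\asymp1$, so the ratio is $N^{-(1-b)}$. In this case $1-b<b-a$, so the exponent is again $\min\{1-b,b-a\}$.
\item If $2b-a=1$, the numerator is $\asymp\log N$, so the ratio is $N^{-(1-b)}\log N$. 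Here $1-b=b-a$, which produces the logarithmic factor stated in the proposition.
\end{itemize}
Since the exact-KM term $N^{-(1-b)}$ from \Cref{cor:KM-poly} is present simultaneously and is worst-case sharp, the combined scale is $N^{-\min\{1-b,b-a\}}$ in every case.

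\textbf{Step 3 (optimization).} For fixed $a$, the map $b\mapsto\min\{1-b,b-a\}$ is the minimum of a decreasing and an increasing affine function. It is therefore maximized where they cross, at $b=(1+a)/2$, which lies in $(a,1)$ because $a<1$. The maximum value is $(1-a)/2$. This value is decreasing in $a$, so over $a>1/2$ its supremum is $1/4$, approached as $a\downarrow1/2$ and never attained. Finally, I need to check that the constraint set is nonempty. We need $a<1$ so that some $b\in(a,1)$ exists, and $a>1/2$ matches the standard fast-noise summability range. So the supremum is exactly $1/4$.

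\textbf{Main obstacle.} The only delicate step is interpreting the claim correctly. The statement is an exponent calculation conditional on the hypothesis that the perturbation enters as $\eta_k\asymp\beta_k/\alpha_k$ through the standard inexact-KM residual algebra. It is not a lower bound over all perturbation realizations. I would therefore state that the result takes the weighted-sum bound from the inexact-KM residual analysis as given, with the lag scale justified by \Cref{prop:raw-separation-source}. I would then verify only the asymptotic equivalences and the boundary log case, which is what Steps 1 and 2 do.
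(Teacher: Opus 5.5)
Your proposal is correct and follows the paper's proof essentially step for step. You use the same denominator estimate $\sum_{k<N}\beta_k(1-\beta_k)\asymp N^{1-b}$, the same three-case analysis of $\sum_{k<N}(k+1)^{-(2b-a)}$ with the logarithm on the boundary $2b-a=1$, and the same crossing-point optimization $b=(1+a)/2$ giving $(1-a)/2\uparrow 1/4$ as $a\downarrow 1/2$. The only difference is that the paper first briefly recalls the perturbed-KM inequality $\norm{Z_{k+1}-y^*}^2\le\norm{Y_k-y^*}^2-\beta_k(1-\beta_k)\norm{p(Y_k)}^2$, plus an $O(\beta_k\norm{d_k})$ perturbation term, to motivate the weighted ratio $E_N^{\rm lag}$. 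You take that ratio as given, which is reasonable since the statement itself posits that scale.
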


The proof is included in Appendix~\ref{app:raw-separation-proof}. We now recover the usual separated calculation as a special case. Substituting the polynomial stepsizes into \eqref{eq:separation} gives
\[
        \frac{\beta_k^2}{\alpha_k^3}\asymp (k+1)^{3a-2b}.
\]
Thus \eqref{eq:separation} requires $3a-2b\le0$, or
\[
        b\ge \frac{3a}{2}.
\]
Since the usual Robbins--Monro square-summability condition $\sum_k\alpha_k^2<\infty$ requires $a>1/2$, the slow exponent must satisfy $b>3/4$. By \Cref{cor:KM-poly}, the sharp fixed-schedule KM mean-square residual scale is $N^{-(1-b)}$, hence the exponent cannot exceed $1/4$ in the separated regime. By \Cref{prop:raw-lag-dominates-no-win}, dropping the separation condition does not improve the raw scaling, because the lag-induced first-order perturbation then becomes the bottleneck. This is the sense in which the one-quarter exponent is a consequence of first-order fast-tracking leakage combined with the sharp KM mean-square scale, rather than a consequence of the displayed inequality \eqref{eq:separation} alone.

\section{Bias correction in the differentiable TTSA model}\label{sec:correction}

This section first presents the correction through an implicit Newton point. This viewpoint is not needed to run the nested algorithm, but it explains why the preconditioned residual removes the leading fast-manifold error.

\subsection{The implicit Newton point}\label{subsec:newton-view}

For fixed $y$, define the fast residual
\[
        r(x,y)\defeq f(x,y)-x.
\]
The fast equilibrium $x^*(y)$ is the unique zero of $r(\cdot,y)$. Newton's method for solving $r(x,y)=0$ gives
\begin{equation}\label{eq:newton-point}
        \calN(x,y)
        \defeq
        x+\bigl(I-\nabla_x f(x,y)\bigr)^{-1}\bigl(f(x,y)-x\bigr).
\end{equation}
The sign in \eqref{eq:newton-point} follows from $\nabla_x r(x,y)=\nabla_x f(x,y)-I$.

\begin{lemma}[Second-order accuracy of the implicit point]\label{lem:newton-point}
Suppose \Cref{ass:fast-contract,ass:compact,ass:smooth} hold. Then, for all relevant $(x,y)$,
\[
        \norm{\calN(x,y)-x^*(y)}
        \le
        \frac{L_f}{2(1-\mu)}\norm{x-x^*(y)}^2 .
\]
Consequently, if for the queried pair $(x,y)$ the point $\calN(x,y)$ lies in a set on which $g(\cdot,y)$ is $L_N$-Lipschitz uniformly in $y$, and if the segment joining $\calN(x,y)$ to $x^*(y)$ remains in that set, then
\[
        \norm{g(\calN(x,y),y)-h(y)}
        \le
        \frac{L_NL_f}{2(1-\mu)}\norm{x-x^*(y)}^2 .
\]
In particular, the displayed consequence holds if the Newton image of the stabilized query region is contained in an open fast-coordinate neighborhood on which $g$ is uniformly Lipschitz.
\end{lemma}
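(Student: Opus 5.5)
The plan is the standard one-step Newton error analysis, with the Taylor expansion centered at the query point $x$ rather than at $x^*(y)$. Fix a relevant pair $(x,y)$ and write $x^*=x^*(y)$, $J(z)=\nabla_x f(z,y)$ and $e=x-x^*$. Since $f(x^*,y)=x^*$, the identity
\[
        \calN(x,y)-x^*
        =(I-J(x))^{-1}\Bigl[(I-J(x))(x-x^*)+f(x,y)-x\Bigr]
\]
rearranges to
\[
        \calN(x,y)-x^*
        =(I-J(x))^{-1}\Bigl[f(x,y)-f(x^*,y)-J(x)(x-x^*)\Bigr].
\]
All the work is then in bounding the bracket and the inverse.

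\textbf{Step 1: the bracket.} Use the fundamental theorem of calculus along the segment $z_t=x^*+t e$, $t\in[0,1]$. This segment stays in the fast slice of $\mathcal U$, because $x,x^*\in\Xset$ and $\mathcal U$ is an open convex neighborhood of $\Xset\times\Yset$. This gives
\[
        f(x,y)-f(x^*,y)-J(x)e=\int_0^1\bigl(J(z_t)-J(x)\bigr)e\,dt .
\]
Since $\norm{z_t-x}=(1-t)\norm e$, the Lipschitz bound on $\nabla_x f$ from \Cref{ass:smooth} bounds the integrand by $L_f(1-t)\norm e^2$. Integrating over $t$ yields the bound $\tfrac{L_f}{2}\norm e^2$ on the bracket.

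\textbf{Step 2: the inverse.} Since $\norm{J(x)}\le\mu<1$, a Neumann series gives $\norm{(I-J(x))^{-1}}\le(1-\mu)^{-1}$. Combining this with Step 1 gives the first claim of the lemma.

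\textbf{Step 3: the consequence.} Use $h(y)=g(x^*,y)$ and the assumed $L_N$-Lipschitz property of $g(\cdot,y)$ on a set containing both $\calN(x,y)$ and $x^*$, together with the segment joining them. This gives
\[
        \norm{g(\calN(x,y),y)-h(y)}\le L_N\norm{\calN(x,y)-x^*}.
\]
Inserting the first bound finishes the argument. The segment condition is only needed if one proves the Lipschitz bound from a derivative bound by a mean-value argument. The final sentence of the lemma follows because an open fast-coordinate neighborhood containing the Newton image on which $g$ is uniformly Lipschitz supplies exactly this hypothesis, as in \Cref{ass:smooth}.

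\textbf{Main obstacle.} The only delicate point is making sure every point where derivatives are evaluated lies in $\mathcal U$, so that the Lipschitz constant $L_f$ and the bound $\norm{\nabla_x f}\le\mu$ apply. This holds for the segment $[x^*,x]\subset\Xset$ by convexity, and for the query point $x$ itself. I therefore expect no further difficulty, since the Newton image itself is never differentiated at in the first claim.
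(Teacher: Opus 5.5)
Your proposal is correct and is essentially the paper's proof: the paper also Taylor-expands the fast residual $f(\cdot,y)-(\cdot)$ around the query point $x$, bounds the integral remainder by $\tfrac{L_f}{2}\norm{x-x^*(y)}^2$, bounds $\norm{(I-\nabla_x f(x,y))^{-1}}\le(1-\mu)^{-1}$, and then applies the Lipschitz property of $g(\cdot,y)$. The differences are cosmetic: the paper gets the inverse bound from $\norm{(I-\nabla_x f(x,y))v}\ge(1-\mu)\norm v$ rather than a Neumann series, and your identity $\calN(x,y)-x^*=(I-J(x))^{-1}\bigl[f(x,y)-f(x^*,y)-J(x)(x-x^*)\bigr]$ is the same Taylor rearrangement written out directly.
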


The proof is given in Appendix~\ref{app:bias-proofs}.

\paragraph{From Newton to residual preconditioning.}
The point $\calN(x,y)$ is the most transparent correction: it approximates the fast equilibrium to second order and would make the slow query $g(\calN(x,y),y)$ second-order accurate. The algorithms below use an equivalent first-order cancellation that avoids explicitly forming the Newton point. With $A(y)$, $C(y)$, and $P_*(y)$ defined in \eqref{eq:A-C-P}, Taylor expansion around $x^*(y)$ gives
\[
        f(x^*(y)+e,y)-(x^*(y)+e)=-A(y)e+O(\norm e^2),
\]
and
\[
        g(x^*(y)+e,y)=h(y)+C(y)e+O(\norm e^2).
\]
Therefore
\[
        g(x^*(y)+e,y)+P_*(y)\bigl(f(x^*(y)+e,y)-(x^*(y)+e)\bigr)-h(y)=O(\norm e^2),
\]
because $P_*(y)=C(y)A(y)^{-1}$. This is the residual-preconditioned form analyzed in the rest of the paper.

\subsection{Residual-preconditioned correction and oracle bias}\label{subsec:bias-order}

This subsection states the deterministic cancellation identity behind the corrected oracle. The proof is deferred to the appendix so that the main text emphasizes the algorithmic consequence: first-order fast-manifold leakage becomes second-order after residual preconditioning.

\begin{definition}[Uncorrected and corrected reduced queries]
For $x\in\Xset$ and $y\in\Yset$, define
\[
        H^{\rm raw}(x,y)\defeq g(x,y),
\]
and
\begin{equation}\label{eq:Hcorr}
        H^{\rm corr}(x,y)
        \defeq
        g(x,y)+P_*(y)\bigl(f(x,y)-x\bigr),
\end{equation}
where $P_*(y)$ is defined in \eqref{eq:A-C-P}.
\end{definition}

\begin{proposition}[First-order versus second-order slow bias]\label{prop:bias-order}
If \Cref{ass:g-lip} holds, then there is a constant $L_c<\infty$ such that for all $x\in\Xset$ and $y\in\Yset$,
\[
        \norm{H^{\rm raw}(x,y)-h(y)}
        \le L_c\norm{x-x^*(y)}.
\]
If \Cref{ass:fast-contract,ass:smooth} hold, then possibly increasing $L_c$ gives
\[
        \norm{H^{\rm corr}(x,y)-h(y)}
        \le L_c\norm{x-x^*(y)}^2.
\]
\end{proposition}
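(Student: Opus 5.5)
The raw bound follows at once from \Cref{ass:g-lip}. Since $h(y)=g(x^*(y),y)$ and $x^*(y)\in\Xset$ by \Cref{ass:compact}, we get $\norm{H^{\rm raw}(x,y)-h(y)}=\norm{g(x,y)-g(x^*(y),y)}\le L_g\norm{x-x^*(y)}$. So any $L_c\ge L_g$ works for the first claim.

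For the corrected bound, I would fix $y\in\Yset$, write $x^*=x^*(y)$ and $e=x-x^*$, and use integral-form Taylor expansions along the segment $x^*+te$, $t\in[0,1]$. This segment lies in $\Xset\subset\mathcal U$ because $\Xset$ is convex and contains both $x$ and $x^*$. Using $f(x^*,y)=x^*$, we have
\[
        f(x,y)-x=\int_0^1\bigl(\nabla_x f(x^*+te,y)-I\bigr)e\,dt=-A(y)e+R_f,
\]
with $R_f=\int_0^1\bigl(\nabla_x f(x^*+te,y)-\nabla_x f(x^*,y)\bigr)e\,dt$. Similarly,
\[
        g(x,y)=h(y)+C(y)e+R_g,
        \qquad
        R_g=\int_0^1\bigl(\nabla_x g(x^*+te,y)-\nabla_x g(x^*,y)\bigr)e\,dt.
\]
The Lipschitz bounds of \Cref{ass:smooth} then give $\norm{R_f}\le \tfrac{L_f}{2}\norm e^2$ and $\norm{R_g}\le\tfrac{L_g^{\nabla}}{2}\norm e^2$. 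Alternatively, the combined $L_2$ bound gives both with constant $L_2/2$.

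Substituting these expansions into \eqref{eq:Hcorr} and using $P_*(y)A(y)=C(y)$, the first-order terms $C(y)e$ and $-P_*(y)A(y)e$ cancel exactly. What remains is
\[
        H^{\rm corr}(x,y)-h(y)=R_g+P_*(y)R_f .
\]
Since $\norm{\nabla_x f}\le\mu<1$, we have $\norm{A(y)^{-1}}\le(1-\mu)^{-1}$, and hence $\norm{P_*(y)}\le L_1/(1-\mu)$. Therefore
\[
        \norm{H^{\rm corr}(x,y)-h(y)}\le\Bigl(\tfrac{L_g^{\nabla}}{2}+\tfrac{L_1L_f}{2(1-\mu)}\Bigr)\norm{x-x^*(y)}^2.
\]
Taking $L_c$ to be the maximum of $L_g$ and this constant proves both claims.

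There is no serious obstacle. The only point needing care is justifying the Taylor expansion along the segment, which requires convexity of $\Xset$, membership of $x^*(y)$ in $\Xset$, and the Lipschitz derivative bounds holding on $\mathcal U\supset\Xset\times\Yset$. This is exactly what \Cref{ass:compact,ass:smooth} supply. The uniform bound on $\norm{P_*(y)}$ is what makes $L_c$ independent of $y$.
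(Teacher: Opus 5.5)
Your proposal is correct and follows the paper's proof essentially line for line: the raw bound comes directly from \Cref{ass:g-lip}, the corrected bound uses integral-form Taylor remainders $R_g$ and $R_f$ around $x^*(y)$, the first-order terms cancel exactly via $P_*(y)A(y)=C(y)$, and the uniform bound $\norm{P_*(y)}\le L_1/(1-\mu)$ makes the constant independent of $y$. The only cosmetic difference is that the paper bounds both remainders with the combined constant $L_2/2$, giving $\frac{L_2}{2}\bigl(1+\frac{L_1}{1-\mu}\bigr)$, which you already note as an alternative.
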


The proof is given in Appendix~\ref{app:bias-proofs}.

\section{Nested algorithms}\label{sec:nested}

The algorithms in this section are horizon-tuned. The horizon is denoted by $N$. An anytime version can be obtained by the standard doubling trick, which changes the displayed rates only by logarithmic factors; see Appendix~\ref{app:anytime}.

\subsection{The inner fast solver}

At outer step $m$, the slow variable $Y_m$ is held fixed. Starting from an arbitrary point $X_{m,0}\in\Xset$, run
\begin{equation}\label{eq:inner-loop}
        X_{m,t+1}
        =\Pi_{\Xset}\left[
        X_{m,t}+\eta_t\bigl(\mathsf F(X_{m,t},Y_m;\xi_{m,t+1})-X_{m,t}\bigr)
        \right],
        \qquad t=0,\ldots,n-1.
\end{equation}
Let $\bar X_m\defeq X_{m,n}$. The number $n$ will be chosen as a function of the outer horizon $N$.

\begin{proposition}[Uniform inner-loop accuracy]\label{prop:inner}
Suppose \Cref{ass:fast-contract,ass:compact,ass:moments} hold. Choose
\[
        \eta_t=\frac{\eta_0}{t+t_0},
\]
where $\eta_0>1/(1-\mu)$ and $t_0$ is large enough that $\eta_t\le1$ for all $t\ge0$. Then there is a constant $C_{\rm in}$, independent of $m$, $n$, and $Y_m\in\Yset$, such that
\[
        \E\norm{\bar X_m-x^*(Y_m)}^2\le \frac{C_{\rm in}}{n+t_0},
        \qquad
        \E\norm{\bar X_m-x^*(Y_m)}^4\le \frac{C_{\rm in}}{(n+t_0)^2}.
\]
\end{proposition}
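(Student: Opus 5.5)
We argue by a standard Lyapunov recursion for the frozen-target stochastic approximation, first in second moment and then in fourth moment. Throughout, fix $m$ and condition on $Y_m$. Since $Y_m$ is measurable with respect to the past and the inner samples are fresh, we may treat $y\defeq Y_m\in\Yset$ as deterministic. Write $x^*\defeq x^*(y)$ and $e_t\defeq X_{m,t}-x^*$. All constants below depend only on $\mu,\sigma,D_x,\eta_0,t_0$, which gives uniformity in $m$ and $Y_m$.

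\textbf{Second-moment recursion.} Because $x^*\in\Xset$ and $\Pi_{\Xset}$ is non-expansive, $\norm{e_{t+1}}\le\norm{e_t+\eta_t(f(X_{m,t},y)-X_{m,t})+\eta_t w_{t+1}}$, where $w_{t+1}\defeq\mathsf F(X_{m,t},y;\xi_{m,t+1})-f(X_{m,t},y)$ is conditionally mean zero with conditional variance at most $\sigma^2$. The deterministic part is $(1-\eta_t)e_t+\eta_t(f(X_{m,t},y)-f(x^*,y))$, whose norm is at most $(1-\eta_t(1-\mu))\norm{e_t}$ by \Cref{ass:fast-contract} and $\eta_t\le1$. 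Squaring, taking conditional expectations, and using that the cross term vanishes gives
\[
        u_{t+1}\le (1-\eta_t(1-\mu))^2u_t+\eta_t^2\sigma^2\le (1-\eta_t(1-\mu))u_t+\eta_t^2\sigma^2,
        \qquad u_t\defeq\E\norm{e_t}^2 .
\]
With $\eta_t=\eta_0/(t+t_0)$ and $c\defeq\eta_0(1-\mu)>1$, a standard induction shows $u_t\le K/(t+t_0)$. Indeed, it suffices that
\[
        (1-c/s)K/s+\eta_0^2\sigma^2/s^2\le K/(s+1),\qquad s\defeq t+t_0,
\]
which holds once $K(c-1)\ge \eta_0^2\sigma^2(1+1/s)$. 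Choosing $K\ge\max\{t_0D_x^2,\,2\eta_0^2\sigma^2/(c-1)\}$ also covers the base case $u_0\le D_x^2$. This is where $\eta_0>1/(1-\mu)$ is essential.

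\textbf{Fourth-moment recursion.} This is the main obstacle, since cross terms no longer vanish. Write $e_{t+1}$ as bounded by $\norm{a_t+\eta_tw_{t+1}}$ with $\norm{a_t}\le\rho_t\norm{e_t}$ and $\rho_t\defeq1-\eta_t(1-\mu)$. Expand
\[
        \norm{a+\eta w}^4=\bigl(\norm a^2+2\eta\ip{a}{w}+\eta^2\norm w^2\bigr)^2 .
\]
The terms linear in $w$ have zero conditional mean. For the remaining terms:
\begin{itemize}[leftmargin=*]
\item $4\eta^2\ip aw^2$ and $2\eta^2\norm a^2\norm w^2$ are each bounded by a multiple of $\eta^2\sigma^2\norm a^2$.
\item $4\eta^3\ip aw\norm w^2$ is bounded by $4\eta^3\norm a\,\E\norm w^3\le\eta^2\sigma^2\norm a^2+C\eta^4\sigma^4$, using Young's inequality and the bound $\E\norm w^3\le\sigma^3$ (Jensen applied to the fourth moment).
\item $\eta^4\norm w^4$ is at most $\eta^4\sigma^4$.
\end{itemize}
Hence, with $v_t\defeq\E\norm{e_t}^4$,
\[
        v_{t+1}\le\rho_t^4v_t+C\eta_t^2u_t+C\eta_t^4 .
\]
Using $\rho_t^4\le1-\eta_t(1-\mu)$ (valid since $\rho_t\in[0,1]$) and the second-moment bound, the additive term is $O(s^{-3})$. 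The same induction, now with target $K'/s^2$, requires $(2-c)\cdot$ stuff to be controlled, i.e.\ we need $c>2$. If only $c>1$ is assumed, we instead use the cruder contraction $\rho_t^4\le(1-\eta_t(1-\mu))^2\cdot1$, together with the expansion $(1-c/s)^2\approx1-2c/s$. This gives a contraction coefficient $2c>2$, which closes the induction $v_t\le K'/s^2$ with $K'\ge t_0^2D_x^4$ plus a multiple of the additive constants.

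Both inequalities then hold at $t=n$, and we take $C_{\rm in}\defeq\max\{K,K'\}$.
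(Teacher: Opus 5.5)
Your proposal is correct and follows essentially the same route as the paper. Both arguments use a frozen-target second-moment recursion closed by induction against $K/(t+t_0)$, then the fourth-moment perturbation expansion (the linear term vanishes and the cubic term is handled by H\"older and Young) giving $v_{t+1}\le\rho_t^4v_t+C\eta_t^2u_t+C\eta_t^4$, followed by an induction against $K'/(t+t_0)^2$ that needs a contraction coefficient larger than $2$.

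The only difference is cosmetic. You use $\rho_t^2\le\rho_t$ and $\rho_t^4\le\rho_t^2$, giving coefficients $c$ and roughly $2c$ with $c=\eta_0(1-\mu)>1$, whereas the paper uses coefficients near $2(1-\mu)\eta_0$ and $4(1-\mu)\eta_0$. Note that $(1-c/s)^2=1-2c/s+c^2/s^2$, so your fourth-moment induction step closes only for $s\gtrsim c^2/(c-1)$. This is harmless, and you can fix it the way the paper does: enlarge $t_0$, or start the induction later and absorb the initial segment through $v_t\le D_x^4$, taking $K'\ge \max\{t_0,s_1\}^2D_x^4$ rather than $t_0^2D_x^4$.
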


The proof is standard for contractive stochastic approximation but is included in Appendix~\ref{app:inner-proof} for completeness.

\subsection{Uncorrected and corrected outer oracles}

Let $\mathcal F_m^{\rm out}$ denote the sigma-field generated by all randomness before the inner loop at outer step $m$; in particular, $Y_m$ is $\mathcal F_m^{\rm out}$-measurable. After the inner loop has been run, define
\[
        \mathcal G_m\defeq
        \mathcal F_m^{\rm out}\vee \sigma(\xi_{m,1},\ldots,\xi_{m,n}).
\]
Then $\bar X_m$ and $Y_m$ are $\mathcal G_m$-measurable. The outer slow-oracle samples below are fresh conditionally on $\mathcal G_m$.

After computing $\bar X_m$, the uncorrected slow oracle is
\begin{equation}\label{eq:oracle-raw}
        \widehat H^{\rm raw}_m
        \defeq \mathsf G(\bar X_m,Y_m;\zeta_{m+1}).
\end{equation}
The corrected slow oracle is
\begin{equation}\label{eq:oracle-corr}
        \widehat H^{\rm corr}_m
        \defeq
        \mathsf G(\bar X_m,Y_m;\zeta^g_{m+1})
        +P_*(Y_m)\bigl(\mathsf F(\bar X_m,Y_m;\zeta^f_{m+1})-\bar X_m\bigr),
\end{equation}
where fresh samples are used for the two stochastic terms. Conditional on the post-inner, pre-outer-sample sigma-field $\mathcal G_m$,
\[
        \E[\widehat H^{\rm raw}_m\mid\mathcal G_m]
        =H^{\rm raw}(\bar X_m,Y_m),
\]
and
\[
        \E[\widehat H^{\rm corr}_m\mid\mathcal G_m]
        =H^{\rm corr}(\bar X_m,Y_m).
\]
The bias vectors $H^{\rm raw}(\bar X_m,Y_m)-h(Y_m)$ and $H^{\rm corr}(\bar X_m,Y_m)-h(Y_m)$ are $\mathcal G_m$-measurable. By \Cref{prop:bias-order,prop:inner}, their second moments satisfy
\begin{equation}\label{eq:bias-sizes}
\begin{aligned}
        \E\norm{H^{\rm raw}(\bar X_m,Y_m)-h(Y_m)}^2
        &\le \frac{C}{n},\\
        \E\norm{H^{\rm corr}(\bar X_m,Y_m)-h(Y_m)}^2
        &\le \frac{C}{n^2}.
\end{aligned}
\end{equation}
The stochastic noise in both oracles has uniformly bounded second moment, by \Cref{ass:moments} and boundedness of $P_*$.

\subsection{Tikhonov-regularized outer recursion}

Fix an anchor $u\in\Yset$. Given a slow-oracle estimate $\widehat H_m$, define the outer update
\begin{equation}\label{eq:tikhonov-update}
        Y_{m+1}
        =\Pi_{\Yset}\left[
        Y_m+\beta\bigl(\widehat H_m-Y_m+\lambda(u-Y_m)\bigr)
        \right].
\end{equation}
The parameters $\beta$ and $\lambda$ are fixed during the horizon $N$ and chosen below as powers of $N$.

The regularized mean operator is
\[
        T_\lambda(y)\defeq \frac{h(y)+\lambda u}{1+
        \lambda}.
\]
It is a contraction with constant $(1+
\lambda)^{-1}$. Indeed, for any $y,z\in\Yset$, non-expansiveness of $h$ gives
\[
        \norm{T_\lambda(y)-T_\lambda(z)}
        =\frac{1}{1+\lambda}\norm{h(y)-h(z)}
        \le \frac{1}{1+\lambda}\norm{y-z}.
\]
Its unique fixed point $y_\lambda\in\Yset$ satisfies
\begin{equation}\label{eq:ylambda}
        h(y_\lambda)-y_\lambda+\lambda(u-y_\lambda)=0.
\end{equation}
Thus $y_\lambda$ is an $O(\lambda)$-residual point for the original map:
\begin{equation}\label{eq:regularization-bias}
        \norm{h(y_\lambda)-y_\lambda}
        =\lambda\norm{y_\lambda-u}
        \le \lambda D_y.
\end{equation}
The Tikhonov term is therefore an algorithmic way to make the non-expansive outer problem contractive while preserving a small residual for the original map.

\section{Main total-sample guarantees}\label{sec:main-rates}

The nested algorithms below use Tikhonov regularization to convert the non-expansive slow problem into a weakly contractive anchored problem. We first state the outer stability result used throughout the sample-complexity analysis. It separates four contributions: the Tikhonov residual $\lambda^2$, finite-time contraction, stochastic variance $\beta/\lambda$, and squared oracle bias $\epsilon_H^2/\lambda^2$. We then instantiate it with the uncorrected and corrected nested oracles.

\begin{theorem}[One-sample Tikhonov-KM with biased stochastic oracle]\label{thm:tikhonov-abstract}
Let $h:\Yset\to\Yset$ be non-expansive, let $u\in\Yset$, and let $(Y_m)$ be generated by \eqref{eq:tikhonov-update}. Let $\mathcal G_m$ be the sigma-field just before the outer oracle sample at step $m$, and suppose $Y_m$ and the bias vector $b_m$ are $\mathcal G_m$-measurable. Assume $0<\lambda\le1$ and, conditionally on $\mathcal G_m$,
\[
        \E[\widehat H_m\mid\mathcal G_m]=h(Y_m)+b_m,
        \qquad
        \E\!
        \left[
        \norm{\widehat H_m-\E[\widehat H_m\mid\mathcal G_m]}^2
        \mid\mathcal G_m
        \right]
        \le \sigma_H^2,
\]
and
\[
        \E\norm{b_m}^2\le \epsilon_H^2
        \qquad\text{for all }m=0,\ldots,N-1.
\]
There are constants $c,C>0$, depending only on $D_y$ and $\sigma_H$, such that if $\beta\le c\lambda$, then
\begin{equation}\label{eq:tikhonov-abstract-bound}
        \E\norm{h(Y_N)-Y_N}^2
        \le
        C\left(
        \lambda^2
        +\exp(-cN\beta\lambda)
        +\frac{\beta}{\lambda}
        +\frac{\epsilon_H^2}{\lambda^2}
        \right).
\end{equation}
\end{theorem}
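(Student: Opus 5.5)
We track the distance of the outer iterate to the regularized fixed point $y_\lambda$ of \eqref{eq:ylambda}, and then convert that distance into a residual bound for $h$. Set $d_m\defeq\E\norm{Y_m-y_\lambda}^2$.

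\textbf{Residual conversion.} For any $y\in\Yset$, write $h(y)-y = [h(y)-h(y_\lambda)] + [h(y_\lambda)-y_\lambda] + [y_\lambda-y]$. Non-expansiveness and \eqref{eq:regularization-bias} then give
\[
\norm{h(y)-y}\le 2\norm{y-y_\lambda}+\lambda D_y .
\]
Hence
\[
\E\norm{h(Y_N)-Y_N}^2\le 8d_N+2\lambda^2D_y^2 ,
\]
and it suffices to show
\[
d_N\lesssim \exp(-cN\beta\lambda)+\beta/\lambda+\epsilon_H^2/\lambda^2 .
\]

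\textbf{One-step recursion.} The fixed-point identity \eqref{eq:ylambda} says $y_\lambda=y_\lambda+\beta(h(y_\lambda)-y_\lambda+\lambda(u-y_\lambda))$, and $y_\lambda\in\Yset$. So non-expansiveness of $\Pi_\Yset$ gives
\[
\norm{Y_{m+1}-y_\lambda}\le\norm{Z_m+\beta\xi_m}, \qquad \xi_m\defeq\widehat H_m-\E[\widehat H_m\mid\mathcal G_m],
\]
where
\[
Z_m\defeq (1-\beta(1+\lambda))(Y_m-y_\lambda)+\beta(h(Y_m)-h(y_\lambda))+\beta b_m .
\]
Here $Z_m$ is $\mathcal G_m$-measurable and $\xi_m$ has conditional mean zero, so
\[
\E[\norm{Y_{m+1}-y_\lambda}^2\mid\mathcal G_m]\le\norm{Z_m}^2+\beta^2\sigma_H^2 .
\]
For the deterministic part, take $\beta(1+\lambda)\le1$. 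Non-expansiveness of $h$ then gives
\[
\norm{Z_m-\beta b_m}\le(1-\beta\lambda)\norm{Y_m-y_\lambda}.
\]
The bias is absorbed by Young's inequality, $\norm{a+\beta b}^2\le(1+\beta\lambda/2)\norm a^2+(1+2/(\beta\lambda))\beta^2\norm b^2$. This yields
\[
d_{m+1}\le(1-\beta\lambda/2)d_m+C\beta^2\sigma_H^2+C\frac{\beta}{\lambda}\epsilon_H^2 ,
\]
using $\beta\lambda\le1$. Note that only $\E\norm{b_m}^2$ enters, so $\mathcal G_m$-measurability of $b_m$ is all that is needed; no conditional bound on $b_m$ is required.

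\textbf{Unrolling.} Iterating the recursion and summing the geometric series, with $d_0\le D_y^2$, gives
\[
d_N\le D_y^2 e^{-N\beta\lambda/2}+\frac{2C}{\beta\lambda}\Bigl(\beta^2\sigma_H^2+\frac{\beta}{\lambda}\epsilon_H^2\Bigr) = D_y^2e^{-N\beta\lambda/2}+C'\frac{\beta}{\lambda}\sigma_H^2+C'\frac{\epsilon_H^2}{\lambda^2} .
\]
This is exactly the claimed bound \eqref{eq:tikhonov-abstract-bound}. The condition $\beta\le c\lambda$, with $\lambda\le1$, ensures $\beta(1+\lambda)\le 1$ and $\beta\lambda\le1$; any $c\le1/2$ suffices.

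\textbf{Main obstacle.} The delicate point is handling the bias. A naive Cauchy--Schwarz cross term $2\beta\E\langle Y_m-y_\lambda,b_m\rangle$ would cost $\epsilon_H$ rather than $\epsilon_H^2/\lambda^2$. The Young split must use weight proportional to the contraction gap $\beta\lambda$, so that the amplification $1/(\beta\lambda)$ of the per-step bias is compensated by the factor $\beta^2$. One also has to check that projection is harmless: because $y_\lambda$ is a fixed point of the unprojected mean map and lies in $\Yset$, we may compare both points before projecting.
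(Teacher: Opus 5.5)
Your proof is correct and has the same structure as the paper's: a recursion for $\E\norm{Y_m-y_\lambda}^2$ with contraction rate of order $\beta\lambda$, Young's inequality with weight proportional to $\beta\lambda$ on the bias, unrolling with $d_0\le D_y^2$, and the residual conversion $\norm{h(y)-y}\le 2\norm{y-y_\lambda}+\lambda D_y$. The one difference is the deterministic part of the one-step bound. You use that the averaged map is $(1-\beta\lambda)$-Lipschitz once $\beta(1+\lambda)\le 1$. The paper instead expands the square, uses the dissipativity inequality $\ip{y-y_\lambda}{q_\lambda(y)}\le-\lambda\norm{y-y_\lambda}^2$ together with $\norm{q_\lambda(y)}\le 3\norm{y-y_\lambda}$ (where $q_\lambda(y)=h(y)-y+\lambda(u-y)$), and needs $\beta\le c\lambda$ to absorb the resulting $C\beta^2\norm{Y_m-y_\lambda}^2$ term. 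Your argument therefore gives the bound under the weaker step-size condition $\beta(1+\lambda)\le 1$.
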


The proof is given in Appendix~\ref{app:tikhonov-proof}. We use \Cref{thm:tikhonov-abstract} below as the outer stability result for the nested and single-loop algorithms.

The rest of this section is therefore a bias-calculation exercise. The uncorrected oracle has squared bias $O(n^{-1})$, while the corrected oracle has squared bias $O(n^{-2})$. The different total-sample exponents come entirely from this change.

\subsection{Uncorrected nested TTSA: the one-quarter exponent}

Theorem~\ref{thm:uncorrected} establishes the baseline cost of the nested Tikhonov method when the slow oracle is queried at an approximate fast equilibrium without correction. The inner fast solver gives mean-square fast error $O(n^{-1})$, and because the slow map is only Lipschitz in the fast coordinate, this produces a slow-oracle bias with squared size $O(n^{-1})$. The resulting total-sample rate remains $T^{-1/4+o(1)}$.

\begin{theorem}[Nested uncorrected Tikhonov-KM]\label{thm:uncorrected}
Suppose \Cref{ass:fast-contract,ass:slow-nonexp,ass:compact,ass:moments,ass:g-lip} hold. Fix $b\in(0,3/4)$ and set
\[
        \beta_N=N^{-b},
        \qquad
        \lambda_N=N^{-b/3},
        \qquad
        n_N=\left\lceil N^{4b/3}\right\rceil.
\]
For all sufficiently large horizons $N$, run the nested algorithm \eqref{eq:inner-loop}, \eqref{eq:oracle-raw}, \eqref{eq:tikhonov-update} for $N$ outer iterations with $n_N$ inner fast samples per outer step. Then
\begin{equation}\label{eq:uncorr-N-rate}
        \E\norm{h(Y_N)-Y_N}^2
        \le
        C_b N^{-2b/3}.
\end{equation}
The constant $C_b$ may depend on the fixed problem and oracle constants, including $\mu,\sigma,D_x,D_y,L_g$, and the inner-loop stepsize constants, but not on $N$. The total number of primitive oracle calls satisfies
\[
        T_N\asymp N^{1+4b/3}.
\]
The restriction $b<3/4$ ensures that the effective Tikhonov contraction horizon diverges:
\[
        N\beta_N\lambda_N=N^{1-4b/3}\to\infty.
\]
The choice $n_N=\lceil N^{4b/3}\rceil$ then balances the uncorrected bias contribution $n_N^{-1}/\lambda_N^2$ with the Tikhonov and variance terms.
Consequently, for every $\eps>0$, by choosing $b$ sufficiently close to $3/4$ from below,
\begin{equation}\label{eq:uncorr-T-rate}
        \E\norm{h(Y_N)-Y_N}^2
        \le
        C_\eps T_N^{-1/4+\eps}.
\end{equation}
\end{theorem}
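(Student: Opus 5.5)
The plan is to instantiate \Cref{thm:tikhonov-abstract} with the uncorrected nested oracle \eqref{eq:oracle-raw} and then optimize the parameters.

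\emph{Step 1: verify the oracle hypotheses.} Take $\mathcal G_m$ as defined before \eqref{eq:oracle-raw} and set $b_m\defeq H^{\rm raw}(\bar X_m,Y_m)-h(Y_m)$. This vector is $\mathcal G_m$-measurable, and conditional unbiasedness in \Cref{ass:moments} gives $\E[\widehat H^{\rm raw}_m\mid\mathcal G_m]=h(Y_m)+b_m$. The conditional variance is at most $\sigma^2$, so we may take $\sigma_H=\sigma$. By \Cref{prop:bias-order} (using \Cref{ass:g-lip}) together with \Cref{prop:inner}, which is uniform in $m$ and in $Y_m\in\Yset$, we get
\[
\E\norm{b_m}^2\le L_c^2\,\E\norm{\bar X_m-x^*(Y_m)}^2\le \frac{L_c^2C_{\rm in}}{n_N}.
\]
So \eqref{eq:bias-sizes} holds with $\epsilon_H^2=C/n_N$. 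The only point needing care is that \Cref{prop:inner} holds conditionally on $\mathcal F^{\rm out}_m$ with a constant not depending on $Y_m$. This is what its uniformity statement provides, because the inner samples are fresh given $\mathcal F^{\rm out}_m$. Taking total expectation then gives the unconditional bound.

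\emph{Step 2: plug in the schedule.} With $\beta_N=N^{-b}$ and $\lambda_N=N^{-b/3}$, we have $\lambda_N\le1$. Also $\beta_N/\lambda_N=N^{-2b/3}\to0$, so $\beta_N\le c\lambda_N$ holds for all large $N$. Now evaluate each term of \eqref{eq:tikhonov-abstract-bound}:
\begin{itemize}
\item $\lambda_N^2=N^{-2b/3}$;
\item $\beta_N/\lambda_N=N^{-2b/3}$;
\item $\epsilon_H^2/\lambda_N^2\le C N^{-4b/3}N^{2b/3}=CN^{-2b/3}$;
\item $\exp(-cN\beta_N\lambda_N)=\exp(-cN^{1-4b/3})$.
\end{itemize}
Since $b<3/4$, the exponent $1-4b/3$ is positive, so the last term decays faster than any power. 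It is therefore at most $C_bN^{-2b/3}$ for large $N$, with a constant depending on $b$. This yields \eqref{eq:uncorr-N-rate}.

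\emph{Step 3: count samples and convert the rate.} Each outer step uses $n_N$ inner fast samples plus one slow sample, so $T_N=N(n_N+1)\asymp N^{1+4b/3}$. Hence $N\asymp T_N^{1/(1+4b/3)}$, and
\[
N^{-2b/3}\asymp T_N^{-\frac{2b/3}{1+4b/3}}=T_N^{-\frac{2b}{3+4b}}.
\]
The exponent $2b/(3+4b)$ is increasing in $b$ and tends to $(3/2)/6=1/4$ as $b\uparrow3/4$. So for any $\eps>0$ we choose $b<3/4$ with $2b/(3+4b)\ge1/4-\eps$, which gives \eqref{eq:uncorr-T-rate} with $C_\eps=C_b$.

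The main obstacle is Step 1's measurability and uniformity bookkeeping. One must check that the inner-loop bound transfers to an unconditional bound on $\E\norm{b_m}^2$ uniformly in $m$, and that the slow-oracle noise is conditionally centered given $\mathcal G_m$ rather than merely given $\mathcal F^{\rm out}_m$. Everything after that is exponent arithmetic.
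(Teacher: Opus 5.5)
Your proposal is correct and is essentially the paper's own proof. You instantiate \Cref{thm:tikhonov-abstract} with $\sigma_H=\sigma$ and $\epsilon_H^2\le C/n_N$ (from \Cref{prop:bias-order} and \Cref{prop:inner}), check that $\beta_N/\lambda_N=N^{-2b/3}\to0$, and show each of the four terms is at most $C_bN^{-2b/3}$, the exponential one being super-polynomially small since $1-4b/3>0$. You then convert via $T_N\asymp N^{1+4b/3}$ to the exponent $2b/(3+4b)\uparrow 1/4$; your added bookkeeping (the inner-loop bound conditioned on $\mathcal F^{\rm out}_m$ and the slow noise centered given $\mathcal G_m$) is a welcome detail that the paper leaves implicit.
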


The proof is given in Appendix~\ref{app:nested-rate-proofs}.

This baseline theorem shows that Tikhonov regularization alone does not remove the one-quarter total-sample scaling. The remaining bottleneck is the first-order slow-oracle bias induced by the approximate fast solve.

\subsection{Bias-corrected nested TTSA: the one-third exponent}

The corrected oracle changes exactly this term. \Cref{prop:bias-order} shows that the preconditioned residual cancels the first-order dependence on $\bar X_m-x^*(Y_m)$. Hence the same inner solver produces squared slow-oracle bias $O(n^{-2})$. Theorem~\ref{thm:corrected} shows that this deterministic bias improvement survives the stochastic Tikhonov outer recursion and yields the total-sample exponent $1/3$.

\begin{theorem}[Nested bias-corrected Tikhonov-KM]\label{thm:corrected}
Suppose \Cref{ass:fast-contract,ass:slow-nonexp,ass:compact,ass:moments,ass:smooth,ass:precond} hold. Fix $b\in(0,3/4)$ and set
\[
        \beta_N=N^{-b},
        \qquad
        \lambda_N=N^{-b/3},
        \qquad
        n_N=\left\lceil N^{2b/3}\right\rceil.
\]
For all sufficiently large horizons $N$, run the nested algorithm \eqref{eq:inner-loop}, \eqref{eq:oracle-corr}, \eqref{eq:tikhonov-update} for $N$ outer iterations with $n_N$ inner fast samples per outer step. Then
\begin{equation}\label{eq:corr-N-rate}
        \E\norm{h(Y_N)-Y_N}^2
        \le
        C_b N^{-2b/3}.
\end{equation}
The constant $C_b$ may depend on the fixed problem and oracle constants, including $\mu,\sigma,D_x,D_y,L_1,L_2$, the uniform bound on $P_*$, and the inner-loop stepsize constants, but not on $N$. The total number of primitive $\mathsf F$- and $\mathsf G$-oracle calls satisfies
\[
        T_N\asymp N^{1+2b/3}.
\]
This count treats the structured preconditioner query $y\mapsto P_*(y)$ as a separate oracle and does not include the cost of computing or estimating $P_*(y)$. If $P_*(y)$ is implemented by another stochastic procedure, that cost must be added separately; the theorem applies unchanged only when the direct product conditions in \Cref{ass:precond} hold at the stated scale. Again, $b<3/4$ ensures $N\beta_N\lambda_N=N^{1-4b/3}\to\infty$; here the corrected squared bias is $O(n_N^{-2})$, so $n_N=\lceil N^{2b/3}\rceil$ suffices.
Consequently, for every $\eps>0$, by choosing $b$ sufficiently close to $3/4$ from below,
\begin{equation}\label{eq:corr-T-rate}
        \E\norm{h(Y_N)-Y_N}^2
        \le
        C_\eps T_N^{-1/3+\eps}.
\end{equation}
\end{theorem}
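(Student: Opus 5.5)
The argument instantiates \Cref{thm:tikhonov-abstract} with the corrected oracle \eqref{eq:oracle-corr}, exactly as for \Cref{thm:uncorrected}. The only change is the bias size. The work splits into three checks.

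\textbf{The hypotheses of \Cref{thm:tikhonov-abstract}.} With $\mathcal G_m$ the post-inner sigma-field, we have
\[
\E[\widehat H^{\rm corr}_m\mid\mathcal G_m]=H^{\rm corr}(\bar X_m,Y_m),
\]
so the bias is $b_m=H^{\rm corr}(\bar X_m,Y_m)-h(Y_m)$, which is $\mathcal G_m$-measurable. For the conditional variance, the oracle noise is $(\mathsf G-g)+P_*(Y_m)(\mathsf F-f)$ with independent fresh samples. By \Cref{ass:moments}, its conditional second moment is at most $2\sigma^2(1+\sup_y\norm{P_*(y)}^2)$. The supremum is finite because $\norm{P_*(y)}\le L_1/(1-\mu)$ by \Cref{ass:smooth} and the bound $\norm{A(y)^{-1}}\le(1-\mu)^{-1}$. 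This gives $\sigma_H^2$ uniformly in $m$ and $N$.

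\textbf{The bias bound.} \Cref{prop:bias-order} gives $\norm{b_m}\le L_c\norm{\bar X_m-x^*(Y_m)}^2$. Then $\E\norm{b_m}^2\le L_c^2\E\norm{\bar X_m-x^*(Y_m)}^4$, and the fourth-moment part of \Cref{prop:inner} gives $\epsilon_H^2\le L_c^2C_{\rm in}n_N^{-2}$. Two small points must be checked here.
\begin{itemize}[leftmargin=*]
\item \Cref{prop:inner} is uniform over $Y_m\in\Yset$, so it applies conditionally on $\mathcal F^{\rm out}_m$ and then in expectation. The fresh inner samples are independent of the past, which justifies this.
\item The constant $L_c$ in \Cref{prop:bias-order} should be uniform on $\Xset\times\Yset$. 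It comes from the Taylor remainders $\tfrac{L_2}{2}\norm e^2$ for $f$ and $g$ together with the uniform bound on $P_*$, and $\Xset\times\Yset\subset\mathcal U$ is convex.
\end{itemize}
This fourth-moment step is the one genuinely new ingredient relative to the uncorrected case. It is where the $O(n^{-2})$ squared bias is earned, so I expect it to be the main point of care rather than a real obstacle.

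\textbf{Plugging in the parameters.} With $\beta_N=N^{-b}$ and $\lambda_N=N^{-b/3}$, we have $\beta_N\le c\lambda_N$ for large $N$ because $b>b/3$. Substituting into \eqref{eq:tikhonov-abstract-bound}, the four terms are:
\begin{itemize}[leftmargin=*]
\item $\lambda^2=N^{-2b/3}$;
\item $\beta/\lambda=N^{-2b/3}$;
\item $\epsilon_H^2/\lambda^2\lesssim n_N^{-2}N^{2b/3}\le N^{-4b/3}N^{2b/3}=N^{-2b/3}$;
\item $\exp(-cN^{1-4b/3})$, which is superpolynomially small since $b<3/4$ and hence is $O(N^{-2b/3})$.
\end{itemize}
Together these give \eqref{eq:corr-N-rate}. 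The primitive oracle count is $N(n_N+2)\asymp N^{1+2b/3}$, since $n_N\ge1$ dominates the two outer samples per step; this is $T_N$.

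\textbf{Conversion to a rate in $T_N$.} We have $N\asymp T_N^{1/(1+2b/3)}$, so the bound becomes $C_bT_N^{-(2b/3)/(1+2b/3)}=C_bT_N^{-2b/(3+2b)}$. At $b=3/4$ the exponent $2b/(3+2b)$ equals $1/3$, and it is continuous and increasing in $b$. So for any $\eps>0$, choosing $b<3/4$ close enough gives exponent at least $1/3-\eps$, hence \eqref{eq:corr-T-rate} with $C_\eps=C_b$ for that fixed $b$. Small horizons are absorbed into the constant by boundedness, since $\norm{h(y)-y}\le D_y$.
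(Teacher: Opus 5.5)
Your proposal is correct and follows the paper's proof essentially step for step. You instantiate \Cref{thm:tikhonov-abstract} with the corrected oracle, bound the variance using the uniform bound on $P_*$, and obtain the $O(n_N^{-2})$ squared bias from \Cref{prop:bias-order} together with the fourth-moment part of \Cref{prop:inner}; you then evaluate the four terms and convert $N^{-2b/3}$ into $T_N^{-2b/(3+2b)}$, with the extra checks you flag being exactly what the paper packages into \eqref{eq:bias-sizes} and Appendix~\ref{app:oracle-details}.
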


The proof is given in Appendix~\ref{app:nested-rate-proofs}.

Thus the $T^{-1/3+o(1)}$ theorem is not merely an intermediate rate. It provides a total-oracle-complexity certificate that the proposed correction removes first-order fast-tracking leakage. The remaining limitation is the nested architecture itself: a new inner fast solve is performed for each outer slow point. \Cref{sec:single-loop-half} asks whether that cost can be reduced by tracking both the fast equilibrium and the leakage preconditioner online.

\begin{remark}[Why the comparison is fair]
Both \Cref{thm:uncorrected,thm:corrected} count all inner fast samples and all outer slow/residual samples. The improvement from $T^{-1/4+o(1)}$ to $T^{-1/3+o(1)}$ is therefore a total-oracle comparison. It is caused exactly by the change from first-order slow bias, $\epsilon_H^2\asymp n^{-1}$, to second-order slow bias, $\epsilon_H^2\asymp n^{-2}$.
\end{remark}

\section{A single-loop route to the one-half exponent with a learned preconditioner}\label{sec:single-loop-half}

The nested corrected theorem leaves open whether the inner-loop cost must be paid independently at every outer point. This section shows that, in a smooth derivative-oracle model, it need not be. A single-loop algorithm can track the moving fast equilibrium and the moving leakage preconditioner while simultaneously running the Tikhonov slow update. This changes the cost accounting: instead of spending $n_N$ fast samples per outer step, the method spends only $O(1)$ primitive samples per iteration.

Algorithm \eqref{eq:online-precond-algo} is a single-loop method. It keeps one running fast iterate $X_k$, one running slow iterate $Y_k$, and one running preconditioner iterate $P_k$; each iteration updates all three objects once. The improvement to $T^{-1/2+o(1)}$ is not obtained by variance reduction. The stochastic slow oracle still has constant conditional variance, and the Tikhonov analysis still pays the usual $\beta/\lambda$ variance term. The gain comes from avoiding repeated inner solves: the fast state and the leakage preconditioner are tracked online as $Y_k$ moves.

This is still a structured-oracle model. The theorem requires differentiability in the fast coordinate, compact stabilization, moving-target regularity of $x^*$ and $P_*$, bounded moments, and unbiased derivative samples. Under these conditions, the proof in Appendix~\ref{app:single-loop-proofs} shows that the fast equilibrium and leakage preconditioner can be tracked online at the accuracy required by the Tikhonov outer recursion.

For a horizon $N$, choose constants $\alpha,\gamma,\beta,\lambda>0$ and run
\begin{equation}\label{eq:online-precond-algo}
\begin{aligned}
        X_{k+1}
        &=\Pi_{\Xset}\left[
        X_k+\alpha\bigl(\mathsf F(X_k,Y_k;\xi^x_{k+1})-X_k\bigr)
        \right],\\
        P_{k+1}
        &=\Pi_{\Pset}\left[
        P_k+\gamma\bigl(\mathsf C(X_k,Y_k;\omega^C_{k+1})
        -P_k\mathsf A(X_k,Y_k;\omega^A_{k+1})\bigr)
        \right],\\
        \widehat H^{\rm on}_{k+1}
        &=\mathsf G(X_k,Y_k;\zeta^g_{k+1})
        +P_k\bigl(\mathsf F(X_k,Y_k;\zeta^f_{k+1})-X_k\bigr),\\
        Y_{k+1}
        &=\Pi_{\Yset}\left[
        Y_k+\beta\bigl(\widehat H^{\rm on}_{k+1}-Y_k+\lambda(u-Y_k)\bigr)
        \right].
\end{aligned}
\end{equation}
All samples in the four lines are conditionally independent given the past. Each iteration uses a constant number of primitive samples: one fast update sample, one slow sample, one residual sample, and one pair of derivative samples.

Define
\begin{equation}\label{eq:lambda-theta-scales}
        \Lambda\defeq \alpha+\left(\frac{\beta}{\alpha}\right)^2,
        \qquad
        \Theta\defeq \gamma+\Lambda+\left(\frac{\beta}{\gamma}\right)^2 .
\end{equation}
For the finite-horizon theorem we also use the burn-in time
\begin{equation}\label{eq:burnin}
        K_{\rm burn}(N)
        \defeq
        \left\lceil
        \frac{C_{\rm burn}}{\alpha\wedge\gamma}\log N
        \right\rceil,
\end{equation}
where $C_{\rm burn}$ is a sufficiently large numerical constant depending only on the fixed problem constants. The burn-in is needed because the preconditioner tracker is driven by derivative observations at $X_k$, and early fast-tracking transients feed into the preconditioner recursion.

The technical tracking estimates are stated and proved in Appendix~\ref{app:single-loop-proofs}. Their combined content is as follows: after the burn-in time, the fast tracking error has second and fourth moments of order $\Lambda$ and $\Lambda^2$, the preconditioner error has second and fourth moments of order $\Theta$ and $\Theta^2$, and the corrected online oracle has squared bias $O(\Lambda^2+\Theta\Lambda)$. These estimates show that the single-loop algorithm can maintain the same second-order corrected slow-oracle bias without restarting an inner solver. The next theorem balances this online tracking bias with the Tikhonov variance and anchoring terms.

\begin{theorem}[Single-loop Tikhonov TTSA with learned preconditioner]\label{thm:single-loop-half}
Suppose \Cref{ass:fast-contract,ass:slow-nonexp,ass:compact,ass:moments,ass:smooth,ass:online-precond,ass:moving-target} hold. For every sufficiently small $\eps>0$, choose
\[
        \alpha=N^{-1/2+\eps},
        \qquad
        \gamma=N^{-1/2+\eps},
        \qquad
        \beta=N^{-3/4+3\eps},
        \qquad
        \lambda=N^{-1/4+\eps},
\]
and run \eqref{eq:online-precond-algo} for $N$ iterations, with $N$ large enough that the parameters are in $(0,1]$ and $\beta\le c\lambda$. Then
\[
        \E\norm{h(Y_N)-Y_N}^2
        \le
        C_\eps N^{-1/2+6\eps}.
\]
Since each iteration uses $O(1)$ primitive oracle calls, equivalently
\[
        \E\norm{h(Y_N)-Y_N}^2
        \le
        C_\eps T^{-1/2+6\eps},
\]
where $T$ is the total number of primitive oracle calls.
\end{theorem}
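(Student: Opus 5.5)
The plan is to reduce \Cref{thm:single-loop-half} to the abstract outer bound \Cref{thm:tikhonov-abstract}, applied to the slow recursion started after the burn-in time $K_{\rm burn}(N)$. I take $\mathcal G_k$ to be the sigma-field generated by the past together with $(X_k,P_k,Y_k)$, excluding the fresh slow samples $\zeta^g_{k+1},\zeta^f_{k+1}$. Conditionally on $\mathcal G_k$, the online oracle has mean
\[
H^{\rm on}(X_k,Y_k,P_k)=g(X_k,Y_k)+P_k\bigl(f(X_k,Y_k)-X_k\bigr).
\]
Its conditional variance is uniformly bounded by \Cref{ass:moments} and $\norm{P_k}\le P_{\max}$. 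So it suffices to bound $\E\norm{b_k}^2$ for the bias
\[
b_k=H^{\rm on}(X_k,Y_k,P_k)-h(Y_k)=\bigl[H^{\rm corr}(X_k,Y_k)-h(Y_k)\bigr]+\bigl(P_k-P_*(Y_k)\bigr)\bigl(f(X_k,Y_k)-X_k\bigr).
\]
By \Cref{prop:bias-order}, the first bracket is at most $L_c\norm{e_k}^2$, where $e_k=X_k-x^*(Y_k)$. For the second term, $\norm{f(X_k,Y_k)-X_k}\le(1+\mu)\norm{e_k}$. Cauchy--Schwarz then gives
\[
\E\norm{b_k}^2\lesssim \E\norm{e_k}^4+\bigl(\E\norm{P_k-P_*(Y_k)}^4\bigr)^{1/2}\bigl(\E\norm{e_k}^4\bigr)^{1/2}.
\]
This is exactly why fourth moments of both trackers are needed.

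\textbf{Fast tracker.} I would derive a fourth-moment version of \eqref{eq:raw-fast-tracking-scale-recursion} with constant step $\alpha$. The Lipschitz target $x^*$ from \Cref{ass:moving-target} moves by at most $L_*\norm{Y_{k+1}-Y_k}\le L_*\beta B$ almost surely, using the a.s.\ bound on the slow increment. Expanding $\norm{e_{k+1}}^4$, using contraction and the $q=4$ moments, and applying Young's inequality to the drift cross terms gives
\[
\E\norm{e_{k+1}}^4\le(1-c\alpha)\E\norm{e_k}^4+C\alpha\Lambda^2 .
\]
Hence after $O(\alpha^{-1}\log N)$ steps the fourth moment is $O(\Lambda^2+N^{-10})$; the initial condition is bounded because $D_x<\infty$.

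\textbf{Preconditioner tracker (the main obstacle).} The mean field of the $P$ recursion is $C_x(X_k,Y_k)-P A_x(X_k,Y_k)$, whose root at the fast equilibrium is $P_*(Y_k)$. I would write the mean field as
\[
C(Y_k)-PA(Y_k)+\rho_k,\qquad \norm{\rho_k}\le L_2(1+P_{\max})\norm{e_k}.
\]
The difficulty is that $-PA$ is not obviously a contraction in Frobenius norm. However, $\ip{\Delta A}{\Delta}_F\ge(1-\mu)\norm{\Delta}_F^2$ for $\Delta=P-P_*$, because $\norm{\nabla_x f}\le\mu$ gives $\ip{v}{(I-\nabla_x f)v}\ge(1-\mu)\norm v^2$ row-wise. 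So $\Delta\mapsto\Delta-\gamma\Delta A$ contracts by $1-c\gamma$. Projection onto $\Pset$ is nonexpansive and $P_*\in\Pset$, and $P_*$ is $L_P$-Lipschitz in $y$, so the target drifts by $O(\beta)$. The same fourth-moment computation then yields
\[
\E\norm{P_k-P_*(Y_k)}^4\lesssim \gamma^2+(\beta/\gamma)^4+\sup_{j}\E\norm{e_j}^4,
\]
which is $O(\Theta^2)$. The $\rho_k$ term is the coupling to early fast transients, and this is where the burn-in is used. I would handle it by running the $P$ recursion from time $K_{\rm burn}/2$, once the fast error is already $O(\Lambda)$. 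Together these give $\E\norm{b_k}^2\lesssim\Lambda^2+\Theta\Lambda$ for $k\ge K_{\rm burn}$.

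\textbf{Plugging in.} Apply \Cref{thm:tikhonov-abstract} to the recursion from $K_{\rm burn}$ to $N$. This has $N-K_{\rm burn}\ge N/2$ steps, and the initial distance is bounded by $D_y$. With $\alpha=\gamma=N^{-1/2+\eps}$ and $\beta=N^{-3/4+3\eps}$, we have $(\beta/\alpha)^2=N^{-1/2+4\eps}$, so $\Lambda,\Theta\lesssim N^{-1/2+4\eps}$. The bias is then $\epsilon_H^2\lesssim N^{-1+8\eps}$, and dividing by $\lambda^2=N^{-1/2+2\eps}$ gives $N^{-1/2+6\eps}$. The remaining terms are $\lambda^2=N^{-1/2+2\eps}$ and $\beta/\lambda=N^{-1/2+2\eps}$. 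The contraction term is $\exp(-cN\beta\lambda/2)=\exp(-cN^{4\eps}/2)$, which is negligible. Since each iteration uses $O(1)$ primitive calls, $T\asymp N$, and the total-sample form of the bound follows.
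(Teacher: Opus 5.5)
Your proposal is correct and follows the paper's proof essentially step for step. It uses the same split of the online bias into the second-order corrected term plus $(P_k-P_*(Y_k))(f(X_k,Y_k)-X_k)$, controlled by Cauchy--Schwarz with fourth moments; the same fast and preconditioner tracking recursions, including the row-wise coercivity of $E\mapsto EA(Y_k)$ and the $O(\beta)$ target drift from the almost-sure increment bound; and the same application of \Cref{thm:tikhonov-abstract} after burn-in. The only cosmetic difference is how you handle the early fast transients feeding the $P$ recursion: you start that analysis at $K_{\rm burn}/2$, whereas the paper carries them through the convolution term $\Phi_k$ and shows $\Phi_k\le Ce^{-c(\alpha\wedge\gamma)k}$.
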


The proof is given in Appendix~\ref{app:single-loop-proofs}.

\begin{remark}[Comparison with the nested theorem]
The $1/2$ exponent should be interpreted as an online-tracking improvement over the nested architecture; the key change is that the algorithm no longer spends $n_N$ fresh fast samples per slow update. Unlike the nested corrected theorem, this route does not assume direct access to $P_*(Y_k)$; it learns the leakage preconditioner online from derivative-oracle samples.
\end{remark}

\section{Conclusion}
This paper isolates the source of the one-quarter residual exponent in raw non-expansive two-time-scale stochastic approximation. The fixed-schedule KM lower bound shows that the underlying non-expansive slow dynamics already have a sharp residual scale for a prescribed slow schedule, while the raw fast recursion introduces first-order fast-manifold leakage into the slow oracle. If the usual separation condition $\beta_k^2/\alpha_k^3\lesssim1$ is violated, the moving-target lag itself enters as a first-order inexact-KM perturbation and remains rate-limiting.

The residual-preconditioned oracle changes this leakage from first order to second order. In a nested Tikhonov-KM scheme, this gives a total primitive-sample improvement from $T^{-1/4+o(1)}$ to $T^{-1/3+o(1)}$. In the richer derivative-oracle model, online tracking of both the fast equilibrium and the leakage preconditioner avoids repeated inner solves and yields the $T^{-1/2+o(1)}$ exponent.

\bibliographystyle{plain}
\bibliography{refs}
\newpage

\appendix

\section{Proofs of Theorem~\ref{thm:KM-lower} and Corollary~\ref{cor:KM-poly}}\label{app:km-proof}

This appendix contains the elementary rotation construction behind \Cref{thm:KM-lower} and its polynomial-schedule consequence.

\begin{proof}[Proof of \Cref{thm:KM-lower}]
Take a one-dimensional, decoupled fast recursion with $f(x,y)=0$ and compact fast set $\Xset=\{0\}$. Then $f(\cdot,y)$ is a contraction with constant $0$, and $x^*(y)=0$ for all $y$. This construction isolates the slow dynamics by entirely eliminating any first-order fast-tracking leakage.

Let the slow set $\Yset$ be the closed unit disk in $\R^2$. For a planar rotation $R_\theta$ by angle $\theta$, define the full slow mean field by
\[
        g(x,y)=R_\theta y,
        \qquad x\in\Xset,\ y\in\Yset .
\]
Then the reduced slow map strictly applies the rotation:
\[
        h(y)=g(x^*(y),y)=R_\theta y.
\]
The disk is invariant under $R_\theta$. If we assume $\theta\notin2\pi\mathbb Z$, the map $h$ is non-expansive on $\Yset$ and has a unique fixed point at the origin, giving $Y^*=\{0\}$.

To track the trajectory multiplicatively, identify $\R^2$ with the complex plane $\mathbb C$ and start at the boundary of the slow set by taking $\norm{Y_0}=1$. The exact Krasnoselskii-Mann (KM) recursion $Y_{k+1} = (1-\beta_k)Y_k + \beta_k h(Y_k)$ takes the form:
\[
        Y_{k+1} = (1-\beta_k)Y_k + \beta_k e^{i\theta} Y_k = \bigl(1-\beta_k+\beta_k e^{i\theta}\bigr)Y_k.
\]
Unrolling this recursion from $k=0$ to $N-1$ yields the final iterate:
\[
        Y_N=\prod_{k=0}^{N-1}\bigl(1-\beta_k+\beta_k e^{i\theta}\bigr)Y_0.
\]
Define $\delta$ as the squared chordal distance of the rotation:
\[
        \delta\defeq |1-e^{i\theta}|^2 = (1-\cos\theta)^2 + \sin^2\theta = 2 - 2\cos\theta = 4\sin^2(\theta/2).
\]
We can now isolate the squared magnitude of the complex multiplier at each step. By expanding the squared modulus $|z|^2 = z \bar{z}$, we have:
\begin{align*}
        |1-\beta_k+\beta_k e^{i\theta}|^2 
        &= (1-\beta_k + \beta_k\cos\theta)^2 + (\beta_k\sin\theta)^2 \\
        &= (1-\beta_k)^2 + 2\beta_k(1-\beta_k)\cos\theta + \beta_k^2(\cos^2\theta + \sin^2\theta) \\
        &= 1 - 2\beta_k + \beta_k^2 + 2\beta_k(1-\beta_k)\cos\theta + \beta_k^2 \\
        &= 1 - 2\beta_k(1-\beta_k) + 2\beta_k(1-\beta_k)\cos\theta \\
        &= 1 - \beta_k(1-\beta_k)\bigl[2(1-\cos\theta)\bigr].
\end{align*}
Substituting $\delta = 2(1-\cos\theta)$, this becomes:
\[
        |1-\beta_k+\beta_k e^{i\theta}|^2
        =1-\delta\beta_k(1-\beta_k).
\]
Because $\norm{Y_0}^2 = 1$, the squared norm of the $N$-th iterate is exactly the product of these squared multipliers:
\[
        \norm{Y_N}^2
        =\prod_{k=0}^{N-1}\bigl(1-\delta\beta_k(1-\beta_k)\bigr).
\]
Next, we evaluate the mean-square residual. The operator $(R_\theta-I)$ corresponds to multiplication by $(e^{i\theta}-1)$ in the complex plane. Its squared operator norm is exactly $|e^{i\theta}-1|^2 = \delta$. Thus,
\[
        \norm{h(Y_N)-Y_N}^2
        =\norm{(R_\theta-I)Y_N}^2
        =|e^{i\theta}-1|^2 \norm{Y_N}^2
        =\delta\norm{Y_N}^2.
\]
We now establish the minimax finite-horizon bound. We are free to select an adversarial rotation angle $\theta$ to actively penalize the chosen horizon $N$. We tune $\theta$ such that:
\[
        \delta=\frac{1}{2B_N}.
\]
This selection is valid because the condition $B_N\ge1/8$ implies $\delta\le4$, keeping it within the geometric bounds of $4\sin^2(\theta/2) \in [0,4]$. Substituting this choice of $\delta$ fixes the sum of the sequence:
\[
        \sum_{k=0}^{N-1}\delta\beta_k(1-\beta_k)=\delta B_N = \frac12.
\]
Since $\beta_k \in (0,1)$ and $\delta > 0$, all summands $a_k \defeq \delta\beta_k(1-\beta_k)$ are strictly nonnegative. Because their sum is $1/2 \le 1$, we have $a_k \in [0,1]$. This allows us to apply the elementary Weierstrass product inequality $\prod_k(1-a_k)\ge1-\sum_k a_k$ to bound the final norm from below:
\[
        \prod_{k=0}^{N-1}\bigl(1-\delta\beta_k(1-\beta_k)\bigr)\ge 1 - \sum_{k=0}^{N-1}\delta\beta_k(1-\beta_k) = 1 - \frac12 = \frac12.
\]
Consequently, substituting this lower bound back into the residual equation provides the final strict limit:
\[
        \norm{h(Y_N)-Y_N}^2
        \ge \delta \left(\frac12\right)
        =\left(\frac{1}{2B_N}\right)\frac12
        =\frac{1}{4B_N}.
\]
\end{proof}

\begin{proof}[Proof of \Cref{cor:KM-poly}]
By $\beta_k\asymp (k+1)^{-b}$ there are constants $0<c_1\le c_2<\infty$ and an index $k_0$ such that
\[
        c_1(k+1)^{-b}\le \beta_k\le c_2(k+1)^{-b},
        \qquad k\ge k_0 .
\]
Since $b>0$, $\beta_k\to0$. After increasing $k_0$ if necessary, we may assume $\beta_k\le 1/2$ for all $k\ge k_0$. Hence
\[
        \frac12\beta_k\le \beta_k(1-\beta_k)\le \beta_k,
        \qquad k\ge k_0 .
\]
Consequently
\[
        B_N=\sum_{k=0}^{N-1}\beta_k(1-\beta_k)
        \asymp \sum_{k=1}^{N} k^{-b}.
\]
For $0<b<1$, the integral comparison
\[
        \int_1^N x^{-b}\,dx
        \le \sum_{k=1}^N k^{-b}
        \le 1+\int_1^N x^{-b}\,dx
\]
shows that $\sum_{k=1}^N k^{-b}\asymp N^{1-b}$. Thus $B_N\asymp N^{1-b}$ and, in particular, $B_N\ge1/8$ for all sufficiently large $N$. Applying \Cref{thm:KM-lower},
\[
        \norm{h(Y_N)-Y_N}^2
        \ge \frac{1}{4B_N}
        \ge cN^{-(1-b)}
\]
for a constant $c>0$ independent of $N$.
\end{proof}

\section{Proofs of Propositions~\ref{prop:raw-separation-source} and~\ref{prop:raw-lag-dominates-no-win}}\label{app:raw-separation-proof}

This appendix proves \Cref{prop:raw-separation-source,prop:raw-lag-dominates-no-win}. The argument is included to make explicit what the standard raw separation condition does and does not mean. The condition $\beta_k^2/\alpha_k^3\lesssim1$ is the balance that keeps moving-target lag at the intrinsic fast statistical scale. If this balance is not imposed, the lag does not disappear; in the raw Lipschitz-only recursion it appears as a first-order slow perturbation and remains rate-limiting.

\begin{proof}[Proof of \Cref{prop:raw-separation-source}]
We first derive the generic local tracking scale. Let
\[
        e_k=X_k-x^*(Y_k),\qquad x^*_k=x^*(Y_k),
\]
and let $\bar X_{k+1}$ denote the fast update that uses $Y_k$ before comparing to the next target. Since $x^*_k\in\Xset$ and projection onto a closed convex set is non-expansive,
\[
        \norm{\bar X_{k+1}-x^*_k}
        \le
        \norm{X_k+\alpha_k(\mathsf F(X_k,Y_k;\xi_{k+1})-X_k)-x^*_k}.
\]
Writing
\[
        \Delta_{k+1}=\mathsf F(X_k,Y_k;\xi_{k+1})-f(X_k,Y_k),
\]
we have, conditionally on the past,
\[
        \bar X_{k+1}-x^*_k
        = e_k+\alpha_k(f(X_k,Y_k)-X_k)+\alpha_k\Delta_{k+1}.
\]
The deterministic part satisfies
\[
\begin{aligned}
        e_k+\alpha_k(f(X_k,Y_k)-X_k)
        &= (1-\alpha_k)e_k
           +\alpha_k(f(X_k,Y_k)-f(x^*_k,Y_k)),
\end{aligned}
\]
and therefore, by the $\mu$-contraction of $f(\cdot,Y_k)$,
\[
        \norm{e_k+\alpha_k(f(X_k,Y_k)-X_k)}
        \le (1-(1-\mu)\alpha_k)\norm{e_k}.
\]
Using conditional unbiasedness and the conditional second-moment bound of the fast noise, for all sufficiently small $\alpha_k$,
\begin{equation}\label{eq:appendix-frozen-target}
        \E\bigl[\norm{\bar X_{k+1}-x^*_k}^2\mid\F_k\bigr]
        \le (1-c_0\alpha_k)\norm{e_k}^2+C\alpha_k^2 .
\end{equation}
The actual tracking error after the slow update is
\[
        e_{k+1}=X_{k+1}-x^*(Y_{k+1})
        = (X_{k+1}-x^*_k)+(x^*_k-x^*(Y_{k+1})).
\]
Under the standard moving-target regularity used in raw finite-time TTSA estimates, $x^*$ is Lipschitz and the slow increment satisfies
\begin{equation}\label{eq:slow-increment-ms}
        \E\bigl[\norm{Y_{k+1}-Y_k}^2\mid \F_k\bigr]\le C\beta_k^2.
\end{equation}
For example, \eqref{eq:slow-increment-ms} follows from bounded or uniformly square-integrable slow updates. Hence
\[
        \E\bigl[\norm{x^*(Y_{k+1})-x^*_k}^2\mid\F_k\bigr]\le C\beta_k^2 .
\]
Apply the inequality
\[
        \norm{a+d}^2\le (1+\eta)\norm a^2+(1+\eta^{-1})\norm d^2
\]
with $\eta=c_0\alpha_k/4$. Combining it with \eqref{eq:appendix-frozen-target} gives, after decreasing $c_0$ and increasing $C$,
\begin{equation}\label{eq:appendix-moving-target-recursion}
        u_{k+1}\le (1-c\alpha_k)u_k+C\alpha_k^2+C\frac{\beta_k^2}{\alpha_k},
        \qquad
        u_k\defeq \E\norm{e_k}^2 .
\end{equation}
For constant stepsizes, the steady-state consequence of \eqref{eq:appendix-moving-target-recursion} is immediate:
\[
        u_k\lesssim \alpha+\frac{\beta^2}{\alpha^2}
\]
after a transient of order $1/\alpha$. For regular polynomial stepsizes, the same local scale follows by the usual comparison argument on windows of length proportional to $1/\alpha_k$: over such a window the stepsizes change by only constant factors, while the multiplicative factor $\prod(1-c\alpha_i)$ contracts by a fixed amount. This proves the upper tracking scale used in \Cref{subsec:raw-separation-origin}.

We next show that both pieces of this scale are sharp. The stochastic floor $\alpha$ is already present when the slow variable is fixed. Take the scalar recursion
\[
        X_{k+1}=(1-\rho\alpha)X_k+\alpha W_{k+1},
        \qquad 0<\rho\le1,
\]
with $\E W_{k+1}=0$ and $\E W_{k+1}^2=\sigma^2>0$. This is the fast recursion for the linear contraction $f(x,0)=(1-\rho)x$ with fixed target $x^*(0)=0$. In stationarity,
\[
        \E X_k^2
        = \frac{\alpha^2\sigma^2}{1-(1-\rho\alpha)^2}
        = \frac{\alpha\sigma^2}{2\rho-\rho^2\alpha}
        = \Theta(\alpha).
\]
Thus the fast statistical term cannot be improved in general.

It remains to show that the moving-target term $(\beta/\alpha)^2$ is also real and can occur without any noise. Work in the complex plane, identified with $\R^2$. Fix a rotation angle $\theta\in(0,\pi]$ and define
\[
        h(y)=e^{i\theta}y,
        \qquad
        f(x,y)=(1-\rho)x+\rho y,
        \qquad 0<\rho\le1.
\]
Then $f(\cdot,y)$ is a contraction with constant $1-\rho$, and $x^*(y)=y$. Consider the exact KM slow update with a constant slow stepsize $\beta$:
\[
        Y_{k+1}=mY_k,
        \qquad
        m\defeq 1+\beta(e^{i\theta}-1),
        \qquad
        Y_0=1.
\]
The noiseless fast update with constant stepsize $\alpha$ is
\[
        X_{k+1}=(1-\rho\alpha)X_k+\rho\alpha Y_k.
\]
Let $a=1-\rho\alpha$ and $e_k=X_k-Y_k$. Starting from $e_0=0$, we have
\[
\begin{aligned}
        e_{k+1}
        &= X_{k+1}-Y_{k+1}                                      \\
        &= (1-\rho\alpha)X_k+\rho\alpha Y_k-mY_k                 \\
        &= a e_k+(1-m)Y_k .
\end{aligned}
\]
Since $Y_k=m^k$, solving this linear recursion gives
\begin{equation}\label{eq:exact-moving-lag}
        e_k=(1-m)\sum_{j=0}^{k-1}a^{k-1-j}m^j
        =(1-m)\frac{m^k-a^k}{m-a} .
\end{equation}
Assume $0<\beta\le c_\theta\alpha$, where $c_\theta>0$ is sufficiently small depending only on $\theta$ and $\rho$, and choose $k_\alpha=\lceil C_0/\alpha\rceil$ with $C_0$ sufficiently large. Then $a^{k_\alpha}\le 1/4$. Because $|m|\le1$ and $m^{k}-1=(m-1)\sum_{j=0}^{k-1}m^j$, we have $|m^k-1|\le k|m-1|$. Also $\beta k_\alpha\le 2c_\theta C_0$ when $\alpha$ is small. Choosing $c_\theta$ smaller if necessary therefore ensures $|m^{k_\alpha}-1|\le1/4$. Hence
\[
        |m^{k_\alpha}-a^{k_\alpha}|\ge \frac12.
\]
Moreover,
\[
        |1-m|=\beta|1-e^{i\theta}|,
        \qquad
        |m-a|=|\rho\alpha+\beta(e^{i\theta}-1)|\le C\alpha .
\]
Substituting these estimates into \eqref{eq:exact-moving-lag} yields
\[
        |e_{k_\alpha}|
        \ge c\frac{\beta}{\alpha},
        \qquad
        |e_{k_\alpha}|^2
        \ge c\left(\frac{\beta}{\alpha}\right)^2 .
\]
Thus the deterministic lag behind a non-expansive KM-generated moving target has precisely the order $\beta/\alpha$ in norm. This proves that the moving-target contribution in \eqref{eq:appendix-moving-target-recursion} cannot be removed from a worst-case raw tracking estimate.

Finally, the transfer from fast tracking error to raw slow bias is also sharp under a Lipschitz-only assumption. For the same reduced map $h$ and equilibrium $x^*(y)=y$, define locally, or globally on the compact stabilized region,
\[
        g_L(x,y)=h(y)+L(x-y)
\]
with a fixed nonzero linear map $L$. Then $g_L(x^*(y),y)=h(y)$, but
\[
        g_L(X_k,Y_k)-h(Y_k)=L(X_k-x^*(Y_k)).
\]
Thus the raw slow-oracle bias can be exactly first order in the fast tracking error. Combining this observation with the two lower-bound examples above gives the worst-case raw squared perturbation scale
\[
        \alpha+\left(\frac{\beta}{\alpha}\right)^2.
\]
To keep the moving-target part from exceeding the intrinsic fast statistical scale $\alpha$, one must require
\[
        \left(\frac{\beta}{\alpha}\right)^2\lesssim\alpha,
\]
which is equivalent to $\beta^2/\alpha^3\lesssim1$. This proves \Cref{prop:raw-separation-source}.
\end{proof}

\begin{proof}[Proof of \Cref{prop:raw-lag-dominates-no-win}]
The point of the proposition is only to record the exponent arithmetic once the moving-target lag is allowed to dominate. The raw tracking scale in \Cref{prop:raw-separation-source} gives a root-mean-square lag of order
\[
        \eta_k\asymp \frac{\beta_k}{\alpha_k}\asymp (k+1)^{-(b-a)} .
\]
Because the raw slow oracle is only Lipschitz in the fast coordinate, the slow update can see an inexact-KM perturbation of this same first order. We recall the elementary inexact-KM algebra to make the weighting explicit. Consider a deterministic perturbed KM step
\[
        Y_{k+1}=Y_k+\beta_k\bigl(p(Y_k)+d_k\bigr),
        \qquad p(y)=h(y)-y,
\]
with $\norm{d_k}\lesssim\eta_k$. Let $y^*\in\Fix(h)$ and let
\[
        Z_{k+1}=Y_k+\beta_k p(Y_k)=(1-\beta_k)Y_k+\beta_k h(Y_k)
\]
be the exact KM point. The standard KM identity for a non-expansive $h$ gives
\[
        \norm{Z_{k+1}-y^*}^2
        \le
        \norm{Y_k-y^*}^2
        -\beta_k(1-\beta_k)\norm{p(Y_k)}^2 .
\]
Since the iterates are stabilized on a compact set, expansion of
$Y_{k+1}=Z_{k+1}+\beta_k d_k$ gives
\[
        \norm{Y_{k+1}-y^*}^2
        \le
        \norm{Z_{k+1}-y^*}^2
        +C\beta_k\norm{d_k}+\beta_k^2\norm{d_k}^2 .
\]
Summing from $k=0$ to $N-1$ and using that $\beta_k\eta_k\to0$ when $a<b$ shows that the perturbation contributes, up to constants and lower-order quadratic terms, the weighted scale
\[
        E_N^{\rm lag}
        =
        \frac{\sum_{k<N}\beta_k\eta_k}{\sum_{k<N}\beta_k(1-\beta_k)} .
\]
This calculation is the deterministic part of the usual inexact-KM residual estimate; here we use it only as a scale diagnostic for the raw lag-dominated regime. For $b\in(0,1)$, the denominator satisfies
\[
        \sum_{k<N}\beta_k(1-\beta_k)\asymp \sum_{k<N}(k+1)^{-b}\asymp N^{1-b}.
\]
The numerator is
\[
        \sum_{k<N}\beta_k\eta_k
        \asymp
        \sum_{k<N}(k+1)^{-b}(k+1)^{-(b-a)}
        =
        \sum_{k<N}(k+1)^{-(2b-a)} .
\]
There are three cases. If $2b-a<1$, the numerator is of order $N^{1-2b+a}$, and therefore
\[
        E_N^{\rm lag}\asymp \frac{N^{1-2b+a}}{N^{1-b}}=N^{-(b-a)}.
\]
If $2b-a=1$, the numerator is of order $\log N$, while $b-a=1-b$, so
\[
        E_N^{\rm lag}\asymp (\log N)N^{-(1-b)}=(\log N)N^{-(b-a)}.
\]
If $2b-a>1$, the numerator is bounded uniformly in $N$, and hence
\[
        E_N^{\rm lag}\asymp N^{-(1-b)}.
\]
Combining the three cases gives, up to the logarithm on the boundary,
\[
        E_N^{\rm lag}\asymp N^{-\min\{1-b,\,b-a\}} .
\]
The exact KM lower-bound scale from \Cref{cor:KM-poly} is $N^{-(1-b)}$. Therefore the raw exponent permitted by the exact KM term and by the lag-induced first-order perturbation is at most
\[
        \min\{1-b,\,b-a\}.
\]
For a fixed $a>1/2$, the function $b\mapsto\min\{1-b,b-a\}$ is maximized when the two arguments are equal, namely at
\[
        1-b=b-a,
        \qquad\text{so}\qquad
        b=\frac{1+a}{2}.
\]
The resulting value is
\[
        1-b=b-a=\frac{1-a}{2}.
\]
Since $a>1/2$, this value is strictly smaller than $1/4$, and it approaches $1/4$ only as $a\downarrow1/2$. Taking the supremum over admissible $a$ gives $1/4$. Thus allowing the moving-target lag to dominate does not improve the raw one-quarter scaling; it only moves the bottleneck from the fast statistical floor to the first-order lag perturbation.
\end{proof}

\section{Proofs of Lemma~\ref{lem:newton-point} and Proposition~\ref{prop:bias-order}}\label{app:bias-proofs}

This appendix verifies the two deterministic second-order identities used in \Cref{sec:correction}.

\begin{proof}[Proof of \Cref{lem:newton-point}]
Fix $y$ and write
\[
        x^*=x^*(y),\qquad e=x-x^*,\qquad r(x,y)=f(x,y)-x,
        \qquad A_x=I-\nabla_x f(x,y).
\]
The equilibrium condition is $r(x^*,y)=0$. Since $\norm{\nabla_x f(x,y)}\le\mu<1$, the matrix $A_x$ is invertible. Indeed,
\[
        \norm{A_xv}
        =\norm{v-\nabla_x f(x,y)v}
        \ge (1-\mu)\norm v,
\]
so $\norm{A_x^{-1}}\le(1-\mu)^{-1}$.

Apply Taylor's theorem to $r(\cdot,y)$ around $x$. Since $\nabla_x r(x,y)=\nabla_x f(x,y)-I=-A_x$, there is a remainder $\rho$ such that
\[
        0=r(x^*,y)
        =r(x,y)+\nabla_x r(x,y)(x^*-x)+\rho
        =r(x,y)+A_xe+\rho .
\]
The derivative of $r$ in the fast coordinate has the same Lipschitz constant as $\nabla_x f$, so the integral remainder formula gives
\[
        \norm{\rho}
        \le \int_0^1 L_f(1-t)\norm{x^*-x}^2\,dt
        =\frac{L_f}{2}\norm e^2 .
\]
Rearranging the Taylor identity,
\[
        e+A_x^{-1}r(x,y)=-A_x^{-1}\rho .
\]
By the definition of the implicit Newton point,
\[
        \calN(x,y)-x^*
        =x+A_x^{-1}r(x,y)-x^*
        =e+A_x^{-1}r(x,y).
\]
Therefore
\[
        \norm{\calN(x,y)-x^*(y)}
        \le \norm{A_x^{-1}}\norm{\rho}
        \le \frac{L_f}{2(1-\mu)}\norm{x-x^*(y)}^2 .
\]
Finally, if $g(\cdot,y)$ is $L_N$-Lipschitz on a region containing the segment from $x^*(y)$ to $\calN(x,y)$, then
\[
\begin{aligned}
        \norm{g(\calN(x,y),y)-h(y)}
        &=\norm{g(\calN(x,y),y)-g(x^*(y),y)}\\
        &\le L_N\norm{\calN(x,y)-x^*(y)}\\
        &\le \frac{L_NL_f}{2(1-\mu)}\norm{x-x^*(y)}^2 .
\end{aligned}
\]
\end{proof}

\begin{proof}[Proof of \Cref{prop:bias-order}]
Fix $x\in\Xset$ and $y\in\Yset$, and write
\[
        x^*=x^*(y),\qquad e=x-x^*.
\]
For the raw oracle,
\[
        H^{\rm raw}(x,y)-h(y)=g(x,y)-g(x^*(y),y),
\]
so \Cref{ass:g-lip} gives
\[
        \norm{H^{\rm raw}(x,y)-h(y)}
        \le L_g\norm e .
\]

For the corrected oracle, expand both $g$ and the fast residual around the fast equilibrium. Taylor's theorem in the fast variable gives
\[
        g(x^*+e,y)=g(x^*,y)+C(y)e+R_g(e,y)
        =h(y)+C(y)e+R_g(e,y),
\]
where
\[
        R_g(e,y)
        =\int_0^1\bigl(\nabla_x g(x^*+te,y)-\nabla_xg(x^*,y)\bigr)e\,dt .
\]
The Lipschitz bound on $\nabla_xg$ implies
\[
        \norm{R_g(e,y)}
        \le \int_0^1 L_2t\norm e^2\,dt
        \le \frac{L_2}{2}\norm e^2 .
\]
Similarly,
\[
        f(x^*+e,y)=f(x^*,y)+\nabla_x f(x^*,y)e+R_f(e,y),
        \qquad
        \norm{R_f(e,y)}\le \frac{L_2}{2}\norm e^2 .
\]
Because $f(x^*,y)=x^*$, subtracting $x^*+e$ gives
\[
        f(x^*+e,y)-(x^*+e)
        =(\nabla_xf(x^*,y)-I)e+R_f(e,y)
        =-A(y)e+R_f(e,y).
\]
Substituting the two Taylor expansions into \eqref{eq:Hcorr},
\[
\begin{aligned}
        H^{\rm corr}(x,y)-h(y)
        &=C(y)e+R_g(e,y)+P_*(y)\bigl(-A(y)e+R_f(e,y)\bigr)\\
        &=\bigl(C(y)-P_*(y)A(y)\bigr)e+R_g(e,y)+P_*(y)R_f(e,y).
\end{aligned}
\]
Since $P_*(y)=C(y)A(y)^{-1}$, the first-order term vanishes:
\[
        C(y)-P_*(y)A(y)=C(y)-C(y)A(y)^{-1}A(y)=0.
\]
Thus
\[
        H^{\rm corr}(x,y)-h(y)=R_g(e,y)+P_*(y)R_f(e,y).
\]
Moreover,
\[
        \norm{P_*(y)}
        \le \norm{C(y)}\norm{A(y)^{-1}}
        \le \frac{L_1}{1-\mu}.
\]
Consequently
\[
        \norm{H^{\rm corr}(x,y)-h(y)}
        \le \frac{L_2}{2}\left(1+\frac{L_1}{1-\mu}\right)\norm e^2.
\]
Taking $L_c$ to dominate both this constant and $L_g$ proves the proposition.
\end{proof}

\section{Proof of Proposition~\ref{prop:inner}}\label{app:inner-proof}

We prove \Cref{prop:inner}. The proof is for a fixed slow point $y\in\Yset$. All constants are uniform in $y$, because the fast contraction, the compact sets, and the oracle moment bounds are uniform.

\begin{proof}[Proof of \Cref{prop:inner}]
Fix $y\in\Yset$ and write $x^*=x^*(y)$. Let
\[
        e_t=X_t-x^*,
        \qquad
        \Delta_{t+1}=\mathsf F(X_t,y;\xi_{t+1})-f(X_t,y).
\]
Conditional on the past $\F_t$,
\[
        \E[\Delta_{t+1}\mid\F_t]=0,
        \qquad
        \E[\norm{\Delta_{t+1}}^q\mid\F_t]\le C,\quad q=2,4 .
\]
Projection can only reduce the distance to $x^*$ because $x^*\in\Xset$ and Euclidean projection onto a closed convex set is non-expansive. Thus the same estimates are obtained by first analyzing the unprojected update and then applying non-expansiveness of the projection.

Let
\[
        a_t=e_t+\eta_t\bigl(f(X_t,y)-X_t\bigr).
\]
Since $f(x^*,y)=x^*$,
\[
        a_t=(1-\eta_t)e_t+\eta_t\bigl(f(X_t,y)-f(x^*,y)\bigr).
\]
The contraction of $f(\cdot,y)$ gives, for $\eta_t\in[0,1]$,
\[
        \norm{a_t}
        \le (1-\eta_t)\norm{e_t}+\eta_t\mu\norm{e_t}
        =\bigl(1-(1-\mu)\eta_t\bigr)\norm{e_t}.
\]
Using the conditional mean-zero property of $\Delta_{t+1}$,
\[
\begin{aligned}
        \E[\norm{e_{t+1}}^2\mid\F_t]
        &\le
        \E[\norm{a_t+\eta_t\Delta_{t+1}}^2\mid\F_t]\\
        &=\norm{a_t}^2
          +2\eta_t\ip{a_t}{\E[\Delta_{t+1}\mid\F_t]}
          +\eta_t^2\E[\norm{\Delta_{t+1}}^2\mid\F_t]\\
        &\le
        \bigl(1-c_0\eta_t\bigr)\norm{e_t}^2+C\eta_t^2,
\end{aligned}
\]
where $c_0>0$ after increasing $t_0$ so that $\eta_t$ is small enough. Taking expectations,
\begin{equation}\label{eq:inner-second-rec}
        u_{t+1}\le (1-c_0\eta_t)u_t+C\eta_t^2,
        \qquad
        u_t\defeq\E\norm{e_t}^2.
\end{equation}

We now solve this scalar recursion explicitly. Put $s_t=t+t_0$ and $\eta_t=\eta_0/s_t$. Choose $c_0$ slightly below $2(1-\mu)$ and then choose $t_0$ large enough that the preceding one-step inequality is valid. Since $\eta_0>1/(1-\mu)$, we may arrange
\[
        a\defeq c_0\eta_0>1 .
\]
We claim $u_t\le K/s_t$ for a sufficiently large $K$. If the claim holds at time $t$, then
\[
        u_{t+1}
        \le \left(1-\frac{a}{s_t}\right)\frac{K}{s_t}+\frac{C\eta_0^2}{s_t^2}
        =\frac{K}{s_t}-\frac{aK-C\eta_0^2}{s_t^2}.
\]
Also
\[
        \frac{K}{s_t+1}
        =\frac{K}{s_t}-\frac{K}{s_t(s_t+1)}
        \ge \frac{K}{s_t}-\frac{K}{s_t^2}.
\]
Therefore $u_{t+1}\le K/s_{t+1}$ whenever $(a-1)K\ge C\eta_0^2$. Increasing $K$ to cover the finite initial value $u_0$ closes the induction and yields
\[
        \E\norm{X_t-x^*(y)}^2\le \frac{C}{t+t_0}.
\]

For the fourth moment, we use the following elementary perturbation estimate. If $a$ is $\F_t$-measurable, $\E[\Delta\mid\F_t]=0$, and $\E[\norm\Delta^4\mid\F_t]<\infty$, then
\begin{equation}\label{eq:fourth-perturb}
        \E[\norm{a+\eta\Delta}^4\mid\F_t]
        \le
        \norm a^4+C\eta^2\norm a^2\E[\norm\Delta^2\mid\F_t]+C\eta^4\E[\norm\Delta^4\mid\F_t].
\end{equation}
To verify \eqref{eq:fourth-perturb}, write
\[
        \norm{a+\eta\Delta}^2
        =\norm a^2+2\eta\ip{a}{\Delta}+\eta^2\norm\Delta^2
\]
and square. The term linear in $\ip{a}{\Delta}$ has conditional expectation zero. The terms of order $\eta^2$ are bounded by Cauchy--Schwarz:
\[
        \E[\ip{a}{\Delta}^2\mid\F_t]\le \norm a^2\E[\norm\Delta^2\mid\F_t].
\]
The cubic term is controlled by Holder and Young:
\[
\begin{aligned}
        \eta^3\E[|\ip{a}{\Delta}|\norm\Delta^2\mid\F_t]
        &\le \eta^3\norm a\,\E[\norm\Delta^3\mid\F_t]\\
        &\le C\eta^3\norm a\bigl(\E[\norm\Delta^4\mid\F_t]\bigr)^{3/4}\\
        &\le C\eta^2\norm a^2\E[\norm\Delta^2\mid\F_t]
             +C\eta^4\E[\norm\Delta^4\mid\F_t],
\end{aligned}
\]
where the last step uses the uniform moment bound to absorb constants. The quartic term is already of the desired form.

Applying \eqref{eq:fourth-perturb} with $a=a_t$ and $\eta=\eta_t$, and using the contraction of $a_t$, gives
\[
        \E[\norm{e_{t+1}}^4\mid\F_t]
        \le
        (1-c_1\eta_t)\norm{e_t}^4+C\eta_t^2\norm{e_t}^2+C\eta_t^4
\]
for a constant $c_1>0$ after increasing $t_0$. Taking expectations and using the second-moment estimate,
\[
        v_{t+1}
        \le
        (1-c_1\eta_t)v_t+\frac{C}{(t+t_0)^3},
        \qquad
        v_t\defeq\E\norm{e_t}^4.
\]
Choose $c_1$ slightly below $4(1-\mu)$ and $t_0$ sufficiently large. Since $\eta_0>1/(1-\mu)$, we can ensure
\[
        a_1\defeq c_1\eta_0>2 .
\]
We prove by induction that $v_t\le K/s_t^2$. If this holds at time $t$, then
\[
        v_{t+1}
        \le \left(1-\frac{a_1}{s_t}\right)\frac{K}{s_t^2}+\frac{C}{s_t^3}
        =\frac{K}{s_t^2}-\frac{a_1K-C}{s_t^3}.
\]
Since
\[
        \frac{K}{(s_t+1)^2}
        \ge \frac{K}{s_t^2}-\frac{2K}{s_t^3},
\]
the induction closes whenever $(a_1-2)K\ge C$. Increasing $K$ to cover $v_0$ proves
\[
        \E\norm{X_t-x^*(y)}^4\le \frac{C}{(t+t_0)^2}.
\]
Taking $t=n$ proves \Cref{prop:inner}.
\end{proof}

\section{Proof of Theorem~\ref{thm:tikhonov-abstract}}\label{app:tikhonov-proof}

\begin{proof}[Proof of \Cref{thm:tikhonov-abstract}]
Let $y_\lambda$ be the unique point satisfying \eqref{eq:ylambda}, and set
\[
        q_\lambda(y)=h(y)-y+\lambda(u-y).
\]
Then $q_\lambda(y_\lambda)=0$. We first prove the basic dissipativity inequality. For $e=y-y_\lambda$,
\[
\begin{aligned}
        \ip{e}{h(y)-h(y_\lambda)-e}
        &=\ip{e}{h(y)-h(y_\lambda)}-\norm e^2\\
        &\le \norm e\,\norm{h(y)-h(y_\lambda)}-\norm e^2\\
        &\le0,
\end{aligned}
\]
because $h$ is non-expansive. Since
\[
        q_\lambda(y)-q_\lambda(y_\lambda)
        =h(y)-h(y_\lambda)-(y-y_\lambda)-\lambda(y-y_\lambda),
\]
we obtain
\begin{equation}\label{eq:dissipative}
        \ip{y-y_\lambda}{q_\lambda(y)}
        \le -\lambda\norm{y-y_\lambda}^2 .
\end{equation}
The same identity also gives, for $\lambda\le1$,
\begin{equation}\label{eq:q-lip}
        \norm{q_\lambda(y)}
        =\norm{q_\lambda(y)-q_\lambda(y_\lambda)}
        \le (2+\lambda)\norm{y-y_\lambda}
        \le 3\norm{y-y_\lambda}.
\end{equation}

Write the oracle in the form
\[
        \widehat H_m=h(Y_m)+b_m+\xi_{m+1},
        \qquad
        \E[\xi_{m+1}\mid\mathcal G_m]=0,
        \qquad
        \E[\norm{\xi_{m+1}}^2\mid\mathcal G_m]\le\sigma_H^2.
\]
Let $e_m=Y_m-y_\lambda$. Because $y_\lambda\in\Yset$ and projection is non-expansive,
\[
\begin{aligned}
        \E[\norm{Y_{m+1}-y_\lambda}^2\mid\mathcal G_m]
        &\le
        \E\norm{e_m+\beta(q_\lambda(Y_m)+b_m+\xi_{m+1})}^2\\
        &=\norm{e_m}^2+2\beta\ip{e_m}{q_\lambda(Y_m)}
          +2\beta\ip{e_m}{b_m}\\
        &\quad
          +\beta^2\E[\norm{q_\lambda(Y_m)+b_m+\xi_{m+1}}^2\mid\mathcal G_m].
\end{aligned}
\]
The cross term with $\xi_{m+1}$ vanishes by conditional mean zero. By \eqref{eq:q-lip} and $(a+b+c)^2\le3(a^2+b^2+c^2)$,
\[
        \E[\norm{q_\lambda(Y_m)+b_m+\xi_{m+1}}^2\mid\mathcal G_m]
        \le C\norm{e_m}^2+C\norm{b_m}^2+C\sigma_H^2.
\]
Combining this with \eqref{eq:dissipative},
\[
\begin{aligned}
        \E[\norm{Y_{m+1}-y_\lambda}^2\mid\mathcal G_m]
        &\le (1-2\beta\lambda+C\beta^2)\norm{e_m}^2
           +2\beta\norm{e_m}\norm{b_m}\\
        &\quad +C\beta^2\norm{b_m}^2+C\beta^2\sigma_H^2 .
\end{aligned}
\]
Young's inequality gives
\[
        2\beta\norm{e_m}\norm{b_m}
        \le \beta\lambda\norm{e_m}^2+\frac{\beta}{\lambda}\norm{b_m}^2.
\]
If $\beta\le c\lambda$ with $c$ sufficiently small, the $C\beta^2\norm{e_m}^2$ term is absorbed into the contraction term. Thus
\[
        \E[\norm{Y_{m+1}-y_\lambda}^2\mid\mathcal G_m]
        \le (1-c\beta\lambda)\norm{e_m}^2
           +C\frac{\beta}{\lambda}\norm{b_m}^2+C\beta^2\sigma_H^2 .
\]
Taking expectations and using $\E\norm{b_m}^2\le\epsilon_H^2$,
\[
        D_{m+1}
        \le (1-c\beta\lambda)D_m
           +C\frac{\beta}{\lambda}\epsilon_H^2+C\beta^2\sigma_H^2,
        \qquad
        D_m\defeq\E\norm{Y_m-y_\lambda}^2.
\]
Iterating the scalar recursion,
\[
\begin{aligned}
        D_N
        &\le (1-c\beta\lambda)^ND_0
           +\left(C\frac{\beta}{\lambda}\epsilon_H^2+C\beta^2\sigma_H^2\right)
             \sum_{j=0}^{N-1}(1-c\beta\lambda)^j\\
        &\le e^{-cN\beta\lambda}D_0
           +C\frac{\epsilon_H^2}{\lambda^2}
           +C\frac{\beta}{\lambda}\sigma_H^2 .
\end{aligned}
\]
The initial distance is bounded by compactness, $D_0\le D_y^2$, and $\sigma_H$ is part of the fixed constants.

It remains to convert distance to $y_\lambda$ into residual for the original non-expansive map. The residual map $p(y)=h(y)-y$ is $2$-Lipschitz:
\[
        \norm{p(y)-p(z)}
        \le \norm{h(y)-h(z)}+\norm{y-z}
        \le2\norm{y-z}.
\]
Also \eqref{eq:regularization-bias} gives $\norm{p(y_\lambda)}\le\lambda D_y$. Hence
\[
        \norm{p(Y_N)}
        \le \norm{p(y_\lambda)}+\norm{p(Y_N)-p(y_\lambda)}
        \le \lambda D_y+2\norm{Y_N-y_\lambda}.
\]
After squaring and taking expectations,
\[
        \E\norm{h(Y_N)-Y_N}^2
        \le C\lambda^2+CD_N,
\]
which yields \eqref{eq:tikhonov-abstract-bound}.
\end{proof}

\section{Additional corrected-oracle moment bounds}\label{app:oracle-details}

The corrected stochastic oracle \eqref{eq:oracle-corr} has uniformly bounded second moment. Indeed,
\[
\begin{aligned}
        \widehat H^{\rm corr}_m-H^{\rm corr}(\bar X_m,Y_m)
        &=\mathsf G(\bar X_m,Y_m;\zeta^g_{m+1})-g(\bar X_m,Y_m)\\
        &\quad +P_*(Y_m)\bigl(\mathsf F(\bar X_m,Y_m;\zeta^f_{m+1})-f(\bar X_m,Y_m)\bigr).
\end{aligned}
\]
The two terms are conditionally mean-zero. By \Cref{ass:moments} and boundedness of $P_*$,
\[
        \E\bigl[\norm{\widehat H^{\rm corr}_m-H^{\rm corr}(\bar X_m,Y_m)}^2\mid\mathcal G_m\bigr]
        \le C.
\]
The same argument gives the fourth-moment bound if needed. The mean-bias calculation is exactly \Cref{prop:bias-order}; combining it with \Cref{prop:inner} gives \eqref{eq:bias-sizes}.

\section{Proofs of Theorems~\ref{thm:uncorrected} and~\ref{thm:corrected}}\label{app:nested-rate-proofs}

We instantiate the abstract Tikhonov estimate with the two nested oracles. The only difference between the two arguments is the squared bias scale: $O(n^{-1})$ for the raw query and $O(n^{-2})$ for the corrected query.

\begin{proof}[Proof of \Cref{thm:uncorrected}]
For the uncorrected oracle, \eqref{eq:bias-sizes} gives
\[
        \epsilon_H^2\le \frac{C}{n_N}.
\]
The stochastic part has uniformly bounded conditional variance by \Cref{ass:moments}. Apply \Cref{thm:tikhonov-abstract} with
\[
        \beta_N=N^{-b},\qquad
        \lambda_N=N^{-b/3},\qquad
        n_N=\lceil N^{4b/3}\rceil .
\]
For all sufficiently large $N$, the condition $\beta_N\le c\lambda_N$ holds because
\[
        \frac{\beta_N}{\lambda_N}=N^{-2b/3}\to0.
\]
We now evaluate the four terms in \eqref{eq:tikhonov-abstract-bound}. First,
\[
        \lambda_N^2=N^{-2b/3}.
\]
Second,
\[
        \frac{\beta_N}{\lambda_N}=N^{-b+b/3}=N^{-2b/3}.
\]
Third,
\[
        N\beta_N\lambda_N=N^{1-b-b/3}=N^{1-4b/3}.
\]
Because $b<3/4$, this quantity diverges to infinity, so the exponential term is smaller than any fixed inverse polynomial and can be absorbed into $CN^{-2b/3}$. Finally,
\[
        \frac{\epsilon_H^2}{\lambda_N^2}
        \le \frac{C}{n_N\lambda_N^2}
        \le C N^{-4b/3}N^{2b/3}
        =C N^{-2b/3}.
\]
Substitution gives
\[
        \E\norm{h(Y_N)-Y_N}^2\le C_bN^{-2b/3},
\]
which is \eqref{eq:uncorr-N-rate}.

The primitive sample count is
\[
        T_N=N(n_N+O(1))\asymp N^{1+4b/3}.
\]
Thus
\[
        N^{-2b/3}=T_N^{-\frac{2b/3}{1+4b/3}+o(1)}
        =T_N^{-\frac{2b}{3+4b}+o(1)}.
\]
The exponent $\frac{2b}{3+4b}$ tends to $1/4$ as $b\uparrow3/4$. Hence, given any $\epsilon>0$, choose $b<3/4$ sufficiently close to $3/4$ so that
\[
        \frac{2b}{3+4b}\ge \frac14-\epsilon .
\]
After adjusting constants, this proves \eqref{eq:uncorr-T-rate}.
\end{proof}

\begin{proof}[Proof of \Cref{thm:corrected}]
For the corrected oracle, the stochastic variance is uniformly bounded by the argument in Appendix~\ref{app:oracle-details}. The mean bias is second order in the fast error, and \eqref{eq:bias-sizes} gives
\[
        \epsilon_H^2\le \frac{C}{n_N^2}.
\]
Apply \Cref{thm:tikhonov-abstract} with
\[
        \beta_N=N^{-b},\qquad
        \lambda_N=N^{-b/3},\qquad
        n_N=\lceil N^{2b/3}\rceil .
\]
Again $\beta_N/\lambda_N=N^{-2b/3}\to0$, so the smallness condition on $\beta_N$ holds for large $N$. The Tikhonov residual and stochastic variance terms are
\[
        \lambda_N^2=N^{-2b/3},
        \qquad
        \frac{\beta_N}{\lambda_N}=N^{-2b/3}.
\]
The finite-time contraction term is negligible because
\[
        N\beta_N\lambda_N=N^{1-4b/3}\to\infty
        \qquad (b<3/4).
\]
The corrected bias contribution is
\[
        \frac{\epsilon_H^2}{\lambda_N^2}
        \le \frac{C}{n_N^2\lambda_N^2}
        \le C N^{-4b/3}N^{2b/3}
        =C N^{-2b/3}.
\]
Therefore
\[
        \E\norm{h(Y_N)-Y_N}^2\le C_bN^{-2b/3},
\]
which proves \eqref{eq:corr-N-rate}.

The primitive $\mathsf F$- and $\mathsf G$-oracle count is
\[
        T_N=N(n_N+O(1))\asymp N^{1+2b/3}.
\]
Consequently
\[
        N^{-2b/3}=T_N^{-\frac{2b/3}{1+2b/3}+o(1)}
        =T_N^{-\frac{2b}{3+2b}+o(1)}.
\]
As $b\uparrow3/4$, the exponent tends to
\[
        \frac{2(3/4)}{3+2(3/4)}=\frac13.
\]
Thus for every $\epsilon>0$ one may choose $b$ sufficiently close to $3/4$ from below to obtain \eqref{eq:corr-T-rate}. The exact preconditioner call is not counted in this primitive sample count, as stated in the theorem.
\end{proof}

\section{Proof of Theorem~\ref{thm:single-loop-half} and auxiliary online-tracking lemmas}\label{app:single-loop-proofs}

This appendix proves the tracking estimates and the one-half theorem for the single-loop learned-preconditioner algorithm. The estimates are grouped here to keep the main text focused on the algorithmic transition from nested inner solves to online tracking.

\begin{lemma}[Fast tracking with a moving slow variable]\label{lem:single-fast}
Suppose \Cref{ass:fast-contract,ass:compact,ass:moments,ass:moving-target} hold. Let $(X_k,P_k,Y_k)$ follow \eqref{eq:online-precond-algo} with $0<\alpha\le\alpha_0$, $0<\beta\le\alpha$, $0<\gamma\le1$, and $0<\lambda\le1$. Then there are constants $c,C>0$, depending only on the fixed problem constants, such that for all $k\le N$,
\[
        \E\norm{X_k-x^*(Y_k)}^2
        \le
        C e^{-c\alpha k}\E\norm{X_0-x^*(Y_0)}^2
        +C\Lambda,
\]
and
\[
        \E\norm{X_k-x^*(Y_k)}^4
        \le
        C e^{-c\alpha k}\E\norm{X_0-x^*(Y_0)}^4
        +C\Lambda^2.
\]
In particular, since $\Xset$ and $\Yset$ are compact, if $k\ge K_{\rm burn}(N)$ then
\[
        \E\norm{X_k-x^*(Y_k)}^2\le C\Lambda+CN^{-12},
        \qquad
        \E\norm{X_k-x^*(Y_k)}^4\le C\Lambda^2+CN^{-12}.
\]
\end{lemma}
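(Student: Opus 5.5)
The plan follows the raw tracking argument used in the proof of \Cref{prop:raw-separation-source}, now with constant stepsizes, and adds a fourth-moment version modelled on the proof of \Cref{prop:inner}.

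\textbf{Second moment.} Write $e_k=X_k-x^*(Y_k)$ and $x^*_k=x^*(Y_k)$. Let $\bar X_{k+1}$ be the projected fast update computed with $Y_k$.

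First, treat the target as frozen. Since $x^*_k\in\Xset$, projection is non-expansive. The $\mu$-contraction of $f(\cdot,Y_k)$ together with the conditional mean-zero noise in \Cref{ass:moments} gives
\[
\E[\norm{\bar X_{k+1}-x^*_k}^2\mid\F_k]\le(1-c_0\alpha)\norm{e_k}^2+C\alpha^2 .
\]
Next, account for the moving target. By \Cref{ass:moving-target}, $x^*$ is $L_*$-Lipschitz. The slow update is $\beta$ times a vector bounded by $B$, and projection is non-expansive, so $\norm{Y_{k+1}-Y_k}\le B\beta$ almost surely. Hence $\norm{x^*(Y_{k+1})-x^*_k}\le L_*B\beta$ deterministically.

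Now apply $\norm{a+d}^2\le(1+\eta)\norm a^2+(1+\eta^{-1})\norm d^2$ with $\eta=c_0\alpha/2$. Taking expectations yields
\[
u_{k+1}\le(1-c\alpha)u_k+C\alpha^2+C\beta^2/\alpha .
\]
Unrolling with constant stepsizes gives $u_k\le e^{-c\alpha k}u_0+C(\alpha+\beta^2/\alpha^2)=e^{-c\alpha k}u_0+C\Lambda$. The hypothesis $\beta\le\alpha$ is only used to keep constants uniform.

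\textbf{Fourth moment.} Start from the frozen-target quantity $a_k=e_k+\alpha(f(X_k,Y_k)-X_k)$, which satisfies $\norm{a_k}\le(1-(1-\mu)\alpha)\norm{e_k}$. The perturbation estimate \eqref{eq:fourth-perturb} then gives
\[
\E[\norm{\bar X_{k+1}-x^*_k}^4\mid\F_k]\le(1-c_1\alpha)\norm{e_k}^4+C\alpha^2\norm{e_k}^2+C\alpha^4 .
\]
To add the deterministic drift $d_k$ with $\norm{d_k}\le C\beta$, use $\norm{a+d}^4\le(1+\eta)^3\norm a^4+(1+\eta^{-1})^3\norm d^4$ with $\eta\asymp\alpha$; this follows from convexity of $t\mapsto t^4$. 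The result is
\[
v_{k+1}\le(1-c\alpha)v_k+C\alpha^2u_k+C\alpha^4+C\beta^4/\alpha^3 .
\]
The cross term $C\alpha^2 u_k$ is absorbed using the second-moment bound together with Young's inequality: $C\alpha^2 u_k\le C\alpha^2 e^{-c\alpha k}u_0+C\alpha^2\Lambda$. Dividing the forcing by the gap $\alpha$ produces $\alpha\Lambda+\alpha^3+\beta^4/\alpha^4\lesssim\Lambda^2$, because $\alpha\Lambda\ge\alpha^2$ and $\Lambda^2\ge\alpha\Lambda$ up to constants.

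\textbf{Expected main obstacle.} The transient of the fourth moment is the delicate step. The forcing contains $\alpha^2 e^{-c\alpha k}u_0$, and this decays at the same rate as the homogeneous part. Summing it produces an extra factor $k\alpha$, i.e.\ a term $\alpha^2 k\,e^{-c\alpha k}u_0$. I would handle this by shrinking $c$ to $c/2$ in the final statement, since $k\alpha e^{-c\alpha k}\le Ce^{-c\alpha k/2}$. Then $u_0\le D_x^2$, and by compactness $u_0$ can be bounded by a constant multiple of $\sqrt{v_0}$, or simply absorbed into the constant. If one insists on the exact form with $\E\norm{e_0}^4$ on the right, the cleaner route is to write that term as $\alpha e^{-c\alpha k}(\E\norm{e_0}^2)^2$ after Cauchy–Schwarz.

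\textbf{Burn-in consequence.} For $k\ge K_{\rm burn}(N)$ we have $\alpha k\ge C_{\rm burn}\log N$, because $\alpha\wedge\gamma\le\alpha$. Therefore $e^{-c\alpha k}\le N^{-cC_{\rm burn}}\le N^{-12}$ once $C_{\rm burn}\ge 12/c$. The initial moments are bounded by $D_x^2$ and $D_x^4$, which completes the argument.
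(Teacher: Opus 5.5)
Your argument is correct. The second-moment part is the same as the paper's: frozen-target contraction, then the pathwise drift bound $\norm{x^*(Y_{k+1})-x^*(Y_k)}\le L_*B\beta$, then Young's inequality with parameter $\asymp\alpha$, then unrolling.

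The fourth-moment part differs in how you treat the cross term $C\alpha^2\norm{e_k}^2$ coming from \eqref{eq:fourth-perturb}. You keep it, take expectations, and substitute the already-proved second-moment bound, which couples the $v_k$ recursion to the $u_k$ recursion. The paper instead absorbs it pathwise, before adding the target drift, using $C\alpha^2t\le\tfrac{c\alpha}{4}t^2+C\alpha^3$ with $t=\norm{e_k}^2$. The fourth-moment recursion then closes on itself as $v_{k+1}\le(1-c\alpha/2)v_k+C\alpha^3+C\beta^4/\alpha^3$. Its stationary level is $\alpha^2+(\beta/\alpha)^4\le\Lambda^2$, and its transient is exactly $e^{-c\alpha k}v_0$.

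So the ``main obstacle'' you identify is an artifact of the coupling and never arises on the paper's route. That obstacle is the same-rate forcing $\alpha^2e^{-c\alpha k}u_0$, which produces a factor $k\alpha$. Your route still works: its stationary forcing $\alpha\Lambda$ is $O(\Lambda^2)$. It just carries a messier transient.

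One step needs tightening if you want the exact stated form with $\E\norm{e_0}^4$ on the right. Absorbing $u_0$ into the constant is not enough there. The resulting term $C\alpha e^{-c\alpha k/2}$ is not dominated by $Ce^{-c\alpha k}v_0+C\Lambda^2$ when $v_0$ is small and $\Lambda\asymp\alpha$. Also, your Cauchy--Schwarz rewriting does not hold as written, since the term is linear, not quadratic, in $\E\norm{e_0}^2$.

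The right step is Jensen, $u_0\le\sqrt{v_0}$, followed by AM--GM:
\[
\alpha e^{-c\alpha k/2}\sqrt{v_0}\le\tfrac12\alpha^2+\tfrac12e^{-c\alpha k}v_0 ,
\]
together with $\alpha^2\le\Lambda^2$. For the post-burn-in bounds, which are what the later lemmas actually use, your crude absorption via compactness is fine.
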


\begin{proof}
Write
\[
        x_k^*=x^*(Y_k),\qquad e_k=X_k-x_k^* .
\]
Let $\widetilde X_{k+1}$ denote the fast update after the first line of \eqref{eq:online-precond-algo}, but measured against the frozen target $x_k^*$. Since $x_k^*\in\Xset$ and projection onto $\Xset$ is non-expansive,
\[
        \norm{\widetilde X_{k+1}-x_k^*}
        \le \norm{e_k+\alpha(f(X_k,Y_k)-X_k+\Delta^x_{k+1})},
\]
where
\[
        \Delta^x_{k+1}=\mathsf F(X_k,Y_k;\xi^x_{k+1})-f(X_k,Y_k),
        \qquad
        \E[\Delta^x_{k+1}\mid\F_k]=0.
\]
The deterministic frozen-target part satisfies
\[
\begin{aligned}
        \norm{e_k+\alpha(f(X_k,Y_k)-X_k)}
        &=\norm{(1-\alpha)e_k+\alpha(f(X_k,Y_k)-f(x_k^*,Y_k))}\\
        &\le (1-(1-\mu)\alpha)\norm{e_k}.
\end{aligned}
\]
Therefore, by the same second-moment expansion used in the inner-loop proof,
\begin{equation}\label{eq:single-fast-frozen-second}
        \E[\norm{\widetilde X_{k+1}-x_k^*}^2\mid\F_k]
        \le (1-c\alpha)\norm{e_k}^2+C\alpha^2
\end{equation}
for sufficiently small $\alpha$.

The slow target moves after the slow update. By the pathwise boundedness in \Cref{ass:moving-target},
\[
        \norm{Y_{k+1}-Y_k}\le B\beta,
\]
and the Lipschitz property of $x^*$ gives
\[
        d_k\defeq\norm{x^*(Y_{k+1})-x^*(Y_k)}\le L_*B\beta\le C\beta.
\]
Since $e_{k+1}=X_{k+1}-x^*(Y_{k+1})$ equals the frozen error plus this target displacement, the inequality
\[
        \norm{a+b}^2\le (1+s)\norm a^2+(1+s^{-1})\norm b^2
\]
with $s$ proportional to $\alpha$ yields
\[
\begin{aligned}
        \E[\norm{e_{k+1}}^2\mid\F_k]
        &\le (1-c\alpha/2)\norm{e_k}^2+C\alpha^2+C\frac{\beta^2}{\alpha}.
\end{aligned}
\]
Taking expectations and iterating the scalar recursion gives
\[
        \E\norm{e_k}^2
        \le C e^{-c\alpha k}\E\norm{e_0}^2
          +C\frac{\alpha^2+\beta^2/\alpha}{\alpha}
        = C e^{-c\alpha k}\E\norm{e_0}^2
          +C\left(\alpha+\left(\frac{\beta}{\alpha}\right)^2\right).
\]
This is the second-moment estimate because the expression in parentheses is $\Lambda$.

We next prove the fourth-moment estimate. The frozen-target fourth-moment expansion gives
\[
        \E[\norm{\widetilde X_{k+1}-x_k^*}^4\mid\F_k]
        \le (1-c\alpha)\norm{e_k}^4+C\alpha^2\norm{e_k}^2+C\alpha^4.
\]
The middle term is not simply discarded. It is absorbed into the fourth-moment drift by Young's inequality: for all $t\ge0$,
\[
        C\alpha^2t\le \frac{c\alpha}{4}t^2+C\alpha^3.
\]
Taking $t=\norm{e_k}^2$ gives
\begin{equation}\label{eq:single-fast-frozen-fourth}
        \E[\norm{\widetilde X_{k+1}-x_k^*}^4\mid\F_k]
        \le (1-c\alpha)\norm{e_k}^4+C\alpha^3.
\end{equation}
Now account for the moving target. The fourth-power version of Young's inequality,
\[
        \norm{a+b}^4\le (1+s)\norm a^4+Cs^{-3}\norm b^4,
\]
with $s$ proportional to $\alpha$, together with $d_k\le C\beta$, gives
\[
        \E[\norm{e_{k+1}}^4\mid\F_k]
        \le (1-c\alpha/2)\norm{e_k}^4+C\alpha^3+C\frac{\beta^4}{\alpha^3}.
\]
Iterating,
\[
\begin{aligned}
        \E\norm{e_k}^4
        &\le C e^{-c\alpha k}\E\norm{e_0}^4
          +C\frac{\alpha^3+\beta^4/\alpha^3}{\alpha}\\
        &= C e^{-c\alpha k}\E\norm{e_0}^4
          +C\left(\alpha^2+\left(\frac{\beta}{\alpha}\right)^4\right)
        \le C e^{-c\alpha k}\E\norm{e_0}^4+C\Lambda^2.
\end{aligned}
\]
The final inequality follows from $\Lambda=\alpha+(\beta/\alpha)^2$.

Finally, compactness bounds $\norm{e_0}^2$ and $\norm{e_0}^4$ deterministically. If $k\ge K_{\rm burn}(N)$, then in particular $k\ge C_{\rm burn}\alpha^{-1}\log N$ because $\alpha\wedge\gamma\le\alpha$. Choosing $C_{\rm burn}$ large enough makes $e^{-c\alpha k}\le N^{-12}$, proving the post-burn-in bounds.
\end{proof}

\begin{lemma}[Online tracking of the leakage preconditioner]\label{lem:precond-track}
Suppose \Cref{ass:fast-contract,ass:compact,ass:moments,ass:smooth,ass:online-precond,ass:moving-target} hold. Let
\[
        E_k\defeq P_k-P_*(Y_k),
        \qquad
        e_k\defeq X_k-x^*(Y_k),
\]
and let $(X_k,P_k,Y_k)$ follow \eqref{eq:online-precond-algo}. For sufficiently small $\gamma$, there are constants $c,C>0$ such that, for all $k\le N$,
\begin{align}
        \E\norm{E_k}_F^2
        &\le
        C e^{-c\gamma k}\E\norm{E_0}_F^2
        +C\Phi_k\E\norm{e_0}^2
        +C\Theta, \label{eq:precond-second-transient}\\
        \E\norm{E_k}_F^4
        &\le
        C e^{-c\gamma k}\E\norm{E_0}_F^4
        +C\Phi_k\E\norm{e_0}^4
        +C\Theta^2, \label{eq:precond-fourth-transient}
\end{align}
where
\begin{equation}\label{eq:phi-transient}
        \Phi_k
        \defeq
        \gamma\sum_{i=0}^{k-1}e^{-c\gamma(k-1-i)}e^{-c\alpha i}.
\end{equation}
Consequently, if $k\ge K_{\rm burn}(N)$, then compactness of $\Xset,\Yset,\Pset$ implies
\[
        \E\norm{E_k}_F^2\le C\Theta+CN^{-12},
        \qquad
        \E\norm{E_k}_F^4\le C\Theta^2+CN^{-12}.
\]
\end{lemma}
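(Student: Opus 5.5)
The plan is to derive a perturbed linear contraction recursion for $E_k=P_k-P_*(Y_k)$, in the same way the proof of \Cref{lem:single-fast} handles $e_k$. The fast tracking error $e_k$ then enters only as an exogenous forcing term, and \Cref{lem:single-fast} controls it.

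\textbf{Step 1: frozen-target decomposition.} Fix $k$ and let $\widetilde P_{k+1}$ be the projected update, measured against the frozen target $P_*(Y_k)$. Since $P_*(Y_k)\in\Pset$ and projection onto $\Pset$ is non-expansive in Frobenius norm, it suffices to bound the unprojected update. Write $\mathsf C=C_x(X_k,Y_k)+\Delta^C_{k+1}$ and $\mathsf A=A_x(X_k,Y_k)+\Delta^A_{k+1}$. Using the identity $P_*(Y_k)A(Y_k)=C(Y_k)$, the mean increment decomposes as
\[
C_x(X_k,Y_k)-P_kA_x(X_k,Y_k)
=-E_kA(Y_k)+\bigl(C_x(X_k,Y_k)-C(Y_k)\bigr)-P_k\bigl(A_x(X_k,Y_k)-A(Y_k)\bigr).
\]
The first term is a contraction. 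Indeed, $E_k(I-\gamma A)=(1-\gamma)E_k+\gamma E_k\nabla_xf(x^*(Y_k),Y_k)$, so its Frobenius norm is at most $(1-(1-\mu)\gamma)\norm{E_k}_F$. The second group is a bias $R_k$. By \Cref{ass:smooth} and $\norm{P_k}\le P_{\max}$, it satisfies $\norm{R_k}\le C\norm{e_k}$. The noise $\gamma(\Delta^C_{k+1}-P_k\Delta^A_{k+1})$ is conditionally mean zero, with second and fourth moments of order $\gamma^2$ and $\gamma^4$ by \Cref{ass:online-precond}.

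\textbf{Step 2: scalar recursions.} Expand the square and use Young's inequality in the form $2\gamma\norm{E_k}_F C\norm{e_k}\le \tfrac{c\gamma}{2}\norm{E_k}_F^2+C\gamma\norm{e_k}^2$. This gives
\[
\E[\norm{\widetilde P_{k+1}-P_*(Y_k)}_F^2\mid\F_k]\le(1-c\gamma)\norm{E_k}_F^2+C\gamma\norm{e_k}^2+C\gamma^2 .
\]
The target then moves by at most $L_P\norm{Y_{k+1}-Y_k}\le L_PB\beta$ pathwise (\Cref{ass:online-precond,ass:moving-target}). The split $\norm{a+b}^2\le(1+s)\norm a^2+(1+s^{-1})\norm b^2$ with $s\propto\gamma$ adds a term $C\beta^2/\gamma$. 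Taking expectations and inserting \Cref{lem:single-fast}, i.e.\ $\E\norm{e_i}^2\le Ce^{-c\alpha i}\E\norm{e_0}^2+C\Lambda$, I unroll the recursion:
\[
\E\norm{E_k}_F^2\le Ce^{-c\gamma k}\E\norm{E_0}_F^2+C\gamma\sum_{i<k}e^{-c\gamma(k-1-i)}\bigl(e^{-c\alpha i}\E\norm{e_0}^2+\Lambda\bigr)+C\gamma+C\frac{\beta^2}{\gamma^2}.
\]
The sum produces exactly $\Phi_k\E\norm{e_0}^2+C\Lambda$, which gives \eqref{eq:precond-second-transient} with $\Theta=\gamma+\Lambda+(\beta/\gamma)^2$.

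For the fourth moment I will repeat this using the perturbation estimate \eqref{eq:fourth-perturb}. I first treat the deterministic part $E_k(I-\gamma A)+\gamma R_k$ via $\norm{a+b}^4\le(1+s)\norm a^4+Cs^{-3}\norm b^4$ with $s\propto\gamma$, which yields a forcing term $C\gamma\norm{e_k}^4$. The noise term gives $C\gamma^2\norm{E_k}_F^2+C\gamma^4$, and the $\gamma^2\norm{E_k}_F^2$ piece is absorbed by Young as in the proof of \Cref{lem:single-fast}. The moving target contributes $C\beta^4/\gamma^3$. Unrolling with the fourth-moment bound of \Cref{lem:single-fast} then gives \eqref{eq:precond-fourth-transient}, since $\gamma^2+\Lambda^2+(\beta/\gamma)^4\le C\Theta^2$.

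\textbf{Step 3: burn-in.} To bound $\Phi_k$ for $k\ge K_{\rm burn}(N)$, I split the sum at $i=k/2$. Terms with $i\ge k/2$ are at most $\gamma k\,e^{-c\alpha k/2}$, and terms with $i<k/2$ are at most $\gamma k\,e^{-c\gamma k/2}$. Since $\gamma k\le N$ and $(\alpha\wedge\gamma)k\ge C_{\rm burn}\log N$, both are at most $N\cdot N^{-13}$ for large $C_{\rm burn}$. The term $e^{-c\gamma k}$ is handled the same way, and compactness bounds the initial moments.

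The main obstacle I expect is Step 2 for the fourth moment. The coupling term $\gamma R_k$ and the noise cross terms must be absorbed without losing the contraction rate $c\gamma$. One must also check that the forcing $\gamma\norm{e_k}^4$ integrates to $\Lambda^2$ rather than something larger. Handling the deterministic part first, with the $s^{-3}$ Young split, before adding the noise via \eqref{eq:fourth-perturb} is what I would try first.
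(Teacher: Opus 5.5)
Your proposal is correct and follows essentially the same route as the paper. Both arguments use the same ingredients: the drift decomposition $-E_kA(Y_k)+R_k$ with $\norm{R_k}_F\le C\norm{e_k}$, a frozen-target contraction at rate $c\gamma$ followed by a moving-target Young split costing $\beta^2/\gamma$ (resp.\ $\beta^4/\gamma^3$), unrolling against \Cref{lem:single-fast}, and the $s^{-3}$ Young split combined with \eqref{eq:fourth-perturb} for the fourth moment. The differences are cosmetic: you get the contraction directly from $I-\gamma A=(1-\gamma)I+\gamma\nabla_x f$ rather than from the coercivity bound $\ip{E}{EA}_F\ge(1-\mu)\norm{E}_F^2$, and your bound $\Phi_k\le C\gamma k\,e^{-c(\alpha\wedge\gamma)k/2}$ carries a polynomial factor that the $\log N$ in $K_{\rm burn}$ absorbs.
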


\begin{proof}
Abbreviate
\[
        A_k^*=A(Y_k),\qquad C_k^*=C(Y_k),\qquad P_k^*=P_*(Y_k),
\]
and
\[
        \bar A_k=A_x(X_k,Y_k),\qquad \bar C_k=C_x(X_k,Y_k).
\]
Because $C_k^*=P_k^*A_k^*$ and $P_k=P_k^*+E_k$, the conditional mean drift decomposes as
\[
\begin{aligned}
        \bar C_k-P_k\bar A_k
        &=\bar C_k-(P_k^*+E_k)\bar A_k\\
        &=C_k^*-P_k^*A_k^*-E_kA_k^*
          +(\bar C_k-C_k^*)-P_k^*(\bar A_k-A_k^*)-E_k(\bar A_k-A_k^*)\\
        &=-E_kA_k^*+R_k,
\end{aligned}
\]
where
\[
        R_k=(\bar C_k-C_k^*)-P_k^*(\bar A_k-A_k^*)-E_k(\bar A_k-A_k^*).
\]
The derivative Lipschitz bounds imply
\[
        \norm{\bar A_k-A_k^*}_F+\norm{\bar C_k-C_k^*}_F\le C\norm{e_k}.
\]
Also $P_k,P_k^*\in\Pset$, so $E_k$ is uniformly bounded. Thus the product term $E_k(\bar A_k-A_k^*)$ is still first order in $\norm{e_k}$, and
\begin{equation}\label{eq:R-bound}
        \norm{R_k}_F\le C\norm{e_k}.
\end{equation}

The linear term $E\mapsto EA_k^*$ is coercive. For a row vector $v$,
\[
        vA_k^*v^\top
        =\norm v^2-v\nabla_x f(x^*(Y_k),Y_k)v^\top
        \ge (1-\mu)\norm v^2,
\]
because $\norm{\nabla_x f}\le\mu$. Summing over the rows of a matrix $E$ gives
\begin{equation}\label{eq:A-coercive}
        \ip{E}{EA_k^*}_F\ge (1-\mu)\norm E_F^2 .
\end{equation}
Since $A_k^*$ is uniformly bounded, \eqref{eq:A-coercive} implies that for sufficiently small $\gamma$,
\begin{equation}\label{eq:precond-linear-contract}
        \norm{E-\gamma EA_k^*}_F^2\le (1-c\gamma)\norm E_F^2,
        \qquad
        \norm{E-\gamma EA_k^*}_F^4\le (1-c\gamma)\norm E_F^4 .
\end{equation}

Let $M_{k+1}$ denote the centered derivative-oracle noise:
\[
        M_{k+1}
        =
        \bigl(\mathsf C(X_k,Y_k;\omega^C_{k+1})-\bar C_k\bigr)
        -P_k\bigl(\mathsf A(X_k,Y_k;\omega^A_{k+1})-\bar A_k\bigr).
\]
Then $\E[M_{k+1}\mid\F_k]=0$. By \Cref{ass:online-precond} and the uniform bound $P_k\in\Pset$,
\begin{equation}\label{eq:M-moments}
        \E[\norm{M_{k+1}}_F^q\mid\F_k]\le C,\qquad q=2,4 .
\end{equation}
Before projection and before the target moves, the error relative to the frozen target $P_k^*$ is
\[
        E_k+\gamma(-E_kA_k^*+R_k+M_{k+1}).
\]
Projection onto $\Pset$ is non-expansive relative to $P_k^*\in\Pset$. Expanding the square, using the conditional mean-zero property of $M_{k+1}$, and then using \eqref{eq:precond-linear-contract}, \eqref{eq:R-bound}, and Young's inequality, we get
\[
        \E[\norm{\widetilde P_{k+1}-P_k^*}_F^2\mid\F_k]
        \le (1-c\gamma)\norm{E_k}_F^2+C\gamma\norm{e_k}^2+C\gamma^2.
\]
The target moves from $P_k^*=P_*(Y_k)$ to $P_{k+1}^*=P_*(Y_{k+1})$. Since $P_*$ is Lipschitz and the slow increment is bounded by $B\beta$,
\[
        \norm{P_{k+1}^*-P_k^*}_F\le C\beta.
\]
Using the two-term Young inequality with parameter proportional to $\gamma$,
\[
        \E[\norm{E_{k+1}}_F^2\mid\F_k]
        \le (1-c\gamma/2)\norm{E_k}_F^2
             +C\gamma\norm{e_k}^2+C\gamma^2+C\frac{\beta^2}{\gamma}.
\]
Taking expectations and using \Cref{lem:single-fast},
\[
\begin{aligned}
        \E\norm{E_{k+1}}_F^2
        &\le (1-c\gamma/2)\E\norm{E_k}_F^2
          +C\gamma e^{-c\alpha k}\E\norm{e_0}^2\\
        &\quad +C\gamma\Lambda+C\gamma^2+C\frac{\beta^2}{\gamma}.
\end{aligned}
\]
Iterating this recursion yields \eqref{eq:precond-second-transient}. The stationary contribution is
\[
        C\frac{\gamma\Lambda+\gamma^2+\beta^2/\gamma}{\gamma}
        =C\left(\Lambda+\gamma+\left(\frac{\beta}{\gamma}\right)^2\right)
        =C\Theta .
\]

We now prove the fourth-moment estimate. The frozen linear contraction in \eqref{eq:precond-linear-contract}, the deterministic perturbation bound \eqref{eq:R-bound}, and
\[
        \norm{a+b}^4\le(1+s)\norm a^4+Cs^{-3}\norm b^4
\]
with $s$ proportional to $\gamma$ give
\[
        \norm{E_k-\gamma E_kA_k^*+\gamma R_k}_F^4
        \le (1-c\gamma/2)\norm{E_k}_F^4+C\gamma\norm{e_k}^4.
\]
Now add the martingale noise $\gamma M_{k+1}$. Applying the fourth-moment perturbation inequality \eqref{eq:fourth-perturb} in Frobenius norm,
\[
\begin{aligned}
        \E[\norm{E_k-\gamma E_kA_k^*+\gamma R_k+\gamma M_{k+1}}_F^4\mid\F_k]
        &\le (1-c\gamma/2)\norm{E_k}_F^4+C\gamma\norm{e_k}^4\\
        &\quad +C\gamma^2\norm{E_k-\gamma E_kA_k^*+\gamma R_k}_F^2+C\gamma^4.
\end{aligned}
\]
The middle term is absorbed by Young's inequality:
\[
        C\gamma^2\norm{E_k}_F^2
        \le \frac{c\gamma}{8}\norm{E_k}_F^4+C\gamma^3,
\]
and the pieces involving $R_k$ are bounded by $C\gamma\norm{e_k}^4+C\gamma^3$ using \eqref{eq:R-bound} and compactness. Hence, relative to the frozen target,
\[
        \E[\norm{\widetilde P_{k+1}-P_k^*}_F^4\mid\F_k]
        \le (1-c\gamma)\norm{E_k}_F^4+C\gamma\norm{e_k}^4+C\gamma^3.
\]
The target movement contributes $C\beta^4/\gamma^3$ by the fourth-power Young inequality with parameter proportional to $\gamma$. Therefore
\[
        \E[\norm{E_{k+1}}_F^4\mid\F_k]
        \le (1-c\gamma/2)\norm{E_k}_F^4
             +C\gamma\norm{e_k}^4+C\gamma^3+C\frac{\beta^4}{\gamma^3}.
\]
Taking expectations and using the fourth-moment estimate of \Cref{lem:single-fast},
\[
\begin{aligned}
        \E\norm{E_{k+1}}_F^4
        &\le (1-c\gamma/2)\E\norm{E_k}_F^4
          +C\gamma e^{-c\alpha k}\E\norm{e_0}^4\\
        &\quad +C\gamma\Lambda^2+C\gamma^3+C\frac{\beta^4}{\gamma^3}.
\end{aligned}
\]
Iteration gives \eqref{eq:precond-fourth-transient}. The stationary contribution is
\[
        C\frac{\gamma\Lambda^2+\gamma^3+\beta^4/\gamma^3}{\gamma}
        =C\left(\Lambda^2+\gamma^2+\left(\frac{\beta}{\gamma}\right)^4\right)
        \le C\Theta^2 .
\]

It remains to simplify the transient terms after burn-in. Let $m=\alpha\wedge\gamma$. We claim that, after reducing the constant $c$ if necessary,
\[
        \Phi_k\le C e^{-cmk}.
\]
To see this, split the sum defining $\Phi_k$ into $i\le k/2$ and $i>k/2$. On the first part, $k-1-i\ge k/2-1$, so the factor $e^{-c\gamma(k-1-i)}$ gives $e^{-c'\gamma k}$ and the remaining geometric sum is bounded after multiplication by $\gamma$. On the second part, $i>k/2$, so $e^{-c\alpha i}\le e^{-c'\alpha k}$, and again the remaining $\gamma$-weighted geometric sum is bounded. Since $m\le\alpha,\gamma$, both parts are bounded by $Ce^{-cmk}$.

If $k\ge K_{\rm burn}(N)$ and $C_{\rm burn}$ is large enough, then $e^{-c\gamma k}\le N^{-12}$ and $\Phi_k\le CN^{-12}$. Compactness bounds the initial moments of $E_0$ and $e_0$, so \eqref{eq:precond-second-transient} and \eqref{eq:precond-fourth-transient} imply the stated post-burn-in estimates.
\end{proof}

\begin{lemma}[Post-burn-in bias of the learned-preconditioner oracle]\label{lem:online-bias}
Under the assumptions of \Cref{lem:single-fast,lem:precond-track}, define
\[
        \bar H_k\defeq g(X_k,Y_k)+P_k(f(X_k,Y_k)-X_k).
\]
If $k\ge K_{\rm burn}(N)$, then
\[
        \E\norm{\bar H_k-h(Y_k)}^2
        \le
        C\bigl(\Lambda^2+\Theta\Lambda\bigr)+CN^{-10}.
\]
Moreover, the stochastic part of $\widehat H^{\rm on}_{k+1}$ has uniformly bounded conditional second moment.
\end{lemma}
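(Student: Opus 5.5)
The plan is to split the online bias into the exact-preconditioner cancellation term and a preconditioner-error term:
\[
        \bar H_k-h(Y_k)
        =\bigl(H^{\rm corr}(X_k,Y_k)-h(Y_k)\bigr)+E_k\bigl(f(X_k,Y_k)-X_k\bigr),
\]
with $E_k=P_k-P_*(Y_k)$ and $H^{\rm corr}$ from \eqref{eq:Hcorr}. By \Cref{prop:bias-order}, the first term is bounded by $L_c\norm{e_k}^2$ with $e_k=X_k-x^*(Y_k)$. Its squared second moment is therefore at most $L_c^2\E\norm{e_k}^4$. After burn-in, \Cref{lem:single-fast} bounds this by $C\Lambda^2+CN^{-12}$.

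For the second term, I would use $f(x^*(Y_k),Y_k)=x^*(Y_k)$ and the $\mu$-contraction of $f(\cdot,Y_k)$. This gives $\norm{f(X_k,Y_k)-X_k}\le(1+\mu)\norm{e_k}$, hence
\[
        \norm{E_k(f(X_k,Y_k)-X_k)}^2\le 4\norm{E_k}_F^2\norm{e_k}^2 .
\]
The difficulty is that $E_k$ and $e_k$ are correlated, as \Cref{ass:precond} already warns, so the expectation of the product cannot be split naively. I would use Cauchy--Schwarz with the fourth moments instead:
\[
        \E\bigl[\norm{E_k}_F^2\norm{e_k}^2\bigr]
        \le\bigl(\E\norm{E_k}_F^4\bigr)^{1/2}\bigl(\E\norm{e_k}^4\bigr)^{1/2}.
\]
Plugging in the post-burn-in fourth-moment bounds from \Cref{lem:precond-track} and \Cref{lem:single-fast} gives $(C\Theta^2+CN^{-12})^{1/2}(C\Lambda^2+CN^{-12})^{1/2}$. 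By $\sqrt{a+b}\le\sqrt a+\sqrt b$, this is at most $C\Theta\Lambda+C(\Theta+\Lambda)N^{-6}+CN^{-12}$. Since $\Theta,\Lambda$ are bounded by constants when the stepsizes are at most one, this is $\le C\Theta\Lambda+CN^{-6}$.

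The cross term $CN^{-6}$ does not by itself give the claimed $N^{-10}$. This is the step needing care. I would first re-examine the burn-in in \Cref{lem:single-fast,lem:precond-track}: enlarging $C_{\rm burn}$ makes the transients $N^{-24}$ or smaller, which yields $N^{-12}\le N^{-10}$ after the square roots. Only the constant in $K_{\rm burn}$ changes, and the statement leaves that constant free. Alternatively, I would simply bound $\norm{E_k}_F$ by the diameter of $\Pset$ on the transient part. Either way, combining the two pieces with $\norm{a+b}^2\le2\norm a^2+2\norm b^2$ gives $C(\Lambda^2+\Theta\Lambda)+CN^{-10}$.

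For the stochastic part, I would write
\[
        \widehat H^{\rm on}_{k+1}-\bar H_k=\bigl(\mathsf G(X_k,Y_k;\zeta^g_{k+1})-g(X_k,Y_k)\bigr)+P_k\bigl(\mathsf F(X_k,Y_k;\zeta^f_{k+1})-f(X_k,Y_k)\bigr).
\]
Here $P_k$ is $\F_k$-measurable and lies in $\Pset$, so $\norm{P_k}\le P_{\max}$, and the samples are fresh. \Cref{ass:moments} then gives a conditional second moment of at most $2\sigma^2(1+P_{\max}^2)$, exactly as in Appendix~\ref{app:oracle-details}.
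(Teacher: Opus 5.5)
Your proposal is correct and follows the paper's proof essentially step for step. It uses the same split into the exact-$P_*$ cancellation term (bounded by \Cref{prop:bias-order} and the fourth fast moment) plus $E_k\bigl(f(X_k,Y_k)-X_k\bigr)$ with $\norm{f(X_k,Y_k)-X_k}\le(1+\mu)\norm{e_k}$, the same fourth-moment Cauchy--Schwarz step, and the same fresh-sample bound for the stochastic part. The only divergence is the cross term $(\Theta+\Lambda)N^{-6}$, which the paper simply asserts is $\le C\Theta\Lambda+CN^{-10}$. Your fix of enlarging $C_{\rm burn}$ is valid. Your alternative of bounding $\norm{E_k}_F$ by $\diam\Pset$ ``on the transient part'' does not work, because the transient enters only through moment bounds, not as an event you can restrict to. You could also keep $C_{\rm burn}$ unchanged: $K_{\rm burn}(N)\le k\le N$ forces $\alpha\wedge\gamma\ge C_{\rm burn}\log N/N$, so $N^{-6}\le\alpha\le\Lambda\le\Theta$, and the cross terms are then absorbed into $C\Theta\Lambda$.
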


\begin{proof}
Let
\[
        e_k=X_k-x^*(Y_k),
        \qquad
        r_k=f(X_k,Y_k)-X_k.
\]
Insert and subtract the ideal preconditioner:
\[
\begin{aligned}
        \bar H_k-h(Y_k)
        &=\bigl[g(X_k,Y_k)+P_*(Y_k)r_k-h(Y_k)\bigr]\\
        &\quad+\bigl(P_k-P_*(Y_k)\bigr)r_k.
\end{aligned}
\]
By the deterministic cancellation estimate in \Cref{prop:bias-order},
\[
        \norm{g(X_k,Y_k)+P_*(Y_k)r_k-h(Y_k)}
        \le C\norm{e_k}^2.
\]
Therefore the squared expectation of the first term is at most
\[
        C\E\norm{e_k}^4\le C\Lambda^2+CN^{-12}
\]
after burn-in, by \Cref{lem:single-fast}.

For the second term, use the fact that the residual vanishes at the fast fixed point:
\[
\begin{aligned}
        \norm{r_k}
        &=\norm{f(X_k,Y_k)-X_k-f(x^*(Y_k),Y_k)+x^*(Y_k)}\\
        &\le \norm{f(X_k,Y_k)-f(x^*(Y_k),Y_k)}+\norm{X_k-x^*(Y_k)}\\
        &\le (1+\mu)\norm{e_k}.
\end{aligned}
\]
By Cauchy--Schwarz,
\[
\begin{aligned}
        \E\norm{(P_k-P_*(Y_k))r_k}^2
        &\le
        \left(\E\norm{P_k-P_*(Y_k)}_F^4\right)^{1/2}
        \left(\E\norm{r_k}^4\right)^{1/2}\\
        &\le C(\Theta+N^{-6})(\Lambda+N^{-6})\\
        &\le C\Theta\Lambda+CN^{-10}.
\end{aligned}
\]
The last line uses the uniform boundedness supplied by compactness; the precise power $N^{-10}$ is arbitrary and only needs to be negligible compared with the target rates.

Finally, the stochastic part of the online oracle is
\[
        \bigl(\mathsf G(X_k,Y_k;\zeta^g_{k+1})-g(X_k,Y_k)\bigr)
        +P_k\bigl(\mathsf F(X_k,Y_k;\zeta^f_{k+1})-f(X_k,Y_k)\bigr).
\]
Conditional on $\F_k$, the samples are fresh and have uniformly bounded second moments by \Cref{ass:moments}; $P_k$ is uniformly bounded because it is projected onto $\Pset$. Hence the conditional second moment of the stochastic part is uniformly bounded.
\end{proof}

\begin{proof}[Proof of \Cref{thm:single-loop-half}]
Let $K_b=K_{\rm burn}(N)$. For the theorem's parameter choice $\alpha=\gamma=N^{-1/2+\eps}$,
\[
        K_b=O(N^{1/2-\eps}\log N)=o(N).
\]
For every $k\ge K_b$, \Cref{lem:online-bias} gives
\[
        \E\norm{\E[\widehat H^{\rm on}_{k+1}\mid\F_k]-h(Y_k)}^2
        \le C(\Lambda^2+\Theta\Lambda)+CN^{-10},
\]
and the same lemma gives a uniform conditional variance bound for the martingale part of the oracle.

Apply the abstract Tikhonov theorem, \Cref{thm:tikhonov-abstract}, to the shifted process beginning at time $K_b$ with initial point $Y_{K_b}$ and remaining horizon $M=N-K_b$. The proof of \Cref{thm:tikhonov-abstract} only uses the conditional bias and variance bounds along the interval on which it is applied, and compactness uniformly bounds the new initial distance. Therefore
\[
        \E\norm{h(Y_N)-Y_N}^2
        \le
        C\left(
        \lambda^2+
        \exp(-c\beta\lambda (N-K_b))+
        \frac{\beta}{\lambda}
        +\frac{\Lambda^2+\Theta\Lambda+N^{-10}}{\lambda^2}
        \right).
\]

It remains to evaluate the terms. With
\[
        \alpha=\gamma=N^{-1/2+\eps},
        \qquad
        \beta=N^{-3/4+3\eps},
        \qquad
        \lambda=N^{-1/4+\eps},
\]
we have
\[
        \frac{\beta}{\alpha}=N^{-1/4+2\eps},
        \qquad
        \frac{\beta}{\gamma}=N^{-1/4+2\eps}.
\]
Thus
\[
        \Lambda
        =\alpha+\left(\frac{\beta}{\alpha}\right)^2
        \le C N^{-1/2+4\eps},
        \qquad
        \Theta
        =\gamma+\Lambda+\left(\frac{\beta}{\gamma}\right)^2
        \le C N^{-1/2+4\eps}.
\]
Consequently,
\[
        \frac{\Lambda^2+\Theta\Lambda+N^{-10}}{\lambda^2}
        \le
        C\frac{N^{-1+8\eps}}{N^{-1/2+2\eps}}+CN^{-10}N^{1/2}
        \le C N^{-1/2+6\eps}
\]
for all sufficiently large $N$ and fixed sufficiently small $\eps$. The Tikhonov residual and stochastic variance terms are
\[
        \lambda^2=N^{-1/2+2\eps},
        \qquad
        \frac{\beta}{\lambda}=N^{-1/2+2\eps},
\]
and these are both bounded by $N^{-1/2+6\eps}$ for $N\ge1$.

Finally,
\[
        \beta\lambda(N-K_b)
        =N^{-1+4\eps}(N-K_b).
\]
Since $K_b=o(N)$, for all sufficiently large $N$ this is at least $cN^{4\eps}$. The exponential term is therefore super-polynomially small and is absorbed into the displayed polynomial bound. We obtain
\[
        \E\norm{h(Y_N)-Y_N}^2\le C_\eps N^{-1/2+6\eps}.
\]
Each iteration of \eqref{eq:online-precond-algo} uses a fixed number of primitive samples, independent of $N$. Hence $T\asymp N$, and the same estimate holds with $T$ in place of $N$ after changing the constant.
\end{proof}

\section{Horizon tuning and anytime conversion}\label{app:anytime}

The main theorems are stated for a known horizon $N$. This is only for notational clarity. To obtain an anytime algorithm, run epochs of lengths $N_j=2^j$ and initialize each epoch from the previous endpoint. For the corrected nested method, set
\[
        \beta_j=N_j^{-b},
        \qquad
        \lambda_j=N_j^{-b/3},
        \qquad
        n_j=\lceil N_j^{2b/3}\rceil .
\]
For the uncorrected nested method, use the same $\beta_j$ and $\lambda_j$ but set $n_j=\lceil N_j^{4b/3}\rceil$. For the single-loop learned-preconditioner method, set the horizon-tuned parameters in \Cref{thm:single-loop-half} with $N$ replaced by $N_j$ in each epoch. The proof of \Cref{thm:tikhonov-abstract} applies within each epoch; the contraction term exponentially suppresses the previous epoch's initialization error. Summing the geometrically increasing epoch costs changes the final total-sample rate only by absolute constants, and using a non-exact endpoint horizon changes it by at most logarithmic factors.

\end{document}